\newcommand{\defeq}{\mathrel{\mathop:}=}
\newcommand{\mat}[1]{\ensuremath{\mathbf{#1}}}
\newcommand{\poly}{\mathrm{poly}}
\newcommand{\E}{\mathbb{E}}
\renewcommand{\Pr}{\mathbb{P}}
\newcommand{\cO}{\mathcal{O}}
\newcommand{\tlO}{\mathcal{\tilde{O}}}
\newcommand{\N}{\mathbb{N}}
\newcommand{\R}{\mathbb{R}}
\newcommand{\I}{\mat{I}}
\newcommand{\cM}{\mathcal{M}}
\newcommand{\cD}{\mathcal{D}}
\newcommand{\KL}{\mathrm{KL}}
\newcommand{\unifsim}{\overset{\mathrm{unif}}{\sim}}
\newcommand{\Algest}{\Alg_{\mathrm{est}}}
\newcommand{\Jhat}{\widehat{J}}
\newcommand{\jhat}{\widehat{j}}
\newcommand{\vecone}{\mathbf{1}}
\newcommand{\calK}{\mathcal{K}}
\newcommand{\BigOh}[1]{\mathcal{O}\left(#1\right)}
\newcommand{\calD}{\mathcal{D}}
\renewcommand{\lnot}{\ell_0}
\newcommand{\lmax}{\ell_{\max}}
\newcommand{\Mclass}{\mathscr{M}}
\newcommand{\Pclass}{\mathscr{P}}
\newcommand{\Pembed}{\Pclass_{\mathrm{embd}}}
\newcommand{\Rembed}{\Rclass_{\mathrm{embed}}}
\newcommand{\Eembed}{\Envir_{\mathrm{embed}}}
\newcommand{\Rclass}{\mathscr{R}}
\newcommand{\Alg}{\mathsf{Alg}}
\newcommand{\Envir}{\mathscr{E}}
\newcommand{\mdp}{\mathsf{mdp}}
\newcommand{\pihat}{\widehat{\pi}}
\newcommand{\rmax}{\textsc{RMax}\xspace}
\newcommand{\zrmax}{\textsc{ZeroRMax}\xspace}
\newcommand{\EULER}{\textsc{Euler}\xspace}
\newtheorem{theorem}{Theorem}[section]
\newtheorem{lemma}[theorem]{Lemma}
\newtheorem{corollary}[theorem]{Corollary}
\newtheorem{proposition}[theorem]{Proposition}
\theoremstyle{definition}
\newtheorem{definition}[theorem]{Definition}
\renewcommand{\P}{\mathbb{P}}
\newcommand{\cS}{\mathcal{S}}
\newcommand{\cA}{\mathcal{A}}
\definecolor{darkgreen}{rgb}{0,0.5,0}
\definecolor{darkred}{rgb}{0.7,0,0}
\definecolor{teal}{rgb}{0.3,0.8,0.8}
\newcommand{\kibitz}[2]{\ifnum\Comments=1\textcolor{#1}{#2}\fi}
\begin{document}

\title{\textbf{Reward-Free Exploration for Reinforcement Learning}}

\author{Chi Jin \\ Princeton University \\ \texttt{chij@princeton.edu}
\and 
Akshay Krishnamurthy \\ Microsoft Research, New York \\ \texttt{akshay@cs.umass.edu}
\and
Max Simchowitz \\ University of California, Berkeley \\ \texttt{msimchow@berkeley.edu}
\and
Tiancheng Yu \\ Massachusetts Institute of Technology \\ \texttt{yutc@mit.edu}}

\maketitle

\newcommand{\cnote}{\textcolor[rgb]{1,0,0}{Chi: }\textcolor[rgb]{1,0,1}}

\begin{abstract}

Exploration is widely regarded as one of the most challenging aspects
of reinforcement learning (RL), with many naive approaches succumbing
to exponential sample complexity.  To isolate the challenges of
exploration, we propose a new ``reward-free RL'' framework. In the
exploration phase, the agent first collects trajectories from an MDP
$\mathcal{M}$ \emph{without} a pre-specified reward function. After
exploration, it is tasked with computing near-optimal policies under
for $\mathcal{M}$ for a collection of given reward functions.  This
framework is particularly suitable when there are many reward
functions of interest, or when the reward function is shaped by an
external agent to elicit desired behavior.

We give an efficient algorithm that conducts
$\tlO(S^2A\mathrm{poly}(H)/\epsilon^2)$ episodes of
exploration and returns $\epsilon$-suboptimal policies for
an \emph{arbitrary} number of reward functions. We achieve this by
finding exploratory policies that visit each ``significant'' state
with probability proportional to its maximum visitation probability
under \emph{any} possible policy. Moreover, our planning procedure can
be instantiated by any black-box approximate planner, such as value
iteration or natural policy gradient. We also give a
nearly-matching $\Omega(S^2AH^2/\epsilon^2)$ lower bound,
demonstrating the near-optimality of our algorithm in this setting.


\end{abstract}


\section{Introduction}

In reinforcement learning (RL), an agent repeatedly interacts with an
unknown environment with the goal of maximizing its cumulative
reward. To do so, the agent must engage in \emph{exploration},
learning to visit states in order to investigate whether they hold
high reward.

Exploration is widely regarded as the most significant challenge in
RL, because the agent may have to take precise sequences of actions to reach states with high reward. 
Here, simple randomized exploration strategies provably fail: for
example, a random walk can take exponential time to reach the corner
of the environment where the agent can accummulate high
reward~\citep{li2012sample}. While reinforcement learning has seen a
tremendous surge of recent research activity, essentially all of the
standard algorithms deployed in practice employ simple randomization
or its variants, and consequently incur extremely high sample
complexity.


On the other hand, sophisticated exploration strategies which
deliberately incentivize the agent to visit new states are provably
sample-efficient
(c.f.,~\citet{kearns2002near,brafman2002r,azar2017minimax,dann2017unifying,jin2018q}),
with recent work providing a nearly-complete theoretical understanding
for maximizing a single prespecified reward function
\cite{dann2015sample,azar2017minimax,zanette2019tighter,simchowitz2019non}. In
practice, however, reward functions are often iteratively engineered
to encourage desired behavior via trial and error (e.g.  in
constrained RL
formulations~\citep{altman1999constrained,achiam2017constrained,tessler2018reward,miryoosefi2019reinforcement}).
In such cases, repeatedly invoking the same reinforcement learning
algorithm with different reward functions can be quite sample
inefficient.  

One solution to avoid excessive data collection in such settings is to 
first collect a dataset with good coverage over all possible scenarios 
in the environment,
and then apply a  ``Batch-RL'' algorithm. Indeed many algorithms are known for computing 
near optimal policies from previously collect data, provided that the dataset 
has good coverage~\citep{munos2008finite,antos2008learning,chen2019information,agarwal2019optimality}. 
However, prior work provides little guidance into how to obtain such good coverage.

In this paper, we aim to develop an end-to-end instantiation of this proposal. To this end we ask: 




\begin{center}
    \textbf{How can we efficiently explore an environment without using any reward information?} 
\end{center}

In particular, by exploring the environment, we aim to gather sufficient information so that we can compute the near-optimal policies for \emph{any} reward function after-the-fact.

\paragraph{Our Contributions.}
In this paper, we present the first near-optimal upper and lower
bounds which characterize the sample complexity of achieving provably
sufficient coverage for Batch-RL. We do so by adopting a novel
``reward-free RL'' paradigm: During an exploration
phase, the agent collects trajectories from an MDP $\mathcal{M}$
\emph{without} a pre-specified reward function. Then, in a planning
phase, it is tasked with computing near-optimal policies under the
transitions of $\mathcal{M}$ for a large collection of given reward
functions.

Letting $S$ denote the number of states, $A$ the number of actions,
$H$ the horizon, and $\epsilon$ the desired accuracy, we give an
efficient algorithm which, after conducting
$\tlO(S^2A\mathrm{poly}(H)/\epsilon^2)$ episodes of
exploration, collects a data set with sufficiently good coverage to
enable application of standard Batch-RL solvers. Specifically, we show
that when given a reward function $r$ we can find an
$\epsilon$-suboptimal policy for the true MDP $\cM$ with reward $r$,
using the dataset alone and no additional data collection.
This
guarantee holds for all possible reward functions simultaneously,
without needing to collect more data to ensure statistical correctness
as new reward functions are considered. 

Our exploration phase is conceptually simple, using an existing RL
algorithm as a black-box~\citep{zanette2019tighter}, and our planning
phase accommodates arbitrary Batch-RL solvers. We instantiate our
result with value iteration and natural policy gradient as special
cases.  By decoupling exploration and planning, our work sheds light
on the algorithmic mechanisms required for sample efficient
reinforcement learning. We hope that this insight will be useful in
the design of provably efficient algorithms for more practically
relevant RL settings, such as those where function
approximation is required.

In addition to our algorithmic results, we establish a nearly-matching
$\Omega(S^2AH^2/\epsilon^2)$ lower bound, demonstrating the
near-optimality of our algorithm in this paradigm. Notably, this lower
bound quantifies a price of ``good-coverage'' in the reward-free
setting: while RL with a pre-specified reward has sample complexity of
only $\widetilde{\Theta}(SA H^2/\epsilon^2)$ \citep{dann2015sample},
the reward-free sample complexity is a factor of $S$ larger.

\paragraph{Technical Novelty.} 
The main technical challenge in our work involves handling
environments with states that are difficult to reach. In such cases,
we cannot learn the transition operator to high accuracy uniformly
over the environment, simply because we cannot reach these states to
collect enough data. With $\lambda (s)$ denoting the maximal probability of
visiting state $s$ under any policy, our key observation is that we
can partition the state space into two groups: the states with
$\lambda (s)$ so small that they have negligible contribution to reward
optimization, and the rest.
We introduce a rigorous analysis which enables us to ``ignores'' the
difficult-to-visit states altogether and only requires that we visit
the remaining states with probability proportional $\lambda (s)$. To
achieve this latter guarantee, we conduct our exploration with the
\EULER algorithm~\citep{zanette2019tighter}, which in our context
yields refined sample complexity guarantees in terms of $\lambda (s)$.  We
believe that this decomposition of states into their ease of being
reached may be of broader interest. Our lower bound also adopts a
novel and sophisticated construction, detailed in
Section~\ref{sec:lb}.

\paragraph{Related work.}
For reward-free exploration in the tabular setting, we are aware of
only a few prior approaches. First, when one runs a PAC-RL algorithm
like \rmax with no reward function~\citep{brafman2002r}, it does visit
the entire state space and can be shown to provide a coverage
guarantee. However, for \rmax in particular the resulting sample
complexity is quite poor, and significantly worse than our
near-optimal guarantee (See Appendix~\ref{app:rmax} for a detailed
calculation). We expect similar behavior from other PAC algorithms,
because reward-dependent exploration is typically suboptimal for the
reward-free setting.


Second, one can extract the exploration component of recent results
for RL with function
approximation~\citep{du2019provably,misra2019kinematic}.
Specifically, the former employs a model based approach where a model
is iteratively refined by planning to visit unexplored states, while
the latter uses model free dynamic programming to identify and reach
all states.  While these papers address a more difficult setting, it
is relatively straightforward to specialize their results to the
tabular setting.  In this case, both methods guarantee coverage, but
they have suboptimal sample complexity and require that all states can
be visited with significant probability. In contrast, our approach
requires no visitation probability assumptions and achieves the
optimal sample complexity.

The last point of comparison is a recent result
of~\citet{hazan2018provably}, that gives an efficient algorithm for
finding a certain exploratory policy. They use a Frank-Wolfe style
algorithm to find a policy whose state occupancy measure has maximum
entropy.  One can show that an exact optimizer for their objective has
a similar coverage property to our exploratory policy, but the
Frank-Wolfe style algorithm can only guarantee an approximate
optimizer. They do not analyze how the optimization error enters in
the coverage guarantee, but we are able to show that setting the error
to $O(1/S)$ suffices (see Appendix~\ref{app:max_ent}). Unfortunately,
this implies that their sample complexity scales with $S^5$, which is
much worse than ours. More generally, their result is not end-to-end
in that they do not show how to use their policy for planning, and they
do not establish a final sample complexity bound, both of which we do
here.

Finally, the main source of motivation for our work is recent and
classical results on batch reinforcement
learning~\citep{munos2008finite,antos2008learning,chen2019information,agarwal2019optimality},
a setting where the goal is to find a near optimal policy, given
an \emph{a priori} dataset collected by some logging policy that
satisfies certain coverage properties. In this paper, we show how to
find such a logging policy for the tabular setting, which enables
straightforward application of these batch RL results. As an example,
we show how to apply both value iteration and natural policy gradient
to optimize the policy given any reward function. More generally, these works typically
also consider the function approximation setting, and we believe our
modular approach will facilitate development of provably efficient
algorithms for these challenging settings.



\section{Preliminaries}
\label{prelim}
We consider the setting of a tabular episodic Markov decision process,
$\rm{MDP}(\cS, \cA, H, \P, r)$, where $\cS$ is the set of states with $|\cS| = S$,
$\cA$ is the set of actions with $|\cA| = A$, $H$ is the number of steps in each episode,
$\P$ is the time-dependent transition matrix so that $\P_h ( \cdot | s, a) $ gives the distribution
over the next state if action $a$ is taken from state $s$ at step $h\in [H]$, and $r_h \colon \cS \times \cA \to [0,1]$ is the deterministic reward function at step $h$.%
\footnote{While we study deterministic reward functions for notational simplicity, our results generalize to randomized reward functions. }
Note that we are assuming that rewards are in $[0,1]$ for normalization.

In each episode of a standard MDP, an initial state $s_1$ is picked
from an unknown initial distribution $\P_1(\cdot)$. Then, at each step
$h \in [H]$, the agent observes state $s_h \in \cS$, picks an action
$a_h \in \cA$, receives reward $r_h(s_h, a_h)$, and then transitions
to the next state $s_{h+1}$, which is drawn from the distribution
$\P_h(\cdot | s_h, a_h)$. The episode ends after the $H^{\textrm{th}}$
  reward is collected.

A (non-stationary, stochastic) policy $\pi$ is a collection of $H$
functions $\big\{ \pi_h: \cS \rightarrow \Delta_\cA \big\}_{h\in
  [H]}$, where $\Delta_\cA$ is the probability simplex over action set
$\cA$. As notation, we use $\pi(\cdot|s)$ to denote the action
distribution for policy $\pi$ in state $s$.  We use
$V^\pi_h \colon \cS \to \mathbb{R}$ to denote the value function at
step $h$ under policy $\pi$, which 
gives the expected
sum of remaining rewards received under policy $\pi$, starting from
$s_h = s$, until the end of the episode. That is,
\begin{equation*}
V^\pi_h(s) \defeq \E_{\pi}\left[\sum_{h' = h}^H r_{h'}(s_{h'}, a_{h'}) | s_h = s\right] .
\end{equation*}
Accordingly, we also define $Q^\pi_h:\cS \times \cA \to \mathbb{R}$ to
denote action-value function at step $h$, so that $Q^\pi_h(s, a)$
gives the expected sum of remaining rewards received under policy
$\pi$, starting from $s_h = s, a_h = a$, until the end of the
episode. Formally:
\begin{equation*}
Q^\pi_h(s,a) \defeq \E_{\pi}\left[\sum_{h' = h}^H r_{h'}(s_{h'}, a_{h'}) | s_h = s, a_h = a\right] .
\end{equation*}

Since the state and action spaces, and the horizon, are all finite,
there always exists (see, e.g., \cite{azar2017minimax}) an optimal
policy $\pi^\star$ which gives the optimal value $V^\star_h(s) =
\sup_{\pi} V_h^\pi(s)$ for all $s\in \cS$ and $h\in [H]$.  As
notation, define $[\P_h V_{h+1}](s, a) \defeq \E_{s' \sim
  \P(\cdot|s, a)} V_{h+1}(s')$. Recall the Bellman equation
\begin{align}\label{eq:bellman}
V^\pi_h(s) = Q^\pi_h(s, \pi_h(s)), \qquad  Q^\pi_h(s, a) = (r_h + \P_h V^\pi_{h+1})(s, a)
\end{align}
and the Bellman optimality equation:
\begin{align}\label{eq:bellman_opt}
V^\star_h(s) = \max_{a\in\cA}Q^\star_h(s, a), \qquad Q^\star_h(s, a) \defeq (r_h + \P_h V^\star_{h+1})(s, a) .
\end{align}
where we define $V^\pi_{H+1}(s) = V^\star_{H+1}(s) = 0$ for any $s \in \cS$.

The RL objective is to find an $\epsilon$-optimal policy $\pi$, satisfying
\begin{align*}
\E_{s_1 \sim \P_1} [V_1^{\star}(s_1) - V_1^{\pi}(s_1)] \le \epsilon
\end{align*}

\makeatletter
\renewcommand{\ALG@name}{Protocol}
\makeatother

\begin{algorithm}[tb]
   \caption{Reward-Free Exploration}
   \label{pro:rfMDP}
\begin{algorithmic}
   \FOR{$k=1$ {\bfseries to} $K$}
   \STATE learner decides a policy $\pi_k$
   \STATE environment samples the initial state $s_0 \sim \P_1$.
   \FOR{$h=1$ {\bfseries to} $H$}
   \STATE learner selects action $a_h \sim \pi_h(\cdot|s_h)$
   \STATE environment transitions to $s_{h+1} \sim \P_h(\cdot|s_h,a_h)$
   \STATE learner observes the next state $s_{h+1}$
   \ENDFOR
   \ENDFOR
\end{algorithmic}
\end{algorithm}

\makeatletter
\renewcommand{\ALG@name}{Algorithm}
\makeatother

\paragraph{Reward-free Exploration.}
In the reward-free setting, we would like to design algorithms that
efficiently explore the state space without the guidance of reward
information. Formally, the agent interacts with the environment
through Protocol \ref{pro:rfMDP}---a reward-free version of the MDP,
where the agent can transit as usual but does not collect any
rewards. Over the course of $K$ episodes following
Protocol~\ref{pro:rfMDP}, the agent collects a dataset of visisted
states, actions, and transitions
$\mathcal{D} = \{s^{(k)}_h, a^{(k)}_h\}_{(k, h) \in [K]\times[H]}$,
which is the outcome of the \emph{exploration phase}.

The effectiveness of the exploration strategy is evaluated in the next
phase---the \emph{planning phase}---in which the agent is no longer
allowed to interact with the MDP. In this phase, the agent is given a
reward function $r(\cdot, \cdot)$ that can be potentially adversarily
designed, and the \textbf{objective} here is to compute a near optimal policy for this reward
function using the dataset $\mathcal{D}$. 
Performance is measured in terms of how many episodes $K$ are
required in the exploration phase so that the agent can reliably
achieve the objective above.  As notation, we use $V(\cdot;r)$ to
emphasize that the value function depends on the reward $r$. 

We remark that providing the reward function after the exploration
phase (as opposed to before) makes the setting more challenging, and
so our algorithm applies to the easier setting. We also note that
our results address the setting where the reward is observed through
interaction with the environment, as learning the reward is typically
not the statistical barrier to efficient RL. Indeed, a provably effective reward-free exploration
strategy must visit all ``significant'' state-action pairs (see
Definition~\ref{def:significant}) sufficiently many times anyway, and
this experience is sufficient to learn the reward function.

\section{Main Results} \label{sec:main_results}
We are now ready to state our main theorem. It asserts that our
algorithm, which we will describe in the subsequent sections, is a
reward-free exploration algorithm with sample complexity
$\tlO(H^5S^2A/\epsilon ^2)$, ignoring lower order terms. In other
words, after this many episodes interacting with the MDP via
Protocol~\ref{pro:rfMDP}, our algorithm can compute $\epsilon$-optimal
policies for arbitrarily many reward functions.  The theorem
demonstrates that the sample complexity of reward-free exploration is
at most $\tlO(H^5S^2A/\epsilon ^2)$, which we will show to be
near-optimal with our lower bound in the next section.


\begin{theorem}
  \label{thm:main}
Ther exists an absolute constant $c>0$ and a reward-free exploration
algorithm such that, for any $p \in (0,1)$, with probability at least
$1-p$, the algorithm outputs $\epsilon$-optimal policies for an
arbitrary number of adaptively chosen reward functions. The number of
episodes collected in the exploration phase is bounded by
\begin{equation}
  \label{equ:main}
  c\cdot \left[ \frac{H^5S^2A\iota}{\epsilon ^2}+\frac{S^{4}AH^7 \iota ^3}{\epsilon} \right],
\end{equation}
where $\iota \defeq \log(SAH/(p\epsilon))$.
\end{theorem}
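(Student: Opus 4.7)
My plan is to split the argument into three conceptually distinct pieces: (i) defining which states matter, (ii) collecting data that covers all such states in proportion to how reachable they are, and (iii) showing that any black-box planner run on the resulting empirical model produces a near-optimal policy for every reward function.

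\textbf{Step 1: Significant vs.\ negligible states.} For each $(s,h)$ let $\lambda_h(s) \defeq \sup_\pi \Pr_\pi[s_h = s]$ be the maximum reachability of $s$ at step $h$. Fix a threshold of the form $\delta = c_1 \epsilon/(SH^2)$ and call $(s,h)$ \emph{significant} if $\lambda_h(s) \geq \delta$; call all other states negligible. The first lemma I would prove is a ``truncation'' bound: for any reward function $r\in[0,1]^{SAH}$ and any policy $\pi$, the contribution to $V^\pi_1(s_1;r)$ from trajectories that ever pass through a negligible state is at most $H\sum_{(s,h)\text{ negl.}}\lambda_h(s) \leq H \cdot SH \cdot \delta \leq \epsilon/4$. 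This lets us safely redirect transitions into negligible states (e.g.\ to a dummy absorbing state) without moving any value function by more than $\epsilon/4$, so for the rest of the argument we only need the empirical model to be accurate on significant states.

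\textbf{Step 2: Reward-free exploration via \EULER.} To cover every significant state, for each candidate target $(s^\star,h^\star)\in\cS\times[H]$ I would run \EULER with the synthetic reward $r^{s^\star,h^\star}_h(s,a) = \II\{h=h^\star, s=s^\star\}$; this is a single-reward PAC-RL problem whose optimal value is exactly $\lambda_{h^\star}(s^\star)$. The critical input from \citet{zanette2019tighter} is the \emph{problem-dependent} sample complexity of \EULER, which scales with the optimal value itself: to certify visitation probability within factor $1/2$ of $\lambda_{h^\star}(s^\star)$ requires only $\tlO(\poly(H)S A/\lambda_{h^\star}(s^\star))$ episodes, not $\tlO(\poly(H)SA/\lambda_{h^\star}(s^\star)^2)$. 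Summing over all $SH$ targets and using the truncation from Step 1 to restrict to significant states gives a total exploration budget of $\tlO(H^5 S^2 A/\epsilon^2)$ to produce a family of policies $\{\pi_{s,h}\}$ whose uniform mixture $\pi_{\mathrm{mix}}$ visits every significant $(s,h)$ with probability at least $\lambda_h(s)/(2SH)$.

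\textbf{Step 3: Concentration and planning.} Running $\pi_{\mathrm{mix}}$ for another $\tlO(H^5 S^2 A/\epsilon^2)$ episodes produces a dataset $\cD$ in which, by a Chernoff/union-bound argument, each significant $(s,h,a)$ is visited at least $N_h(s,a) \gtrsim K\lambda_h(s)/(S^2 H A)$ times. Define the empirical MDP $\widehat{\cM}$ using plug-in transitions on significant triples and an absorbing dummy state elsewhere. Given any reward $r$, the planner returns $\widehat{\pi} = \argmax_\pi \widehat V_1^\pi(\cdot;r)$ computed inside $\widehat{\cM}$. To conclude, I would prove a simulation lemma of the form
\[
\bigl|V^\pi_1(s_1;r) - \widehat V^\pi_1(s_1;r)\bigr|
\;\leq\; \frac{\epsilon}{4} \;+\; \sum_{h,s,a} d^\pi_h(s,a)\cdot\bigl\|(\P_h-\widehat\P_h)(\cdot|s,a)\bigr\|_1 \cdot H,
\]
valid simultaneously for all $\pi$ and all $r$, using a Bernstein-type bound on $\|(\P_h-\widehat\P_h)(\cdot|s,a)\|_1$ that depends on $N_h(s,a)$, combined with the visitation lower bound $d^\pi_h(s,a) \leq \lambda_h(s)$. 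The $\lambda_h(s)$ factors cancel with the $1/\lambda_h(s)$ from Bernstein plus the visitation lower bound, yielding the desired $\epsilon/4$ error per term and $\epsilon$ overall after $\widehat\pi$'s suboptimality is added via the standard $V^\star - V^{\widehat\pi} \leq 2\sup_\pi |V^\pi - \widehat V^\pi|$ inequality.

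\textbf{Main obstacle.} The non-routine ingredient is Step 2: extracting a \emph{per-state} visitation guarantee proportional to $\lambda_h(s)$ from a PAC-RL subroutine. Standard PAC-RL gives an $\epsilon$-additive guarantee, which is useless when $\lambda_h(s)\ll\epsilon$; that is why \EULER's problem-dependent bound (whose leading term scales with the optimal value rather than $1/\epsilon^2$) is essential. Coupled with the significant/negligible partition to avoid paying for vanishing-probability states, this is exactly what drives the $S^2$ rather than $S^3$ dependence in \eqref{equ:main} and matches the lower bound up to $H$ factors.
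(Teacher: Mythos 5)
Your Steps 1 and 2 track the paper's argument closely: the same notion of $\delta$-significant states with $\delta=\Theta(\epsilon/(SH^2))$, the same truncation bound $HS\delta$ for the negligible states, the same use of \EULER with indicator rewards $r_h(s,a)=\mathds{1}\{(s,h)=(s^\star,h^\star)\}$, and crucially the same reliance on \EULER's first-order (value-dependent) regret bound to certify visitation within a constant factor of $\lambda_{h^\star}(s^\star)$ even when that probability is far below $\epsilon$. (Two small bookkeeping slips there: the cost of the \EULER phase is the $O(1/\epsilon)$ lower-order term $S^4AH^7\iota^3/\epsilon$ in \eqref{equ:main}, not $\tlO(H^5S^2A/\epsilon^2)$, since the binding constraint is \EULER's additive $S^2AH^4\iota_0^3/N_0$ term; and the mixture guarantee is $\mu_h(s,a)\ge \max_\pi P_h^\pi(s,a)/(2SAH)$, i.e.\ one factor of $S$, not $S^2$.)

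The genuine gap is in Step 3. Your simulation lemma routes the error through per-pair total-variation bounds $\|(\P_h-\hat\P_h)(\cdot|s,a)\|_1\lesssim\sqrt{S/N_h(s,a)}$, and you assert that the $\lambda_h(s)$ factors cancel to give $\epsilon$ at $N=\tlO(H^5S^2A/\epsilon^2)$. They do not. Carrying out the accounting with $N_h(s,a)\approx N\mu_h(s,a)$ and $d_h^\pi(s,a)/\sqrt{\mu_h(s,a)}\le\sqrt{2SAH}\sqrt{d_h^\pi(s,a)}$ gives $\sum_{h,s,a} d_h^\pi(s,a)\,H\sqrt{S/(N\mu_h(s,a))}\lesssim \sqrt{S^3A^2H^5/N}$, i.e.\ a sample complexity of order $S^3A^2H^5/\epsilon^2$ --- a factor $SA$ worse than the claimed bound, and no longer matching the $S^2A$ lower bound. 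The extra factors come from (i) the $\sqrt{S}$ inside the $\ell_1$ deviation and (ii) the $\sqrt{SA}$ from $\sum_{s,a}\sqrt{d_h^\pi(s,a)}$. The paper avoids both by never bounding the transition error per $(s,a)$ in $\ell_1$: instead it proves a uniform concentration inequality (its Lemma on sharp concentration) stating that $\E_{\mu_h}\big|(\hat\P_h-\P_h)G(s,a)\big|^2\mathds{1}\{a=\nu(s)\}\le \cO(H^2S\iota/N)$ simultaneously for \emph{all} bounded functions $G:\cS\to[0,H]$ and all deterministic selectors $\nu:\cS\to\cA$, via an $\varepsilon$-net over $(G,\nu)$ of log-cardinality $O(S\log(AH/\varepsilon))$, Bernstein's inequality, and a self-bounding/empirical-risk-minimization property of the plug-in estimator. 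Here the single factor of $S$ comes from the covering number, and the bound is on the $\mu$-\emph{averaged} squared error, so small individual $\mu_h(s,a)$ cost nothing extra. Cauchy--Schwarz against $d_h^\pi$ together with the importance ratio $2SAH$ then yields $\sqrt{2SAH\cdot H^2S\iota/N}$ per step and the stated $\tlO(H^5S^2A/\epsilon^2)$ overall. To repair your proof you would need to replace the $\ell_1$ simulation lemma with a uniform-over-value-functions, measure-averaged concentration statement of this type; the rest of your architecture then goes through as in the paper.
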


We emphasize that the correctness guarantee here is quite strong: the
dataset $\mathcal{D}$ collected by the algorithm is such that any
number of adaptively chosen reward functions can be optimized with no
further data collection. In contrast, if we na\"{i}vely deployed a
reward-sensitive RL algorithm, we would have to collect additional
trajectories for each reward function, which could be quite sample
inefficient. We emphasize that requiring near-optimal policies for
many reward functions is quite common in applications, especially when
we design reward functions by trial and error to elicit specific
behaviors.

\paragraph{Algorithm overview.}
Our algorithm proceeds with following high level steps:
\begin{enumerate}
\item learn a set of policies $\Psi$ which allow us to visit all
  ``significant'' states with reasonable probability.
\item collect a sufficient amount of data by executing policies in $\Psi$.
\item compute the empirical transition matrix $\hat{\P}$ using the collected data.
\item for each reward function $r$, find a near-optimal policy by invoking a planning algorithm with transitions $\hat{\P}$ and reward $r$.
\end{enumerate}
The first two steps are performed in the exploration phase, while the latter two steps are performed in the planning phase.
In Section \ref{sec:main_exp} and Section \ref{sec:main_plan}, we will present our formal algorithms and the corresponding theoretical guarantees for two phases separately. 
One important feature of our algorithm is that we can use existing approximate MDP solvers or batch-RL algorithms in the last step.
We demonstrate with two examples, namely Value Iteration (VI) and Natural Policy Gradient (NPG), in Section \ref{sec:main_solvers}.




\subsection{Exploration Phase} \label{sec:main_exp}
\begin{algorithm}[tb]
   \caption{Reward-free RL-Explore}
   \label{alg:main_exp}
\begin{algorithmic}[1]
  \STATE {\bfseries Input:} iteration number $N_0$, $N$.
  \STATE set policy class $\Psi \leftarrow \emptyset$, and dataset $\cD \leftarrow \emptyset$.
  \FOR{all $(s, h) \in \cS \times [H]$} \label{line:policy_cover_start}
  \STATE $r_{h'}(s', a') \leftarrow \mathds{1}[s'=s \text{~and~} h'=h]$ for all $(s', a', h') \in \cS \times \cA \times [H]$.\label{line:reward_def}
  \STATE $\Phi^{(s,h)} \leftarrow \EULER(r, N_0)$.
  \STATE $\pi_{h}(\cdot|s) \leftarrow \text{Uniform}(\cA)$ for all $\pi \in \Phi^{(s,h)}$.
  \STATE $\Psi \leftarrow \Psi \cup \Phi^{(s,h)}$.
  \ENDFOR \label{line:policy_cover_end}
  \FOR{$n=1 \ldots N$}\label{line:sample_start}
  \STATE sample policy $\pi \sim \text{Uniform}(\Psi)$.
  \STATE play $\mathcal{M}$ using policy $\pi$, and observe the trajectory $z_n = (s_1, a_1, \ldots, s_H, a_H, s_{H+1})$.
  \STATE $\cD \leftarrow \cD \cup \{z_n\}$
  \ENDFOR \label{line:sample_end}
  \STATE {\bfseries Return:} dataset $\cD$.
\end{algorithmic}
\end{algorithm}

The goal of exploration is to visit all possible states so that the agent can gather sufficient information in order to find the optimal policy eventually. However, rather different from the bandit setting where agent can select an arbitrary arm to pull, it is possible that certain state in the MDP is very difficult to reach no matter what policy the agent is taking. Therefore, we first introduce the concept of the state being ``significant''. See Figure \ref{fig:mdp_significant} for illustrations.

\begin{definition} \label{def:significant}
A state $s$ in step $h$ is \textbf{$\delta$-significant} if there exists a policy $\pi$, so that the probability to reach $s$ following policy $\pi$ is greater than $\delta$. In symbol:
$$
\max_{\pi} P_{h}^{\pi}\left( s \right) \ge \delta
$$
\end{definition}

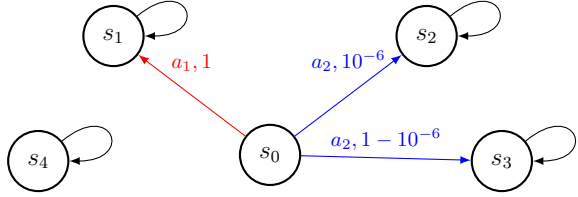
\begin{figure}[t]
  \centering 

 \begin{tikzpicture}[auto,node distance=8mm,>=latex, font = \small,scale=0.8, every node/.style={scale=0.8}]

    \tikzstyle{round}=[thick,draw=black,circle]

    \node[round,minimum size=10mm] (s0) {\large$s_0$};
    \node[round,,minimum size=10mm, above left=10mm and 15mm of s0,font = \small] (s1) {\large$s_1$};
    \node[round,,minimum size=10mm, above right=10mm and 15mm of s0,font = \small] (s2) {\large$s_2$};
    \node[round,,minimum size=10mm, below right=-5mm and 25mm of s0,font = \small] (s3) {\large$s_3$};
    \node[round,,minimum size=10mm, below left=-5mm and 25mm of s0,font = \small] (s4) {\large$s_4$};


    \draw[->,color = blue] (s0) -- node[above=2mm] {$a_2, 10^{-6}$}  (s2);
    \draw[->,color = blue] (s0) -- node[above=0mm] {$a_2, 1-10^{-6}$}  (s3);
    \draw[->,color = red] (s0) -- node[above=2mm] {$a_1, 1$}  (s1);
    
    \draw[->] (s1) [out=40,in=0,loop] to coordinate[pos=0.1](aa) (s1);
    \draw[->] (s2) [out=40,in=0,loop] to coordinate[pos=0.1](aa) (s2);
    \draw[->] (s3) [out=40,in=0,loop] to coordinate[pos=0.1](aa) (s3);
    \draw[->] (s4) [out=40,in=0,loop] to coordinate[pos=0.1](aa) (s4);

\end{tikzpicture}
  \caption{Illustration of significant states (Definition~\ref{def:significant}) v.s. insignificant states. In this toy example we have 5 states, where $s_0$ is the initial state. Only from state $s_0$ the agent can transit to other states and the other states are absorbing whatever action the agent takes. For state $s_0$, we use blue arrows to represent transition if action $a_1$ is taken and red ones if action $a_2$ is taken. The numbers on the arrows following the actions are the transition probability. In this example, $s_4$ is insignificant, because it can never be reached. For $\delta = 10^{-5}$, $s_2$ is also $\delta$-insignificant, because the best policy to reach $s_2$ is by taking action $a_2$ at initial state $s_0$, which gives the maximum probability $10^{-6}$ to reach $s_2$. The remaining states $s_1, s_3$ are all $\delta$-significant.
  }\label{fig:mdp_significant}
\end{figure}

Intuitively, with limited budeget of samples and runtime, one can be only hopefully to visit all significant states. On the other hand, since insignificant states can be rarely visited no matter what policy is used, they will not significantly change the value from the initial states. Thus, for the sake of finding near-optimal policies, it is sufficient to visit all significant states with proper significance level $\epsilon$. 
Indeed, Algorithm \ref{alg:main_exp} is able to provide such a guarantee as follows.

\begin{theorem}
  \label{thm:main_exp}
  There exists absolute constant $c>0$ such that for any $\epsilon >0$ and $p \in (0, 1)$, if we set $N_0\ge cS^{2}AH^4 \iota_0 ^3/\delta$ where $\iota_0 \defeq \log(SAH/(p\delta))$, then with probability at least $1-p$, that Algorithm \ref{alg:main_exp}
  will returns a dataset $\mathcal{D}$ consisting of $N$ trajectories $\{z_n\}_{n=1}^N$, which are i.i.d sampled from a distribution $\mu$ satisfying:
  \begin{equation} \label{eq:d_cond}
  \forall \text{~$\delta$-significant~} (s,h), \quad \max_{a, \pi}\frac{P_{h}^{\pi}(s,a)}{\mu_h(s,a)} \le 2SAH.
  \end{equation}
\end{theorem}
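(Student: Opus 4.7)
The plan is to prove the theorem in three steps: (i) for each target $(s,h)$, show that the policies $\Phi^{(s,h)}$ produced by \EULER, when uniformly mixed, reach $(s,h)$ with probability comparable to $\lambda_h(s) \defeq \max_\pi P_h^\pi(s)$ whenever $\lambda_h(s) \ge \delta$; (ii) show that the uniform-action override preserves this mass while spreading it evenly over actions; and (iii) show that mixing over the $SH$ disjoint sub-collections only costs a $1/(SH)$ factor in the denominator of $\mu$.

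For Step~(i), fix $(s,h)$ and observe that under the reward $r_{h'}(s',a')=\mathds{1}[s'=s\wedge h'=h]$ defined in Line~\ref{line:reward_def}, one has $V_1^\pi = P_h^\pi(s)$ and $V_1^\star = \lambda_h(s) \le 1$. I would then invoke \EULER's value-adaptive (``first-order'') regret bound, which for any $V^\star \le 1$ yields, with probability $1 - p/(SH)$,
\[
\sum_{k=1}^{N_0}\bigl[\lambda_h(s) - P_h^{\pi_k}(s)\bigr] \;\le\; \widetilde{O}\Bigl(\sqrt{S^2 A H^2\, \lambda_h(s)\, N_0} + \mathrm{poly}(S,A,H)\Bigr).
\]
If $(s,h)$ is $\delta$-significant, so that $\lambda_h(s) \ge \delta$, the assumed threshold $N_0 \ge c S^2 A H^4 \iota_0^3/\delta$ forces both terms above to be at most $\tfrac14 N_0 \lambda_h(s)$, giving $\frac{1}{N_0}\sum_k P_h^{\pi_k}(s) \ge \lambda_h(s)/2$. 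A union bound over all $SH$ pairs $(s,h)$ only changes $\iota_0$ by a constant factor.

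For Step~(ii), I would note that replacing $\pi_h(\cdot\mid s)$ by $\mathrm{Uniform}(\cA)$ leaves $P_h^\pi(s)$ unchanged, because this probability depends only on $\pi_1,\ldots,\pi_{h-1}$; after the modification $P_h^\pi(s,a) = P_h^\pi(s)/A$. Hence $\frac{1}{N_0}\sum_{\pi\in\Phi^{(s,h)}} P_h^\pi(s,a) \ge \lambda_h(s)/(2A)$. For Step~(iii), since each trajectory in $\mathcal{D}$ is generated by an independent draw $\pi\sim\mathrm{Uniform}(\Psi)$ with $|\Psi| = SH\cdot N_0$, the trajectories are i.i.d.\ from a mixture $\mu$ satisfying
\[
\mu_h(s,a) \;\ge\; \frac{1}{SH}\cdot\frac{1}{N_0}\sum_{\pi\in\Phi^{(s,h)}} P_h^\pi(s,a) \;\ge\; \frac{\lambda_h(s)}{2SAH}.
\]
Combined with the trivial bound $\max_{\pi} P_h^\pi(s,a) \le \lambda_h(s)$, this gives the desired ratio $\le 2SAH$.

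The main obstacle will be Step~(i), specifically obtaining the $1/\delta$ (rather than $1/\delta^2$) dependence in the threshold for $N_0$. This requires \EULER's value-adaptive regret bound, whose leading term scales like $\sqrt{V^\star N_0}$ rather than $\sqrt{N_0}$; only then does the defining inequality $\sqrt{V^\star N_0} \lesssim V^\star N_0$ collapse to $N_0 \gtrsim 1/V^\star \ge 1/\delta$. A subsidiary subtlety is ensuring that \EULER's lower-order $\mathrm{poly}(S,A,H)$ term is genuinely independent of $\lambda_h(s)$, so that it contributes to the additive lower-order piece of the overall sample complexity rather than degrading the $\delta$-dependence in this intermediate bound.
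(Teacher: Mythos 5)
Your proposal is correct and follows essentially the same route as the paper's proof: invoke \EULER's value-adaptive regret bound under the indicator reward so that the $\sqrt{V^\star N_0}$ leading term yields the $1/\delta$ threshold, conclude that the uniform mixture over $\Phi^{(s,h)}$ reaches $(s,h)$ with probability at least $\lambda_h(s)/2$, spread over actions via the uniform override (costing a factor $A$), and average over the $SH$ sub-collections (costing a factor $SH$). The only cosmetic difference is that you quote a slightly looser leading term ($\sqrt{S^2AH^2\,\lambda N_0}$ versus the paper's $\sqrt{SAH\iota_0\,\lambda N_0}$), which is immaterial given the stated threshold on $N_0$.
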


Theorem \ref{thm:main_exp} claims that using Algorithm \ref{alg:main_exp}, we can collect data from a underlying distribution $\mu$, which ensures that for policy $\pi$, the ratio $P_{h}^{\pi}(s,a)/\mu_h(s,a)$ will be upper bounded for any significant state and action. That is, all significant state and action will be visited by distribution $\mu$ with reasonable amount of probability. Notice as $\delta$ becomes smaller, there will be more significant states and the condition~\eqref{eq:d_cond} becomes stronger. As a result we need to take larger $N_0$. As we will see later, the $\delta$ we take eventually will be $\epsilon /\left( 2SH^2 \right)$, where $\epsilon$ is the suboptimality of the policy we find in the planning phase.

Algorithm \ref{alg:main_exp} can be decompose into two parts, where Line \ref{line:policy_cover_start}-\ref{line:policy_cover_end} learns a set of exploration policies $\Psi$ and Line \ref{line:sample_start}-\ref{line:sample_end} simply collects data by uniformly executing policies in $\Psi$. Therefore, the key mechanism lies in how to learn the set of exploration policies $\Psi$. Our strategy is to first learn the best policies that maximize the probability to research each state $s$ at step $h$ individually, and then combine them.

Concretely, for each state $s$ at step $h$, algorithm \ref{alg:main_exp} first create a reward function $r$ that is always zero except for the state $s$ at step $h$. Then we can simulate a standard MDP by properly feeding this designed reward $r$ when an agent interact with the environment using protocol \ref{pro:rfMDP}. It is easy to verify that the optimal policy for the MDP with this reward $r$ is precisely the policy that maximizes the probability to reach $(s, h)$. Thus, any RL algorithms with PAC or regret guarantees \cite{azar2017minimax,jin2018q} can be used here to approximately find this optimal policy. In particular, we use \EULER algorithm \cite{zanette2019tighter}, whose theoretical guarantee in our setting is presented as follows \footnote{In \cite{zanette2019tighter}, \EULER is studied under stationary setting, where $\P$ and $r$ does not depend on $h$. A stationary MDP can simulate a non-stationary MDP by augmenting state $s$ to $(s, h)$. Therefore, the effective number of states becomes $SH$ when we apply the results in \cite{zanette2019tighter}.}

\begin{lemma}
  \label{zanette}
There exists absolute constant $c>0$ such that for any $N_0>0$ and $p \in (0, 1)$, with probability at least $1-p$, if we run \EULER algorithm for $N_0$ episodes, it will output a policy set $\Phi$ with $|\Phi| = N_0$ that satisfies: 
\begin{align*}
  &\E_{s_1\sim \P _1}\left[ V_{1}^{\star}\left( s_1 \right) -\frac{1}{N_0}\sum_{\pi \in \Phi}{V_{1}^{\pi}}\left( s_1 \right) \right] \le c\cdot \left\{ \sqrt{\frac{SAH\iota_0 \cdot \E_{s_1\sim \P _1}V_{1}^{\star}\left( s_1 \right)}{N_0}}+\frac{S^2AH^{4}\iota_0 ^3}{N_0} \right\} 
\end{align*}
where $\iota _0=\log \left( SAHN_0/p \right)$.
\end{lemma}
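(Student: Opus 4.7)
The lemma is essentially a direct invocation of the problem-dependent regret bound of \EULER (Zanette-Brunskill 2019), adapted to the reward-free context, with a small martingale argument to replace the sampled initial states by an expectation over $\P_1$. My plan has three steps.

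First, I will apply \EULER's high-probability regret guarantee to the MDP induced by the reward function $r_{h'}(s',a') = \mathds{1}[s'=s,\, h'=h]$. For this reward, the cumulative return over any episode is at most $1$, so $V_1^\pi(s_1) \in [0,1]$ for every policy and every initial state. As noted in the footnote, since \EULER is stated for stationary MDPs, I first apply the standard state-augmentation $s \mapsto (s,h)$, inflating the effective state count from $S$ to $SH$. Plugging this into \EULER's bound gives, with probability at least $1-p/2$,
\begin{equation*}
\sum_{k=1}^{N_0} \bigl[V_1^\star(s_1^k) - V_1^{\pi_k}(s_1^k)\bigr] \le c\left(\sqrt{SAH\,\iota_0\,N_0\cdot \E_{s_1\sim \P_1}V_1^\star(s_1)} + S^2AH^4\iota_0^3\right),
\end{equation*}
where $s_1^k \sim \P_1$ and $\pi_k$ is the $k$-th policy produced by \EULER. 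The problem-dependent factor $\sqrt{\E V_1^\star}$ (as opposed to a trivial $\sqrt{H}$) is the crucial feature here: when $(s,h)$ is hard to reach, $V_1^\star$ is small, which is precisely what lets Theorem \ref{thm:main_exp} obtain $N_0 \propto 1/\delta$ rather than $1/\delta^2$.

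Second, I will convert from empirical-trajectory regret to expected regret over $\P_1$. The lemma's left-hand side is $\frac{1}{N_0}\sum_{k=1}^{N_0} \E_{s_1\sim\P_1}[V_1^\star(s_1) - V_1^{\pi_k}(s_1)]$, whereas the \EULER bound controls $\frac{1}{N_0}\sum_k [V_1^\star(s_1^k) - V_1^{\pi_k}(s_1^k)]$. Since $\pi_k$ is measurable with respect to the history strictly before episode $k$ and $s_1^k$ is drawn independently from $\P_1$, the sequence
\begin{equation*}
X_k \defeq \bigl[V_1^\star(s_1^k) - V_1^{\pi_k}(s_1^k)\bigr] - \E_{s_1\sim\P_1}\bigl[V_1^\star(s_1) - V_1^{\pi_k}(s_1)\bigr]
\end{equation*}
is a martingale difference sequence bounded in $[-1,1]$. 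Azuma-Hoeffding then yields $|\sum_k X_k| \le \sqrt{2 N_0 \log(4/p)}$ with probability at least $1-p/2$. This slack is absorbed into the leading $\sqrt{SAH\iota_0\, N_0\,\E V_1^\star}$ term because $\sqrt{N_0 \log(1/p)} \lesssim \sqrt{SAH\iota_0\, N_0\, \E V_1^\star}$ whenever $\E V_1^\star$ is not microscopic; in the remaining regime it is dominated by the lower-order $S^2AH^4\iota_0^3$ term after dividing by $N_0$.

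Third, I will take a union bound over the two high-probability events, divide the combined inequality by $N_0$, and absorb constants to match the stated form. The main (minor) obstacle is bookkeeping: verifying that the precise form of \EULER's problem-dependent regret, once the state-augmentation inflates $S$ to $SH$, produces the exact powers of $S$, $A$, and $H$ appearing in the lemma. Since we only need the result stated, no tightening of \EULER is required --- the proof is essentially a citation plus a one-line Azuma step.
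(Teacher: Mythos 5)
Your step 2 has a genuine gap. Plain Azuma--Hoeffding on the martingale differences $X_k \in [-1,1]$ gives a deviation of order $\sqrt{N_0\log(1/p)}$, and your claim that this is always absorbed by the right-hand side is false in exactly the regime the lemma is designed for. When $\E_{s_1\sim\P_1}V_1^\star(s_1)$ is very small (a barely-reachable or unreachable state $(s,h)$), the first term $\sqrt{SAH\iota_0 N_0\,\E V_1^\star}$ does not dominate $\sqrt{N_0\log(1/p)}$, and the second term only does so when $N_0 \lesssim S^4A^2H^8\iota_0^5$; the lemma is claimed for \emph{all} $N_0 > 0$, and after dividing by $N_0$ your residual $\sqrt{\log(1/p)/N_0}$ decays like $N_0^{-1/2}$ while the claimed bound decays like $N_0^{-1}$ when $\E V_1^\star$ is negligible. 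This is not merely cosmetic: in the application (Theorem~\ref{thm:main_exp}) one takes $N_0 \propto 1/\delta$ and needs the average regret below $\delta/2$, whereas your slack $\sqrt{\delta\log(1/p)/(S^2AH^4\iota_0^3)}$ exceeds $c\delta$ once $\delta \lesssim 1/(S^2AH^4\iota_0^2)$, which would force $N_0 \propto 1/\delta^2$ and degrade the result. Two fixes are available. The paper's route avoids the martingale step entirely: prepend an artificial step with a single deterministic initial state $s_0$ and one action whose transition is $\P_1$, so the augmented $(H+1)$-step MDP has a fixed initial state with $V_1^\star(s_0) = \E_{s_1\sim\P_1}V_1^\star(s_1)$, and \EULER's regret is then literally the quantity in the lemma. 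Alternatively, keep your structure but replace Azuma by a Freedman/Bernstein martingale inequality, using that $0 \le V_1^\star(s_1)-V_1^{\pi_k}(s_1) \le V_1^\star(s_1) \le 1$ implies $\E[X_k^2\mid \mathcal{F}_{k-1}] \le \E_{s_1\sim\P_1}V_1^\star(s_1)$, which yields a deviation of order $\sqrt{N_0\,\E V_1^\star\,\log(1/p)} + \log(1/p)$ and is absorbed.

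A secondary remark on step 1: the problem-dependent factor $\E V_1^\star$ inside the square root is not an off-the-shelf citation of \citet{zanette2019tighter}; the paper obtains it by bounding the return-variance quantity (their equation (156)) by $4V_1^\star(s_0)/H$, using that the designed reward is an indicator so per-episode returns lie in $[0,1]$ and hence $(\sum_h r)^2 \le \sum_h r$. You should include this short computation rather than assert the bound, since it is the point where the reward structure enters.
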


We comment that one unique feature of \EULER algorithm is that its suboptimality scales with the value of the optimal policy $\E_{s_1 \sim \P_1} V_1^{\star}(s_1)$. This is key in obtaining a sharp result, and is especially helpful in dealing with those states that are still significant but their maximum reaching probability is low. Finally, since the best policy to reach $(s, h)$ is only meaningful at steps before $h$, algorithm \ref{alg:main_exp} then alter the policy for state $s$ at step $h$ to be $\text{Uniform}(\cA)$ to ensure good probability of choosing all actions for this state.

\subsection{Planning Phase} \label{sec:main_plan}

\begin{algorithm}[tb]
   \caption{Reward-free RL-Plan}
   \label{alg:main_plan}
\begin{algorithmic}[1]
  \STATE {\bfseries Input:} a dataset of transition $\cD$, reward function $r$, accuracy $\epsilon$.
  \FOR{all $(s, a, s', h) \in \cS \times \cA \times \cS \times [H]$} \label{line:empricial_start}
  \STATE $N_h(s, a, s') \leftarrow \sum_{(s_h, a_h, s_{h+1}) \in \cD} \mathds{1}[s_h = s, a_h = a, s_{h+1} = s'] $.
  \STATE $N_h(s, a) \leftarrow \sum_{s'} N_h(s, a, s') $.
  \STATE $\hat{\P}_h(s'|s, a) = N_h(s,a, s')/N_h(s, a)$.
  \ENDFOR\label{line:empirical_end}
  \STATE $\hat{\pi} \leftarrow \textsc{APPROXIMATE-MDP-SOLVER}(\hat{\P}, r, \epsilon)$. 
  \STATE {\bfseries Return:} policy $\hat{\pi}$.
\end{algorithmic}
\end{algorithm}

In planning phase, the agent is given the reward function $r$, and aims to find a near-optimal policy based on $r$ and dataset $\cD$ collected in the exploration phase. Algorithm \ref{alg:main_plan} proceeds with two steps. Line \ref{line:empricial_start}-\ref{line:empirical_end} use counts based on dataset $\cD$ to estimate the empirical transition matrix $\hat{\P}$. Then, algorithm \ref{alg:main_plan} calls a approximate MDP solver. 
Subroutine \textsc{APPROXIMATE-MDP-SOLVER}$(\hat{\P}, r, \epsilon)$ can be any algorithm that finds $\epsilon$-suboptimal policy $\hat{\pi}$ for MDP with known transition matrix and reward (they are $\hat{\P}$, $r$ in this case). See Section \ref{sec:main_solvers} for examples of such approximate MDP solvers.

Now we are ready to state the guarantee for Algorithm \ref{alg:main_plan}, which asserts that as long as the number of data collected in the exploration phase is sufficiently large, the output policy $\hat{\pi}$ is not only a near-optimal policy for the estimated MDP with transition $\hat{\P}$, but also a near-optimal policy for the true MDP.


\begin{theorem}\label{thm:plan}
There exists absolute constant $c>0$, for any $\epsilon>0$, $p\in(0, 1)$, assume dataset $\cD$ has $N$ i.i.d. samples from distribution $\mu$ which satisfies Eq.\eqref{eq:d_cond} with $\delta=\epsilon /\left( 2SH^2 \right) $, and $N \ge cH^{5}S^2A\iota/\epsilon^2$, then with probability at least $1-p$, for any reward function $r$ simultanouesly, the output policy $\hat{\pi}$ of Algorithm \ref{alg:main_plan} is $3\epsilon$-suboptimal. That is:
\begin{equation*}
\E_{s_1 \sim \P_1} [V_1^{\star}(s_1; r) - V_1^{\hat{\pi}}(s_1; r)] \le 3\epsilon
\end{equation*}
\end{theorem}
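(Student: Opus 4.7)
The plan is to reduce the claim to a uniform concentration statement for value functions via a standard four-term decomposition, and then establish that concentration by splitting state-action pairs according to their reachability. Let $\pi^\dagger$ denote any policy that is optimal for the \emph{empirical} MDP $(\hat{\P},r)$, and write $\hat{V}^\pi$ for the value of $\pi$ under transitions $\hat{\P}$ and reward $r$. Taking $\E \equiv \E_{s_1\sim\P_1}$ throughout, I decompose
\begin{align*}
\E\bigl[V_1^{\pi^\star}(s_1;r) - V_1^{\hat{\pi}}(s_1;r)\bigr]
&= \E\bigl[V_1^{\pi^\star} - \hat{V}_1^{\pi^\star}\bigr] + \E\bigl[\hat{V}_1^{\pi^\star} - \hat{V}_1^{\pi^\dagger}\bigr] \\
&\quad {}+ \E\bigl[\hat{V}_1^{\pi^\dagger} - \hat{V}_1^{\hat{\pi}}\bigr] + \E\bigl[\hat{V}_1^{\hat{\pi}} - V_1^{\hat{\pi}}\bigr].
\end{align*}
The second bracket is $\le 0$ by optimality of $\pi^\dagger$ in the empirical MDP, and the third is $\le \epsilon$ by the guarantee of the approximate MDP solver. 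It therefore suffices to prove that with probability $\ge 1-p$, simultaneously over every reward function $r$ and every policy $\pi$, one has $|\E[V_1^\pi(s_1;r) - \hat{V}_1^\pi(s_1;r)]| \le \epsilon$, which would yield the $3\epsilon$ bound by instantiating with $\pi \in \{\pi^\star,\hat{\pi}\}$.

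To obtain this uniform value-error bound I would invoke the simulation lemma
\begin{equation*}
V_1^\pi(s_1;r) - \hat{V}_1^\pi(s_1;r) \;=\; \sum_{h=1}^{H} \E_{\pi,\P}\bigl[(\P_h - \hat{\P}_h)(\cdot | s_h,a_h)^\top \hat{V}_{h+1}^\pi \,\bigm|\, s_1\bigr],
\end{equation*}
and then partition the state-action pairs at each layer $h$ according to whether $(s,h)$ is $\delta$-significant, with $\delta=\epsilon/(2SH^2)$. For insignificant $(s,h)$, $P_h^\pi(s)\le\delta$ by definition, and bounding the summand crudely by $2H$ gives a total contribution of at most $SH\cdot\delta\cdot 2H = \epsilon$. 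For significant $(s,h)$, the coverage guarantee~\eqref{eq:d_cond} together with a Chernoff bound yields $N_h(s,a) \gtrsim N\mu_h(s,a) \gtrsim N\,P_h^\pi(s,a)/(SAH)$ with high probability, so every such triple is well-sampled.

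On each significant $(s,a,h)$ I then apply Bernstein's inequality to the empirical transition estimator,
\begin{equation*}
\bigl|(\P_h - \hat{\P}_h)(\cdot | s,a)^\top V\bigr| \;\lesssim\; \sqrt{\frac{\Var_{\P_h(\cdot | s,a)}(V)\,\iota}{N_h(s,a)}} \;+\; \frac{H\iota}{N_h(s,a)},
\end{equation*}
weight by $P_h^\pi(s,a)$, and sum over $(s,a,h)$. By Cauchy--Schwarz combined with the law of total variance $\sum_h\E_\pi[\Var_{\P_h}(V_{h+1}^\pi)] \le H^2$, the leading term collapses to $\sqrt{\poly(S,A,H)\,\iota/N}$, which is $\le \epsilon$ under the stated sample-size bound $N\ge cH^5S^2A\iota/\epsilon^2$; the $H\iota/N_h$ correction is absorbed into the lower-order $1/\epsilon$ term of the complexity.

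The principal obstacle is making this concentration uniform over the uncountable family of policies $\pi$ and reward functions $r$, since $\hat{V}_{h+1}^\pi$ depends both on the random dataset $\cD$ (through $\hat{\P}$) and on the choice of $(\pi,r)$. I would handle this through a standard $\epsilon/H$-discretization of the value-function cube $[0,H]^{\cS}$: Bernstein is applied and union-bounded over this net together with all triples $(s,a,h)$, incurring only an $S\log(SH/\epsilon)$ overhead that is absorbed into $\iota$. A leaner alternative, mirroring the monotone-value / reference-function arguments of \citet{azar2017minimax, dann2017unifying}, is to apply Bernstein against the fixed, non-random value $V_{h+1}^\pi$ of the \emph{true} MDP and separately control the bootstrap gap $\hat{V}_{h+1}^\pi - V_{h+1}^\pi$ by a Bellman-style recursion over $h$. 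Either route produces the required uniform $\epsilon$-bound, completing the proof of $3\epsilon$-suboptimality.
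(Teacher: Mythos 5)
Your overall architecture matches the paper's: the same four-term decomposition (middle term nonpositive by optimality of $\pi^\dagger$ in the empirical MDP, optimization error bounded by the solver's guarantee), the same simulation lemma, the same split into $\delta$-significant and insignificant states with $\delta=\epsilon/(2SH^2)$, and a covering argument over $[0,H]^{\cS}$ to handle the dependence of $\hat{V}_{h+1}^{\pi}$ on the data and on $(\pi,r)$. The insignificant-state bound is fine. The gap is in the significant-state term: replacing the paper's aggregate concentration lemma by per-$(s,a)$ Bernstein bounds loses at least a factor of $S$, so the argument does not prove the theorem at the stated rate $N\ge cH^{5}S^{2}A\iota/\epsilon^{2}$.

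Concretely, union-bounding Bernstein over the $(\epsilon/H)$-net of $[0,H]^{\cS}$ at each pair puts a factor $S\iota$ inside every per-pair bound, i.e. $|(\P_h-\hat{\P}_h)(\cdot|s,a)^{\top}\hat{V}|^{2}\lesssim \sigma_{s,a}^{2}\,S\iota/N_h(s,a)$. The coverage condition only gives $N_h(s,a)\gtrsim N\mu_h(s,a)$ with $\mu_h(s,a)\ge\max_{\pi'}P_h^{\pi'}(s,a)/(2SAH)$, so after reweighting $P_h^{\pi}(s,a)/N_h(s,a)\lesssim SAH/N$ and the $P_h^{\pi}$ weights cancel: $\sum_{s,a}P_h^{\pi}(s,a)\sigma_{s,a}^{2}S\iota/N_h(s,a)\lesssim (S^{2}AH\iota/N)\sum_{s,a}\sigma_{s,a}^{2}$, an \emph{unweighted} sum of variances over up to $SA$ pairs (at best $S$ pairs if you restrict to deterministic $\pi_h$) rather than the $P^{\pi}$-weighted average that the law of total variance controls; you end up needing $N\gtrsim S^{3}A\,\poly(H)\iota/\epsilon^{2}$. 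Doing Cauchy--Schwarz before reweighting costs the same factor through $\sum_{s,a}\sqrt{P_h^{\pi}(s,a)}\,\sigma_{s,a}\le\sqrt{SA}\sqrt{\E_{P^{\pi}}[\sigma^{2}]}$. The paper's Lemma~\ref{lem:sharp_concentration} circumvents this: it bounds the $\mu$-averaged squared error $\E_{\mu_h}|(\hat{\P}_h-\P_h)G(s,a)|^{2}\mathds{1}\{a=\nu(s)\}\le O(H^{2}S\iota/N)$ \emph{in aggregate}, via the empirical-risk-minimization identity $\sum_i Y_i\le 0$ and the self-bounding property $\Var(Y)\le 4H^{2}\E Y$, so the covering cost $S\iota$ is paid once for the whole average rather than once per state-action pair; a single Cauchy--Schwarz against $\sum_{s,a}P_h^{\pi}(s,a)=1$ then yields exactly $\sqrt{S^{2}AH^{5}\iota/N}$. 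Your ``leaner alternative'' (Bernstein against the true $V_{h+1}^{\pi}$ plus a bootstrap recursion) has the additional problem that $V_{h+1}^{\pi}$ is not a single fixed reference function here---it ranges over all policies and all reward functions---so you are driven back to the same net and the same loss.
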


The mechanism behind Theorem \ref{thm:plan} is that: by sample sufficient number of exploring data, we ensure that the empirical transition $\hat{\P}$ and the true transition $\P$ are close so that the near-optimal policy for the esimated MDP with transition $\hat{\P}$ is also near optimal for the true MDP. We note that the closeness of $\hat{\P}$ and $\P$ can not be established in the usual sense of the TV-distance (or other distributional distance) between $\hat{\P}_h(\cdot|s, a)$ and $\P_h(\cdot|s, a)$ is small for any $(s, a, h)$, due to the existence of insignificant states. The key observation is that, nevertheless, we can establish the closeness of $\hat{\P}$ and $\P$ in the sense that for any policy $\pi$, the value functions starting from initial states are close. That is, the difference in policy evaluations of two MDPs is small, which is summarized in the following lemma.

\begin{lemma}
  \label{lem:plan}
Under the preconditions of Theorem \ref{thm:plan},  with probability at least $1-p$, for any reward function $r$ and any policy $\pi$, we have:
\begin{equation}
|\E_{s_1 \sim \P_1} [\hat{V}^{\pi}_{1}(s_1; r) -V^{\pi}_{1}(s_1; r)]| \le \epsilon
\end{equation}
where $\hat{V}$ is the value function of MDP with the transition $\hat{\P}$.
\end{lemma}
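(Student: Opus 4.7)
The plan is to prove Lemma~\ref{lem:plan} by combining the standard simulation/value-difference identity with a decomposition of the state space into $\delta$-significant and $\delta$-insignificant states, and then controlling each piece separately so that the total error is at most $\epsilon$. Set $\delta \defeq \epsilon/(2SH^2)$, consistent with the hypothesis of Theorem~\ref{thm:plan}.

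First I would expand $\hat{V}^{\pi}_1(s_1;r) - V^{\pi}_1(s_1;r)$ via the value-difference lemma into a telescoping sum $\sum_{h=1}^H \E^{\pi}_{\hat{\P}}\!\bigl[(\hat{\P}_h - \P_h)\, V^{\pi}_{h+1}(s_h,a_h;r)\bigr]$, where the outer expectation is over trajectories generated by $\pi$ under $\hat{\P}$. At each $h$ I would split $\cS$ into the $\delta$-significant set $\cS_h^{\mathrm{sig}}$ and its complement $\cS_h^{\mathrm{ins}}$. For the insignificant contribution, Definition~\ref{def:significant} implies $P^\pi_h(s) \le \delta$ for every $\pi$ and every $s \in \cS_h^{\mathrm{ins}}$, and $|V^\pi_{h+1}(\cdot;r)| \le H$, so the total insignificant contribution is bounded deterministically by $H \cdot S \cdot H \cdot \delta = \epsilon/2$.

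For the significant contribution, the key leverage is the coverage bound \eqref{eq:d_cond}, which gives $\mu_h(s,a) \ge P^\pi_h(s,a)/(2SAH)$ for every $\pi$ and every significant $(s,h)$ and action $a$. A multiplicative Chernoff bound then shows $N_h(s,a) \ge N\mu_h(s,a)/2$ whenever $N\mu_h(s,a) \gtrsim \iota$; in the complementary regime $N\mu_h(s,a) < \iota$, the coverage guarantee forces $P^\pi_h(s,a) \lesssim SAH\iota/N$, so those undersampled cells contribute only a low-order term. For the well-sampled cells I would invoke a Bernstein inequality to obtain $\bigl|(\hat{\P}_h - \P_h) V^\pi_{h+1}(s,a;r)\bigr| \lesssim \sqrt{\Var_{s'\sim\P_h(\cdot|s,a)}\!V^\pi_{h+1}(s';r)\cdot \iota / N_h(s,a)} + H\iota/N_h(s,a)$, substitute the lower bound on $N_h(s,a)$, and sum these terms weighted by $P^\pi_h(s,a)$ over $(s,a,h)$. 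Applying Cauchy--Schwarz layer-by-layer, together with the law of total variance $\sum_h \E^\pi[\Var_{\P_h}\!V^\pi_{h+1}] \le H^2$, would yield a significant-state contribution of order $\tlO\!\bigl(\sqrt{H^5 S^2 A \iota/N}\bigr)$, which is at most $\epsilon/2$ once $N \ge c H^5 S^2 A \iota/\epsilon^2$.

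The main obstacle is making the above concentration uniform over \emph{all} reward functions $r$ and \emph{all} policies $\pi$. For any single pair $(r,\pi)$ the Bernstein step is standard, but as $r$ varies the value function $V^\pi_{h+1}(\cdot;r)$ ranges over essentially all of $\{V:\cS\to[0,H]\}$. I would handle this by concentrating $\hat{\P}_h(\cdot|s,a)$ against $\P_h(\cdot|s,a)$ uniformly over a fine discretization of that function class; its log-covering number is $\tlO(S)$ and so contributes only logarithmically through $\iota$. A minor technicality is passing from the empirical Bernstein inequality to the population-variance form required for the law of total variance, which is resolved by a second Bernstein argument showing that empirical and true conditional variances agree up to lower-order terms. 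Once both directions of the bound are established (the argument is symmetric under $\hat{\P}\leftrightarrow\P$), the claimed two-sided estimate follows.
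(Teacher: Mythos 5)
Your overall plan (value-difference identity, split into $\delta$-significant and insignificant states, coverage ratio for the significant part, uniform concentration over a covering of $\{V:\cS\to[0,H]\}$) matches the paper's strategy, but two steps as written do not go through.

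First, you apply the value-difference identity in the wrong direction. You write the error as $\sum_h \E^{\pi}_{\hat{\P}}[(\hat{\P}_h-\P_h)V^\pi_{h+1}]$ with the outer expectation over trajectories generated under $\hat{\P}$, so the weights in your sum are the \emph{empirical} visitation probabilities $\hat{P}^\pi_h(s,a)$. But Definition~\ref{def:significant} and the coverage guarantee \eqref{eq:d_cond} are both statements about visitation under the \emph{true} transitions: insignificance gives $P^\pi_h(s)\le\delta$, not $\hat{P}^\pi_h(s)\le\delta$, and \eqref{eq:d_cond} bounds $P^\pi_h(s,a)/\mu_h(s,a)$, not $\hat{P}^\pi_h(s,a)/\mu_h(s,a)$. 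An insignificant state can have large visitation probability under $\hat{\P}$ precisely because $\hat{\P}$ is poorly estimated along the path to it, so neither your insignificant-state bound nor your coverage step applies as stated. The paper uses the other direction of Lemma~\ref{value_difference}: the expectation is under the true $\P$ and the function inside is $\hat{V}^\pi_{h+1}$ --- which is exactly why the concentration must be uniform over $\{V:\cS\to[0,H]\}$, a need your covering argument would still serve after the fix.

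Second, your aggregation of the per-$(s,a)$ Bernstein bounds does not deliver the claimed $\tlO(\sqrt{H^5S^2A\iota/N})$. Substituting $N_h(s,a)\gtrsim N\mu_h(s,a)\ge N P^\pi_h(s,a)/(2SAH)$ into $\sqrt{\Var\cdot S\iota/N_h(s,a)}$ (the extra $S$ in the log factor coming from your covering) and summing with weights $P^\pi_h(s,a)$ gives $\sqrt{S^2AH\iota/N}\cdot\sum_{h,s,a}\sqrt{P^\pi_h(s,a)\Var_{s,a}}$; converting $\sum\sqrt{p\,\Var}$ into $\sqrt{\sum p\,\Var}$ by Cauchy--Schwarz costs a further $\sqrt{HSA}$, so even with the law of total variance you land at roughly $\tlO(\sqrt{S^3A^2H^4\iota/N})$, an extra factor of order $\sqrt{SA/H}$ that the precondition $N\ge cH^5S^2A\iota/\epsilon^2$ does not absorb. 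The paper sidesteps this bookkeeping: it applies Cauchy--Schwarz \emph{first} against the visitation measure (total mass one) to pass to $\bigl[\sum_{s,a}|(\hat{\P}_h-\P_h)\hat{V}^\pi_{h+1}(s,a)|^2P^\pi_h(s,a)\bigr]^{1/2}$, transfers $P^\pi_h$ to $\mu_h$ via the factor $2SAH$ (using a policy modified to play a fixed action at step $h$), and then invokes a single ERM-style concentration bound (Lemma~\ref{lem:sharp_concentration}) giving $\E_{\mu_h}|(\hat{\P}_h-\P_h)G(s,a)|^2\mathds{1}\{a=\nu(s)\}\le\cO(H^2S\iota/N)$ uniformly over $G$ and deterministic $\nu$ --- no per-cell count lower bounds, no undersampled-cell case analysis, and no variance bookkeeping. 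To salvage your route you would need to redo the aggregation substantially, and it is not clear the stated $N$ would suffice.
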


The establishment of Lemma \ref{lem:plan} is a natual consequence of the followings: (1) the total contribution from all insignificant states is small; (2) $\hat{\P}$ is reasonably accurate for all significant states; and (3) a new sharp concentration inequality (see Lemma \ref{lem:sharp_concentration} in Appendix). With Lemma \ref{lem:plan}, now we are ready to prove Theorem \ref{thm:plan}.



\begin{proof}[Proof of Theorem~\ref{thm:plan}]
We denote the optimal policy of MDP$(\P, r)$ and MDP$(\hat{\P}, r)$ by $\pi^\star$ and $\hat{\pi}^\star$ respectively.
The theorem is a direct consequence of the following decomposition
\begin{align*}
  &\E_{s_1\sim \P _1}\{ V_{1}^{\pi ^{\star}}( s_1;r ) -V_{1}^{\hat{\pi}}( s_1;r ) \} 
\\
\le&\underset{\text{Evaluation error }1}{\underbrace{| \E_{s_1\sim \P _1}\{ V_{1}^{\pi ^{\star}}( s_1;r ) -\hat{V}_{1}^{\pi ^{\star}}( s_1;r ) \} |}}+\underset{\le \text{0 by definition}}{\underbrace{\E_{s_1\sim \P _1}\{ \hat{V}_{1}^{\pi ^{\star}}( s_1;r ) -\hat{V}_{1}^{\hat{\pi}^{\star}}( s_1;r ) \} }}
\\
+&\underset{\text{Optimization error}}{\underbrace{\E_{s_1\sim \P _1}\{ \hat{V}_{1}^{\hat{\pi}^{\star}}( s_1;r ) -\hat{V}_{1}^{\hat{\pi}}( s_1;r ) \} }}+\underset{\text{Evaluation error }2}{\underbrace{| \E_{s_1\sim \P _1}\{ \hat{V}_{1}^{\hat{\pi}}( s_1;r ) -V_{1}^{\hat{\pi}}( s_1;r ) \} |}}
\end{align*}
where evaluation errors are bounded by $\epsilon$ by Lemma~\ref{lem:plan} and optimization error is bounded by $\epsilon$ by assumption.
\end{proof}

\subsection{Approximate MDP Solvers} \label{sec:main_solvers}

\begin{algorithm}[tb]
   \caption{Natural Policy Gradient (NPG)}
   \label{alg:NPG}
\begin{algorithmic}[1]
  \STATE {\bfseries Input:} transition matrix $\P$, reward function $r$, stepsize $\eta$, iteration number $T$.
  \STATE initialize $\pi_h^{(0)} (\cdot|s) \leftarrow \text{Uniform}(\cA)$ for all $(s, h)$
  \FOR{$t = 0, \cdots, T-1$}
  \STATE evaluate $Q^{\pi^{(t)}}_h(s, a)$ using Bellman equation Eq.\eqref{eq:bellman} for all $(s, a, h)$.
  \STATE update $\pi^{(t+1)}_h(a |s) \propto \pi^{(t)}_h(a |s) \cdot \exp(\eta Q^{\pi^{(t)}}_h(s, a))$ for all $(s, a, h)$.
  \ENDFOR
  \STATE {\bfseries Return:} policy $\pi^{(T)}$.
\end{algorithmic}
\end{algorithm}

Approximate MDP solvers aim to find a near-optimal policy when the exact transition matrix $\P$ and reward $r$ are known. The simplest way to achieve this is by \textbf{Value Iteration} (VI) algorithm, which solves the Bellman optimality equation Eq.\eqref{eq:bellman_opt} in a dynamical programming fashion. Then the greedy policy induced by the result $Q^\star$ gives precisely the optimal policy without error.

Another popular approach frequently used in practice is the \textbf{Natural Policy Gradient} (NPG) algorithm as shown in Algorithm \ref{alg:NPG}. In each iteration, the algorithm first evaluates the value of policy $\pi^{(t)}$ using Bellman equation Eq.\eqref{eq:bellman}. Then it updates the policy by first scale it with the exponential of learning $\eta$ times value $Q^{\pi^{(t)}}$, and then performs a normalization. For completeness, we provides its guarantee here. Similar analysis also appears in \cite{agarwal2019optimality}.

\begin{proposition}
  \label{prop:NPG}
  for any learning rate $\eta$ and iteration number $T$, the output policy $\pi^{(T)}$ of Algorithm \ref{alg:NPG} satisfies the following:
$$ \E_{s_1 \sim \P_1} [V_{1}^\star(s_1)-V_{1}^{\pi^{(T)}}(s_1)]\le \frac{H\log A}{\eta T}+\eta H^2 $$
\end{proposition}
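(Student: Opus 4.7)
}
The high-level strategy is to reduce the problem of bounding the suboptimality of the NPG policy $\pi^{(T)}$ in the MDP to a collection of online learning problems, one per state-time pair $(s,h)$, each of which is solved by the multiplicative-weights update. The reduction is the performance difference lemma (PDL): for any two policies $\pi, \pi'$,
\begin{equation*}
\E_{s_1 \sim \P_1}\bigl[V^{\pi'}_1(s_1) - V^{\pi}_1(s_1)\bigr] = \sum_{h=1}^H \E_{s_h \sim d^{\pi'}_h}\bigl[\langle Q^{\pi}_h(s_h,\cdot),\, \pi'_h(\cdot|s_h) - \pi_h(\cdot|s_h)\rangle\bigr],
\end{equation*}
where $d^{\pi'}_h$ denotes the state occupancy of $\pi'$ at step $h$ starting from $s_1 \sim \P_1$. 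Applying this with $\pi = \pi^{(t)}$ and $\pi' = \pi^\star$, the suboptimality at iteration $t$ decomposes into a sum of per-state regret-like quantities, measured against the comparator $\pi^\star_h(\cdot|s_h)$ and under the trajectory distribution induced by $\pi^\star$.

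Next, I would observe that the update in Algorithm~\ref{alg:NPG}, $\pi^{(t+1)}_h(a|s) \propto \pi^{(t)}_h(a|s)\exp(\eta Q^{\pi^{(t)}}_h(s,a))$, is exactly exponentiated gradient ascent (i.e., multiplicative weights) on the simplex $\Delta_{\cA}$ with ``gain vector'' $Q^{\pi^{(t)}}_h(s,\cdot)$ and step size $\eta$. Fixing any $(s,h)$, the standard MWU regret bound against the comparator $\pi^\star_h(\cdot|s)$, together with the uniform initialization giving $\KL(\pi^\star_h(\cdot|s)\|\pi^{(0)}_h(\cdot|s)) \le \log A$ and the bound $Q^{\pi^{(t)}}_h(s,a) \in [0,H]$, yields
\begin{equation*}
\sum_{t=0}^{T-1} \bigl\langle Q^{\pi^{(t)}}_h(s,\cdot),\, \pi^\star_h(\cdot|s) - \pi^{(t)}_h(\cdot|s)\bigr\rangle \le \frac{\log A}{\eta} + \eta H^2 T.
\end{equation*}

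To conclude, I would take expectation over $s_h \sim d^{\pi^\star}_h$ in the MWU bound for each $h$, sum over $h \in [H]$, and invoke the PDL identity above to obtain
\begin{equation*}
\frac{1}{T}\sum_{t=0}^{T-1} \E_{s_1 \sim \P_1}\bigl[V^\star_1(s_1) - V^{\pi^{(t)}}_1(s_1)\bigr] \le \frac{H \log A}{\eta T} + \eta H^2.
\end{equation*}
To convert this average-iterate statement into a last-iterate guarantee on $\pi^{(T)}$, I would invoke the monotone policy improvement property of NPG in the tabular softmax setting: $V^{\pi^{(t+1)}}_1(s_1) \ge V^{\pi^{(t)}}_1(s_1)$ for all $s_1$ and all $t$. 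Consequently the last iterate attains the minimum suboptimality over $t$, which is at most the average, giving the claimed bound.

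The main technical obstacles are twofold. First, getting exactly the $\eta H^2$ term (rather than $\eta H^3$) requires the slightly sharper second-order estimate $\sum_a \pi^{(t)}_h(a|s) Q^{\pi^{(t)}}_h(s,a)^2 \le H \cdot V^{\pi^{(t)}}_h(s) \le H^2$ inside the MWU analysis, rather than the crude worst-case bound $\|Q\|_\infty^2$. Second, passing from an average/best-iterate guarantee to a clean last-iterate guarantee relies on the monotone improvement lemma for NPG, which itself requires a short calculation using the Bellman equation and the explicit form of the exponentiated update; this is the step where one must be most careful about dependence of the iterates on one another.
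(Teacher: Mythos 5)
Your proposal follows essentially the same route as the paper's proof: the performance difference lemma reduces the suboptimality to per-state multiplicative-weights regret against $\pi^\star_h(\cdot|s)$, the uniform initialization supplies the $\log A$ KL radius, the second-order term is controlled exactly as you describe via $\eta^2 \sum_a \pi^{(t)}_h(a|s)\,[Q^{(t)}_h(s,a)-V^{(t)}_h(s)]^2 \le \eta^2 H^2$ (the paper derives this through $\log Z^{(t)}_h(s)$ and $e^x \le 1+x+x^2$, which, like the standard MWU bound you cite, requires $\eta \le 1/H$), and the last-iterate claim follows from the same monotone-improvement lemma. The only cosmetic difference is that you invoke the exponentiated-gradient regret bound as a black box where the paper re-derives it via the KL telescope, and the bookkeeping of the $\sum_h$ over the second-order terms that you flag as the delicate point is handled no more carefully in the paper's own write-up.
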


Therefore, it is easy to verify, by choosing $\eta = \sqrt{\log A /HT}$ and $T = 4H^3 \log A /\epsilon^2$, the policy $\pi^{(T)}$ returned by NPG is $\epsilon$-optimal.


\section{Lower Bound\label{sec:lb}}
\newcommand{\calX}{\mathcal{X}}
\newcommand{\calM}{\mathcal{M}}
\newcommand{\Exp}{\mathbf{E}}

\newcommand{\gtrsimst}{\gtrsim_{\star}}
\newcommand{\lesssimst}{\lesssim_{\star}}

In this section, we establish that $\Omega(H^2 S^2A/\epsilon^2)$
trajectories are necessary to satisfy the guarantee from Theorem~\ref{thm:main}.

\begin{theorem}\label{thm:main_lb} Let $C > 0$ be a universal constant. Then for $A \ge 2$, $S \ge C \log_2 A$, $H \ge C\log_2
S$, and any $\epsilon \le \min\{1/4,H/48\}$, any
reward-free exploration algorithm $\Alg$ which statisfies the guarantee of Theorem~\ref{thm:main} with $p = 1/2$ and accuracy parameter $\epsilon$ must collect $\Omega(S^2 A
H^2/\epsilon^2)$ trajectories in expectation. This is true even if
$\Alg$ can return randomized or history-dependent (non-Markov)
policies, and holds even if the  rewards and transitions are identical across stages $h$.
\end{theorem}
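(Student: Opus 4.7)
The plan is to prove the $\Omega(S^2 A H^2/\epsilon^2)$ lower bound by constructing a hard family of MDPs together with a reward family that exploits the reward-free framework: the exploration strategy must be fixed before seeing the reward, yet an adversary can subsequently select a reward that tests the agent's knowledge at any ``corner'' of the environment. The construction must jointly achieve two effects: (i) every ``informative'' state is only reachable with probability $O(1/S)$ per episode no matter what policy is used, and (ii) samples at different informative states cannot be amortized, so the agent must effectively dedicate its budget to each such state separately.

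\textbf{Hard instance.} The backbone of the construction is an MDP with a root $s_0$, $M = \Theta(S)$ bandit leaves $b_1,\ldots,b_M$, an absorbing dead state $d$, and two reward-bearing absorbing states $s_+, s_-$. From $s_0$ the agent navigates a balanced tree of depth $L = \lceil \log_2 M \rceil \le \log_2 S \le H/C$, where each internal transition succeeds (advances to the intended child) with probability $1/2$ and fails (routes to $d$) otherwise, so any specific leaf $b_i$ is visited with probability at most $2^{-L} = O(1/S)$ per episode under any policy. At each leaf $b_i$, the $A$ actions form an $A$-armed bandit, whose outcome dictates whether the remaining horizon is spent at $s_+$ (reward $1$) or $s_-$ (reward $0$); one distinguished action $\theta_i \in [A]$ routes to $s_+$ with probability $1/2+\Delta$, the others with probability $1/2$, where the gap is calibrated to $\Delta = \Theta(\epsilon/H)$. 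The hidden parameter $\theta=(\theta_1,\ldots,\theta_M)\in[A]^M$ encodes the good arm at each leaf and indexes the family $\{\calM_\theta\}$; transitions and rewards can be made stationary across stages by standard padding.

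\textbf{Reward family and information-theoretic argument.} The key step is a reward family $\{r_\sigma\}_\sigma$ (including $r(s)=\mathds{1}[s=s_+]$ together with auxiliary perturbations that amplify per-leaf signals) such that, for each $\sigma$, producing an $\epsilon$-optimal policy for $r_\sigma$ requires correct bandit identification at a specific subset of leaves with effective gap $\Delta$. Since the adversary chooses $\sigma$ after exploration, the algorithm must be prepared for the worst $\sigma$, and hence for every $i$ in a set of size $\Omega(S)$ it must correctly identify $\theta_i$ with constant probability. Classical best-arm identification bounds (via KL change-of-measure, e.g., Mannor--Tsitsiklis) then imply $N_i = \Omega(A/\Delta^2) = \Omega(AH^2/\epsilon^2)$ samples are required at each such leaf $b_i$. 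Crucially, episodes aimed at leaf $j$ are information-theoretically useless for $\theta_i$ when $j\ne i$: under the ``dead-state'' navigation, trajectories under $\calM_\theta$ and $\calM_{\theta'}$ differing only in the $i$-th coordinate are identically distributed on the event ``$b_i$ not visited,'' so KL tensorization yields $\KL(\Pr_\theta\|\Pr_{\theta'}) \lesssim \E[N_i]\,\Delta^2$. A coordinate-wise Fano argument over $\theta\in[A]^M$ then lower-bounds the per-leaf visit counts and, combined with the visitation ceiling $\E[N_i]\lesssim N/S$, yields $N = \Omega(S\cdot S\cdot A H^2/\epsilon^2) = \Omega(S^2AH^2/\epsilon^2)$.

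\textbf{Main obstacle.} The delicate step is turning the informal ``non-amortization'' story into a rigorous bound that accounts for randomized, history-dependent algorithms and for the full product prior over $\theta \in [A]^M$. The path is to combine coordinate-wise Fano with the observation that conditioning on non-visitation of $b_i$ makes an episode statistically independent of $\theta_i$, and to carefully balance (a) the per-leaf value contribution (forcing $\Delta\asymp\epsilon/H$ to make $\epsilon$-suboptimality meaningful), (b) the visitation ceiling $O(1/S)$, and (c) the adversary's ability to activate any of $\Omega(S)$ leaves. Additional care is required to respect the theorem's stationarity constraint and edge conditions ($A\ge 2$, $S\ge C\log_2 A$, $H\ge C\log_2 S$, $\epsilon\le\min\{1/4,H/48\}$), for instance by padding the tree with self-loops at the terminal states and ensuring that the effective horizon after navigation is still $\Theta(H)$. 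This combination of gap calibration, visitation dilution, and per-coordinate Fano against the product-structured prior is the ``novel and sophisticated'' construction alluded to in the introduction.
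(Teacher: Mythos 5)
There is a genuine gap, and it is fatal to the construction as proposed. Your two sources of the factor $S^{2}$ are (i) the number of leaves and (ii) the visitation ceiling $\Pr[\text{reach }b_i]\le 2^{-L}=O(1/S)$ imposed by the stochastic (coin-flip) navigation. But the same visitation ceiling dilutes the \emph{value} contribution of each leaf by exactly the same factor: since rewards lie in $[0,1]$ and the arm choice at $b_i$ only affects the return through the downstream $s_+/s_-$ split, the suboptimality incurred by playing a wrong arm at $b_i$ is at most $\Pr[\text{reach }b_i]\cdot\Delta\cdot H=O(\Delta H/S)$ \emph{for every reward function in $[0,1]$} --- no ``auxiliary perturbation'' can amplify this, because the $O(1/S)$ reachability is a property of the transitions, not of the reward. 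Hence an $\epsilon$-optimal policy is only forced to identify $\theta_i$ if $\Delta H/S\gtrsim\epsilon$, i.e.\ $\Delta=\Omega(S\epsilon/H)$, which collapses the per-leaf best-arm bound to $\Omega(A/\Delta^{2})=\Omega(AH^{2}/(S^{2}\epsilon^{2}))$. Multiplying by the two factors of $S$ you recover only $\Omega(AH^{2}/\epsilon^{2})$: the dilution and the gap calibration cancel. If you instead make navigation deterministic so that the adversary can force visitation of $b_i$ (removing the dilution), your $A$-armed Bernoulli bandit at each leaf yields only $\Omega(AH^{2}/\epsilon^{2})$ per leaf and $\Omega(SAH^{2}/\epsilon^{2})$ in total --- one factor of $S$ short.

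The missing idea is where the second factor of $S$ actually comes from in the paper: it is \emph{not} visitation dilution but the \emph{support size of the unknown transition at each informative state}. The paper uses a deterministic binary tree so that each of the $n=\Omega(S)$ states $(x,\log_2 n)$ is reachable with probability $1$; the reward $r_{x,\nu}$ places reward $1$ at $(x,\log_2 n)$, which forces any $\epsilon$-optimal policy to visit it with probability at least $1/2$ and then act near-optimally there with the full (undiluted) accuracy budget $\Theta(\epsilon/H)$. The unknown object at that state is a near-uniform distribution $q_x(\cdot\mid a)$ over $2n=\Omega(S)$ absorbing states, tested against the entire reward class $\nu\in[0,1]^{2n}$; a packing of $e^{\Omega(n)}$ such distributions per action plus a Fano argument shows that $\Omega(nA/\epsilon_0^{2})$ visits are needed at each such state, with $\epsilon_0=\Theta(\epsilon/H)$. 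Since each episode visits exactly one state $(x,\log_2 n)$, the counts add over $x$, giving $\Omega(n\cdot nAH^{2}/\epsilon^{2})=\Omega(S^{2}AH^{2}/\epsilon^{2})$. Your scalar arm-quality parameter $\theta_i\in[A]$ carries only $\log A$ bits per leaf and cannot produce this extra $\Omega(S)$; you would need to replace the bandit at each leaf by a distribution-learning problem over $\Omega(S)$ outcomes and make the leaves deterministically reachable.
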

In particular, Theorem~\ref{thm:main_lb} shows that our upper bound (Theorem \ref{thm:main}) is tight in $S, A, \epsilon$, up to logarithmic factors
and lower-order terms. Note that lower bound holds against querying an
\emph{unlimited} number of reward vectors. It is left as an open
question whether such a lower bound holds when the algorithm is only
required to ensure correctness over a smaller number of reward vectors
pre-determined in advance. In what follows, we sketch a proof of
Theorem~\ref{thm:main_lb}; a formal proof is given in
Appendix~\ref{app:proof_of_lower_bound}.

\subsection{Reward Free Exploration at a Single State}

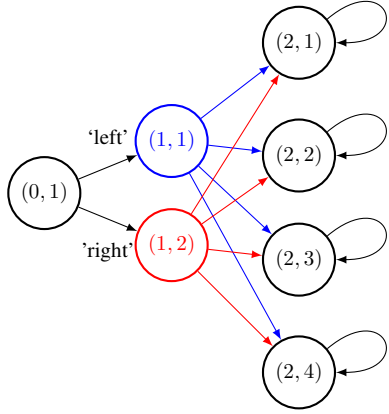
\begin{figure}[t!]
    \centering
    \begin{subfigure}
        \centering

  \begin{tikzpicture}[auto,node distance=8mm,>=latex, font = \small,scale=0.8, every node/.style={scale=0.8}]

    \tikzstyle{round}=[thick,draw=black,circle]

    \node[round] (11) {$(0,1)$};
    \node[round,above right=0mm and 10mm of 11,font = \small,color = blue] (21) {$(1,1)$};
    \node[round,below right=0mm and 10mm of 11,font = \small,color = red] (22) {$(1,2)$};
    \node[round,above right=20mm and 10mm of 22,font = \small] (31) {$(2,1)$};
    \node[round,above right=5mm and 10mm of 22,font = \small] (32) {$(2,2)$};
     \node[round,below right=-5mm and 10mm of 22,font = \small] (33) {$(2,3)$};
     \node[round,below right=10mm and 10mm of 22,font = \small] (34) {$(2,4)$};


    \draw[->] (11) -- node[above=2mm] {`left'}  (21);
    \draw[->] (11) -- node[below=2mm]{'right'} (22);

    \draw[->,color = blue] (21) -- (31);
 	\draw[->,color = blue] (21) -- (32);
    \draw[->,color = blue] (21) -- (33);
    \draw[->,color = blue] (21) -- (34);

    \draw[->,color = red] (22) -- (31);
 	\draw[->,color = red] (22) -- (32);
    \draw[->,color = red] (22) -- (33);
    \draw[->,color = red] (22) -- (34);

    \draw[->] (31) [out=40,in=0,loop] to coordinate[pos=0.1](aa) (31);
    \draw[->] (32) [out=40,in=0,loop] to coordinate[pos=0.1](aa) (32);
    \draw[->] (33) [out=40,in=0,loop] to coordinate[pos=0.1](aa) (33);
    \draw[->] (34) [out=40,in=0,loop] to coordinate[pos=0.1](aa) (34);
\end{tikzpicture}
       \caption{ The ``left'' (blue) instance and ``right'' (red) instance embed two copies of the instance from Lemma~\ref{lem:informal_single_state_lb}. In each copy, the agent begins in stage $s=0$, and moves to states $s \in [2n]$, $n = 2$. Different actions correspond to different probability distributions over next states $s \in [2n]$. States $s \in [2n]$ are absording, and rewards are action-independent. Lemma~\ref{lem:informal_single_state_lb} shows that this construction requires the learner to learn $\Omega(n)$ bits about the transition probabilities $p(\cdot|0,a)$. By embedding this coonstruction into a large MDP, this construction forces the learner to learn the transition probabilities at $n = 2$ states, $\{(x, \log_2 n) : x \in [n]\}$. The learner can determinsitically access these states by appropriate choice of ``left'' and ``right'' actions. \label{fig:mdp_embed}}
    \end{subfigure}
\end{figure}

The core of our construction is a simple instance with a single
initial state $x_1 = 0$ and $2n$ absorbing states $s \in [2n]$; the
transition from states $0 \to s$ is described by a vector $q \in \R^{[2n]\times [A]}$, where $q(s, a)$ is the transition probability to state $s$ if action $a$ is taken at state $0$. 
We shall also restrict to vectors $q$ are
close to uniform, i.e.,
\begin{align}
\forall s,a, \quad \left|q(s,a) - \frac{1}{2n}\right| \le \frac{\epsilon}{2n} \label{eq:q_condition}
\end{align}

The learner is then tasked with learning near optimal policies for
reward vectors $r_{\nu}$ parametrized by $\nu \in [0,1]^{2n}$, which
assigns a state-dependent but action-independent reward $\nu(s)$ to
states $s \in [2n]$, and no reward to $x_1 = 0$. The  blue (``left'') transitions or red (``right'') transition in Figure~\ref{fig:mdp_embed} mirror this construction, which we
formalize in Definition~\ref{defn:hard}. We show that reward-free
exploration essentially forces the learner to learn the probability vectors
$q(\cdot,a)$ in total-variation distance for each $a
\in [A]$, yielding an $\Omega(nA/\epsilon^2)$ lower bound for this
construction.  A formal statement is of the following Lemma is given in Lemma~\ref{lem:single_instance_formal} in the appendix.

\begin{lemma}[Informal]\label{lem:informal_single_state_lb}
Suppose $S \ge C\log_2(A)$ for a universal constant $C > 0$. Suppose $\Alg$, when faced with the
instances described above (with $q$ satisfying
Eq.~\eqref{eq:q_condition}) successfully returns $\epsilon$-suboptimal
policies for exponentially many reward vectors with total failure
probability $1/2$. Then $\Alg$ requires $\Omega(SA/\epsilon^2)$
trajectories in expectation.
\end{lemma}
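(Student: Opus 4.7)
The plan is to reduce reward-free exploration on the single-initial-state instance to an action-by-action sign-pattern recovery problem, and then combine a per-action Assouad-style information-theoretic lower bound with the observation that each trajectory provides a sample of only one action at state $0$. I would first construct a family of $2^{nA}$ hard instances indexed by sign patterns $\sigma \in \{-1,+1\}^{[n]\times [A]}$: in instance $M_\sigma$, the transition from state $0$ under action $a$ is $q_\sigma(s,a) = \tfrac{1}{2n}(1 + \sigma(s,a)\epsilon)$ for $s \in [n]$ and $q_\sigma(n+s,a) = \tfrac{1}{2n}(1 - \sigma(s,a)\epsilon)$ for $s \in [n]$. This normalizes properly, satisfies the near-uniformity condition~\eqref{eq:q_condition} (after a harmless constant rescaling of the perturbation magnitude), and leaves the sign pattern of each action as an independent $n$-bit string. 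The test rewards are parameterized by $\tau \in \{-1,+1\}^n$: set $r_\tau(s) = \mathbf{1}[\tau(s)=+1]$ for $s \in [n]$ and $r_\tau(n+s) = \mathbf{1}[\tau(s)=-1]$ for $s \in [n]$, yielding $V^a_\sigma(r_\tau) = \tfrac{1}{2} + \tfrac{\epsilon}{2n}\langle \tau, \sigma(\cdot,a)\rangle$ and spanning an exponential-in-$n$ reward family.

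Next, I would use this reward family to force per-action sign-pattern recovery. Put a uniform prior on $\sigma$ and fix a target action $a^\star$. Under $r_\tau$ with $\tau = \sigma(\cdot,a^\star)$, the value of $a^\star$ is $\tfrac{1}{2}+\tfrac{\epsilon}{2}$, whereas for $a' \neq a^\star$ the inner product $\langle \sigma(\cdot,a^\star),\sigma(\cdot,a')\rangle$ is a sum of $n$ independent $\pm 1$ variables, and a sub-Gaussian tail plus a union bound over $A-1$ alternatives gives $\max_{a'\neq a^\star} |V^{a'}_\sigma - \tfrac{1}{2}| = O(\epsilon\sqrt{(\log A)/n})$ with high probability. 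The hypothesis $S \ge C \log_2 A$ (hence $n = \Theta(S) \gg \log A$) ensures this fluctuation is at most $\tfrac{\epsilon}{4}$, so the gap is $\Omega(\epsilon)$ and any $\epsilon$-suboptimal policy must select $a^\star$. Correctly selecting $a^\star$ as $\tau$ ranges over $\{-1,+1\}^n$ in turn pins down each sign column $\sigma(\cdot,a)$ to constant-fraction Hamming error. A direct computation gives $\mathrm{KL}(q_\sigma(\cdot,a)\,\|\,q_{\sigma'}(\cdot,a)) = O(\epsilon^2/n)$ for $\sigma, \sigma'$ differing in one bit of column $a$; hence after $T_a$ trajectories selecting $a$ at state $0$, the data KL is $O(T_a\epsilon^2/n)$, and Assouad's lemma along the $n$ bits of $\sigma(\cdot,a)$ forces $\E[T_a]=\Omega(n/\epsilon^2)$ for every $a$. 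Since state $0$ is the only non-absorbing state, each trajectory increments exactly one counter $T_a$, so $\sum_a T_a = T$, and summing yields $\E[T] \ge \Omega(nA/\epsilon^2) = \Omega(SA/\epsilon^2)$.

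The most delicate step will be the reduction in the second paragraph: translating \emph{``$\epsilon$-optimal for every $r_\tau$''} into \emph{``recover each $\sigma(\cdot,a)$ in Hamming distance''}. The technical challenges are (i) that the learner's allocation $T_a$ is data-dependent and must be controlled via prior-averaging over $\sigma$; and (ii) that no competing action $a'$ should accidentally align with $\tau$ closely enough to masquerade as $\epsilon$-optimal, which is precisely the sub-Gaussian concentration step and is where the hypothesis $S \ge C\log_2 A$ is used. Keeping these arguments consistent while respecting the perturbation budget in~\eqref{eq:q_condition} and paying only a constant-factor degradation in $\epsilon$ is the technical crux of the proof.
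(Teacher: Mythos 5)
The overall architecture of your argument matches the paper's: a per-action information bound of $\Omega(n/\epsilon^2)$, the observation that each trajectory samples exactly one action at state $0$ so the per-action counts sum to $K$, the KL calculation of order $\epsilon^2/n$ per perturbed coordinate pair, and handling the data-dependent allocation by averaging over a prior (the paper does this via a Wald-identity argument inside Lemma~\ref{lem:lb_fano_algest}). The gap is in your second paragraph: the reduction from ``$\epsilon$-suboptimal for every $r_\tau$'' to ``recover each column $\sigma(\cdot,a)$ to constant-fraction Hamming error'' is not merely unproven --- it is false for your choice of rewards, and it is exactly the obstruction the paper flags when it says a direct reduction from TV-estimation of $q$ to near-optimal planning is unavailable. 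The optimality constraint is binding only for the exponentially small fraction of $\tau$'s lying within Hamming distance $cn$ of some true column; for every other $\tau$, all $A$ actions have values within $O(\epsilon\sqrt{(\log A)/n})\ll\epsilon$ of one another and the returned policy is completely unconstrained, so it may be adversarial there. Concretely, take two candidate columns $w,w'$ for action $a$ with $d_H(w,w')=n/2$ and all other columns generic: the policy ``select $a$ under both $r_w$ and $r_{w'}$'' is $\epsilon$-optimal whether the truth is $w$ or $w'$ (under $r_w$ with truth $w'$, action $a$ has value $\approx 1/2$, but so does every competitor), so correctness does not force the algorithm to distinguish these two instances, no decoder can guarantee Hamming error below $n/4$, and Assouad cannot be invoked.

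The paper circumvents this with two devices you would need to import. First, the columns are restricted to a $\gamma$-uncorrelated packing $\{v_{a,j}\}_{j\in[M]}$ with $M=e^{\Omega(n)}$ per action (Lemma~\ref{lem:v_uncor}), and the estimation target is the discrete index $J_a\in[M]$, recovered \emph{exactly} and fed into a Fano argument (Lemma~\ref{lem:lb_fano_algest}) rather than bitwise into Assouad; $\log M=\Omega(n)$ still delivers the $n/\epsilon^2$ per action. Second, and crucially, the test rewards are not proportional to a single candidate $v_{a,j}$ but to the cross combination $2v_{a_1,j_1}-v_{a_2,j_2}$ (Lemma~\ref{lem:estimation_reduce_exploration}): for the true index $j=J_a$, action $a$ beats every other action by $\epsilon/12$ under \emph{every} such reward, while for a false $j$ the particular choice $(a_2,j_2)=(a_2,J_{a_2})$ makes $a$ lose to $a_2$ by $\epsilon/12$, so the decoder identifies $J_a$ by scanning over $(a_2,j_2)$ and testing whether the returned policy always favors $a$. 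Some such cross term is essential: with your plain indicator rewards $r_\tau$ the decoder has nothing to latch onto outside the forced Hamming balls, so the second half of your proof cannot be completed as written.
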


\begin{proof}[Proof Sketch]
Unfortunately, we cannot show a direct reduction from estimating $q$
in total variation to learning near optimal-policies. Instead, by
selecting appropriate reward vectors $r_\nu$, the algorithm can decode a
packing of $\exp(\Omega(n))$ transition vectors $q(\cdot,a)$ for each
action $a \in [A]$. By a variant of Fano's inequality, this leads to
the same $\Omega(nA/\epsilon^2)$ lower bound that would be obtained by
a direct reduction.
\end{proof}
Lemma~\ref{lem:informal_single_state_lb} differs from existing
$\Omega(SA/\epsilon^2)$ lower bounds in that the only quantities
unknown to the learner are the transition probabilities associated
with the single state $0$. This is in contrast to most existing lower
bounds where the learner needs to collect transition information at
multiple states. In particular, here the factor of $S$ arises because
the transition is \emph{to} $\Theta(S)$ states, while in most
constructions this factor arises because transitions \emph{from}
$\Theta(S)$ states must be estimated.


\subsection{Lower Bound for Multiple States}
To obtain an $\Omega(S^2 H^2A/\epsilon^2)$ lower bound, we embed $n=
\Omega(S)$ instances from above as the second-to-last layer of a
binary tree of depth $1 + \log_2 n$. All $n$ such instances share the
same $2n$-terminal leaves (assume $n$ is a power of $2$). We index
states by pairs $(x,\ell)$, where $\ell$ denotes the layer. From the
binary tree construction, there are at most $4n$ states, so $n =
\Omega(S)$. We assume that the MDP begins in stage $(0,1)$, and for
layers $\ell < \log_2 n$, action $1$ always moves ``left'' in the
tree, and actions $2,\dots,A$ always moves ``right'' in the
tree. Moreover, the leaf-states are all absorbing. The construction is
given in Figure~\ref{fig:mdp_embed}.

The only part unknown to the learner are the transition vectors
$\{q_x\}_{x \in [n]}$, where $q_x(s,a)$ describes the probability of
transitioning to leaf $(s,1+\log_2n)$ when taking action $a$ from
state $(x,\log_2 n)$.
We now index rewards by $(x,\nu) \in [n]\times [0,1]^{2n}$, where
$r_{x,\nu}$ places action-independent reward $1$ on state
$(x,\log_2n)$, action-independent reward $\nu(s)$ on states
$(s,1+\log_2 n)$, and reward $0$ everywhere.


Assume that the transitions $q_x$ satisfy the near-uniformity
condition of \eqref{eq:q_condition} for $\epsilon = 1/4H$. Then, for
reward $r_{x,\nu}$, the high reward of $1$ at $(x,\log_2 n)$ forces
any near-optimal policy to visit $(x,\log_2 n)$ and subsequently play
near optimal actions at this state.  However, playing optimally at
$(x,\log_2 n)$ under reward $r_{x,\nu}$ for all $\nu$ is equivalent to
reward-free learning of a single instance of the construction from
Lemma~\ref{lem:informal_single_state_lb}. By varying $x \in [n]$ for
the reward vectors $r_{x,\nu}$, the learner is forced to learn $n$ such
instances, yielding the $\Omega(n \cdot n A/\epsilon^2) = \Omega(S^2
A/\epsilon^2)$ lower bound. This can be improved to $\Omega(H^2 S^2
A/\epsilon^2)$ by using the absorbing states to create a chain of
$\Omega(H)$ rewards.


%


\section{Conclusion}

In this paper, we propose a new ``reward-free RL'' framework,
comprising of two phases. In the exploration phase, the learner first
collects trajectories from an MDP $\mathcal{M}$ \emph{without}
receiving any reward information.  After the exploration phase, the
learner is no longer allowed to interact with the MDP and she is
instead tasked with computing near-optimal policies under for
$\mathcal{M}$ for a collection of given reward functions.  This
framework is particularly suitable when there are many reward
functions of interest, or when we are interested in learning the
transition operator directly.

This paper provides an efficient algorithm that conducts
$\widetilde{O}(S^2A\mathrm{poly}(H)/\epsilon^2)$ episodes of
exploration and returns $\epsilon$-suboptimal policies for an
\emph{arbitrary} number of adaptively chosen reward functions. Our
planning procedure can be instantiated by any black-box approximate
planner, such as value iteration or natural policy gradient. We also
give a nearly-matching $\Omega(S^2AH^2/\epsilon^2)$ lower bound,
demonstrating the near-optimality of our algorithm in this setting.

We close with some directions for future work. On the technical level,
an interesting direction is to understand the sample complexity for
reward-free RL with a pre-specified reward function that is unobserved
during the exploration phase. Our lower bound proofs requires the
agent to be able to optimize all possible reward functions, so it does
not directly apply to this potentially easier setting. Can we use
$\tilde{O}(SA\poly(H)/\epsilon^2)$ samples in the exploration phase to
achieve this goal?


Another interesting direction is to design reward-free RL algorithms
for settings with function approximation. We believe our work
highlights and introduces some mechanisms that may be useful in the
function approximation setting, such as the concept of significant
states (Definition~\ref{def:significant}) and the coverage
guarantee~\eqref{eq:d_cond}. How do we generalize these concepts to
the function approximation setting?

We hope to pursue these directions in future work. 


\bibliographystyle{plainnat}
\bibliography{ref}

\clearpage

\appendix

\section{The \zrmax algorithm}
\label{app:rmax}
\rmax is a well-known PAC exploration algorithm
\cite{brafman2002r}. Here, we show that a modified version of \rmax,
which we call \zrmax, addresses the reward-free exploration
setting. The difference between \zrmax and \rmax is that we set
the reward in ``known'' states to $0$ instead of the true reward,
which explains the name. We briefly describe the algorithm and derive
the PAC bound relying heavily on prior arguments. Details about \rmax
and its analysis can be found in prior
work~\cite{brafman2002r,kakade2003sample}.


Following the reward-free exploration framework proposed in
Section~\ref{prelim}, the \zrmax algorithm first collects samples
without knowledge about reward (exploration) and then computes a policy
for each configuration of reward function (planning). We define set of
known states $\mathcal{K}$ to be
\begin{align*}
\mathcal{K}:=\left\{ \left( s,h \right) :\forall a\in \mathcal{A},N_h\left( s,a \right) \ge m \right\}
\end{align*}
where $N_h\left( s,a \right)$ counts how many times $s$ has been
visited and $a$ was taken in the $h$-th step and $m$ is a parameter to
be specified later. The set $\mathcal{K}$ contains states that we have
visited enough times to estimate the corresponding transition kernel,
and is typically referred to as the ``known set'' in the
literature. For $(s,h)$ not in $\mathcal{K}$, we call them
``unknown.''

Now \zrmax explores as follows. In each episode $i \in [N]$, the agent has a known set $\mathcal{K}_i$ and
\begin{enumerate}
  \item builds an empirical MDP $\hat{\cM}_{i,\mathcal{K}_i}$ with parameters
    \begin{align}
  \mathbb{P}_h\left( \cdot |s,a \right) =\begin{cases}
	\hat{\mathbb{P}}_{h,i}\left( \cdot |s,a \right) \,\, \text{if }\left( s,h \right) \in \mathcal{K}_i\\
	\mathds{1}\left\{ s'=s \right\} \,\,\text{otherwise}\\
\end{cases} \qquad
r_h\left( s,a \right) =\begin{cases}
	\text{0 if }\left( s,h \right) \in \mathcal{K}_i\\
	\text{1 otherwise}\\
\end{cases}
\label{zero-one-reward}
\end{align}
where $\P_{h,i}$ is the empirical estimation of $\P_h$ in the $i$-th episode. 
\item computes $\pi_i = \pi_{\hat{\cM}_{i,\mathcal{K}_i}}^{\star}$ on $\hat{\cM}_{i,\mathcal{K}_i}$ by value iteration.
\item samples a trajectory from the environment following $\pi_i$. 
\item constructs $\mathcal{K}_{i+1}$ for the next episode
\end{enumerate}

For the planning phase, we first sample an index $i \in [N]$ uniformly
and construct the MDP $\hat{\cM}_{i,\mathcal{K}_i}$. Then given reward
function, we can just perform value iteration on
$\hat{\cM}_{i,\mathcal{K}_i}$, which gives us a near optimal policy.

\subsection{Analysis}
A central concept for analyzing the sample complexity of \zrmax is the
escape probability, which is the probability of visiting the unknown
states. Formally,
\begin{align*}
	p_{\mathcal{K}}^{\pi}=\mathbb{P}_{\cM, \pi}\left\{ \exists \left( s_h,h \right) \,\,s.t.\left( s_h,h \right) \notin \mathcal{K} \right\}  
\end{align*}
The above definition also depends on the corresponding MDP $\cM$. Since
we only care about the escape probability w.r.t the true MDP $\cM$, we
will omit this dependence. The key observation is that there cannot be
too many episodes where the escape probability is large. The inuition
is that, if the escape probability is big, then the agent will soon
visit an unknown states. However, the agent can visit unknown states
at most $mSA$ times in total.

\begin{lemma}[Lemma 8.5.2 in \cite{kakade2003sample}]
Let $\pi_i$ be the policy followed in the $i^{\textrm{th}}$episode and
$\mathcal{K}_i$ be corresponding set of known states. Then with
probability $1-p$, there can be at most $\cO\left(
\frac{mSA}{\varepsilon}\log \frac{SANH}{p} \right) $ episodes
where $p_{\mathcal{K}_i}^{\pi _i}>\varepsilon $.
\end{lemma}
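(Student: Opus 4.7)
The approach is a potential/amortization argument combined with martingale concentration. The intuition is that each actual escape makes progress toward moving some unknown state-action-step triple into the known set, and since only $\cO(mSAH)$ such progress events can ever occur (each triple needs only $m$ visits before its state-step pair joins $\mathcal{K}$), escape-heavy episodes must eventually become rare.

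First I would introduce two indicator sequences: $X_i \in \{0,1\}$ for whether episode $i$ actually visits some $(s,h) \notin \mathcal{K}_i$, and $Z_i = \mathbf{1}\{p_{\mathcal{K}_i}^{\pi _i} > \varepsilon\}$. By construction $\E[X_i \mid \mathcal{F}_{i-1}] = p_{\mathcal{K}_i}^{\pi _i}$, and the quantity to bound is $N_\varepsilon := \sum_i Z_i$. For the deterministic budget step, I would observe that each triple $(s,a,h)$ is visited at most $m$ times while $(s,h) \notin \mathcal{K}$ (after which it never contributes to an escape again), so summed over triples $\sum_i X_i \le mSAH$ with probability one.

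Next I would carry out the concentration step. Since $\pi_i$ and $\mathcal{K}_i$ are both adaptively chosen, I would form the martingale differences $D_i := Z_i\bigl(X_i - \E[X_i \mid \mathcal{F}_{i-1}]\bigr)$, which are bounded in $[-1,1]$ with conditional variance at most $\E[X_i \mid \mathcal{F}_{i-1}]$. Freedman's inequality then yields, with probability $1-p$,
\begin{equation*}
\sum_i Z_i X_i \;\ge\; \tfrac{1}{2}\sum_i Z_i\, \E[X_i \mid \mathcal{F}_{i-1}] \;-\; C\log(N/p) \;\ge\; \tfrac{\varepsilon}{2} N_\varepsilon \;-\; C\log(N/p).
\end{equation*}
Combining with the deterministic upper bound $\sum_i Z_i X_i \le \sum_i X_i \le mSAH$ and rearranging produces a bound of the form $N_\varepsilon = \cO\bigl((mSA/\varepsilon)\log(SANH/p)\bigr)$, where the $H$ is absorbed into $m$ as is conventional in the \rmax sample-complexity analysis; the implicit $\log N$ on the right is handled by the standard trick of solving for $N_\varepsilon$ and noting $N_\varepsilon \le N$.

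The main obstacle is the concentration step: to achieve a $1/\varepsilon$ rather than $1/\varepsilon^2$ dependence, I need a variance-sensitive bound (Freedman or Bernstein) rather than a plain Azuma/Hoeffding-type argument, and I must verify that the sum of conditional variances is itself controlled by the first-moment sum $\sum_i Z_i \E[X_i \mid \mathcal{F}_{i-1}]$ so that a multiplicative-Chernoff-style conclusion is available. A clean alternative I would fall back on is a stopping-time argument that indexes episodes by successive escape events and applies a binomial-tail bound to the inter-arrival gaps between escapes, sidestepping the need for a variance inequality altogether.
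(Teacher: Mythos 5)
The paper gives no proof of this lemma---it is imported verbatim from \cite{kakade2003sample}---and your concentration step is exactly the standard argument behind it: $Z_i$ is predictable with respect to the filtration at the start of episode $i$, the conditional variance of $X_i$ is dominated by its conditional mean, and trading $\sqrt{2V\log(1/p)}$ for $V/2+C\log(1/p)$ in Freedman's inequality is what delivers the $1/\varepsilon$ rather than $1/\varepsilon^2$ dependence. That part of your proposal is sound and is the intended route.

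The genuine gap is in the deterministic budget $\sum_i X_i \le mSAH$. Your justification---``each triple $(s,a,h)$ is visited at most $m$ times while $(s,h)\notin\mathcal{K}$''---is false as stated: $(s,h)$ remains unknown until \emph{every} action has count at least $m$, so the agent can visit $(s,a_1,h)$ arbitrarily many times while $(s,h)$ stays unknown because $N_h(s,a_2)<m$. In that scenario the total number of escape visits is unbounded and the lemma itself would fail. What rescues the argument (and what \cite{kakade2003sample} assumes) is balanced wandering: at an unknown state the agent takes the \emph{least-tried} action, so every escape visit increments a counter that is still below $m$, and the total is at most $m$ per triple, hence $mSAH$ overall. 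This convention is needed here too, since under the reward assignment of Eq.~\eqref{zero-one-reward} all actions at an unknown state are tied and value iteration could otherwise break ties toward an already-saturated action forever. Relatedly, you should be upfront that the budget your argument yields is $mSAH$, not $mSA$: the known set is indexed by $(s,h)$ pairs, so the honest conclusion is $N_\varepsilon=\cO\bigl(\frac{mSAH}{\varepsilon}+\frac{\log(N/p)}{\varepsilon}\bigr)$, and the extra factor of $H$ cannot simply be ``absorbed into $m$'' without propagating into the downstream \zrmax sample-complexity calculation.
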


As a result, we have the following corollary.
\begin{corollary}
If we sample $i$ uniformly from $1$ to $K$, then with probability
$1-p -\cO\left( \frac{mSA}{\varepsilon N}\log \frac{SANH}{p}
\right) $, we have $p_{\mathcal{K}_i}^{\pi _i}\le \varepsilon$.
\end{corollary}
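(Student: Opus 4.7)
The plan is to derive the Corollary as an immediate consequence of the preceding Lemma via a counting argument followed by a union bound. First, I would invoke the Lemma to obtain a high-probability bound on the cardinality of the bad index set $B \defeq \{i \in [N] : p_{\mathcal{K}_i}^{\pi_i} > \varepsilon\}$, namely that with probability at least $1-p$ over the MDP/algorithm randomness,
\[
|B| \le M, \qquad \text{where} \quad M = \cO\!\left(\frac{mSA}{\varepsilon}\log\frac{SANH}{p}\right).
\]
Call this event $\fE$; then $\Pr[\fE] \ge 1-p$.

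Next I would introduce the independent randomness of sampling $i$ uniformly from $[N]$, and decompose the event of interest by conditioning on whether $\fE$ holds. On $\fE$, the conditional probability (over the uniform draw of $i$) that $p_{\mathcal{K}_i}^{\pi_i} > \varepsilon$ is exactly $|B|/N \le M/N$. On the complementary event, we upper bound the conditional probability by $1$. Combining these,
\[
\Pr\!\left[p_{\mathcal{K}_i}^{\pi_i} > \varepsilon\right]
\;\le\; \Pr[\fE^c] + \Pr[\fE]\cdot\frac{M}{N}
\;\le\; p + \frac{M}{N}
\;=\; p + \cO\!\left(\frac{mSA}{\varepsilon N}\log\frac{SANH}{p}\right),
\]
which, after taking complements, yields the claimed high-probability lower bound.

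There is essentially no technical obstacle: the only care required is keeping track of the two independent sources of randomness (the MDP/algorithm trajectory on the one hand, and the uniform draw of $i$ on the other) when applying the union bound. The conclusion is then just the counting identity that if at most $M$ indices in $[N]$ are ``bad,'' then a uniform index is bad with probability at most $M/N$, combined with the failure probability of the invoked Lemma.
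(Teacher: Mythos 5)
Your proposal is correct and is precisely the argument the paper intends: the corollary is stated as an immediate consequence of the preceding lemma, and the implicit proof is exactly your counting-plus-union-bound step (at most $M = \cO\bigl(\frac{mSA}{\varepsilon}\log\frac{SANH}{p}\bigr)$ bad episodes on the good event, so a uniform index is bad with probability at most $p + M/N$). The only cosmetic point is that the paper's statement says ``uniformly from $1$ to $K$'' while the bound uses $N$; you consistently use $N$, which matches the lemma and the displayed bound.
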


In what follows, we focus on a single ``good'' episode $i$ where
$p_{\mathcal{K}_i}^{\pi_i} \le \varepsilon$. Since we focus on a
single episode, let us denote $\mathcal{K}_i$ by $\mathcal{K}$ and
$\pi_i$ by $\pi _{\hat{\cM}_{\mathcal{K}}}^{\star}$. There are three
MDPs of interest, with important details presented in
Table~\ref{MDP-compare}.

\begin{table}[t]
	\center
	\begin{tabular}{l|l|l|l}
			 & $\cM$  & $\cM_{\mathcal{K}}$        & $\hat{\cM}_{\mathcal{K}}$       \\ \hline
	Known ($\mathcal{K}$) & $=\cM$ & $=\cM$        & $\approx \cM$       \\ \hline
	Unknown  & $=\cM$ & self loop & self loop \\ 
	\end{tabular}
	\caption{A comparison between the three MDPs involved.}
	\label{MDP-compare}
\end{table}

$\cM$ is the true MDP of interest, that we will use to measure the
performance of the policy we find in the planning phase.
$\hat{\cM}_{\mathcal{K}}$ is the MDP we use for computing policies in
both exploration and planning phases. The final MDP, $\cM_{\mathcal{K}}$
is an intermediate MDP which agrees with $\cM$ on the known set but
follows self-loops in the unknown states.  Our plan is to prove with
high probability, the value of any policy$\pi$ on $\cM$ and
$\hat{\cM}_{\mathcal{K}}$ are close, which implies the desired sample
complexity result using the same argument as in
Theorem~\ref{thm:plan}. 

The first step is to prove that for any policy $\pi$, the values on
$\cM_{\mathcal{K}}$ and $\hat{\cM}_{\mathcal{K}}$ are similar.
\begin{lemma}
	\label{rmax-first}
With probability $1-p$, for any policy $\pi$ and reward function $r$,
\begin{align*}
\left| \E_{s_1 \sim \P_1}[V_{1,\hat{\cM}_{\mathcal{K}}}^{\pi}(s_1;r)-V_{1,\cM_{\mathcal{K}}}^{\pi}(s_1;r)] \right|\le \cO\left( H^2\sqrt{\frac{S}{m}\log \frac{SANH}{p}} \right).
\end{align*}
\end{lemma}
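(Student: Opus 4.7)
}
The plan is standard: apply a simulation-style telescoping identity to write the value gap as a stage-wise sum of transition mismatches, observe that this sum is supported entirely on the known set, and bound each term by $L_1$ concentration of the empirical transition kernels.

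Fixing a policy $\pi$ and reward $r$, I would apply the Bellman equations on both $\cM_{\mathcal{K}}$ and $\hat{\cM}_{\mathcal{K}}$ to obtain the telescoping decomposition
\begin{align*}
V_{1,\hat{\cM}_{\mathcal{K}}}^{\pi}(s_1;r) - V_{1,\cM_{\mathcal{K}}}^{\pi}(s_1;r) = \sum_{h=1}^{H} \E_{\pi,\hat{\cM}_{\mathcal{K}}}\!\left[\bigl(\hat{\P}_h - \P_h^{\cM_{\mathcal{K}}}\bigr)V_{h+1,\cM_{\mathcal{K}}}^{\pi}(\cdot;r)\,(s_h,a_h)\right].
\end{align*}
By construction, the kernels of $\cM_{\mathcal{K}}$ and $\hat{\cM}_{\mathcal{K}}$ agree on unknown states (both self-loop), so the integrand vanishes outside $\mathcal{K}$ and the sum is effectively supported on known pairs $(s,h)\in\mathcal{K}$.

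Next, for each $(s,a,h,i)$ with $i\in[N]$, let $\hat{\P}_{h,i}(\cdot|s,a)$ denote the empirical transition at the start of episode $i$. On the event $(s,h)\in \mathcal{K}_i$ the count satisfies $N_h(s,a)\ge m$, so $\hat{\P}_{h,i}(\cdot|s,a)$ is built from at least $m$ i.i.d.\ samples of $\P_h(\cdot|s,a)$. Weissman's $L_1$ inequality together with a union bound over $(s,a,h,i)\in \cS\times \cA\times[H]\times[N]$ then yields, with probability at least $1-p$ and simultaneously for every known $(s,a,h,i)$,
\begin{align*}
\|\hat{\P}_{h,i}(\cdot|s,a) - \P_h(\cdot|s,a)\|_1 \le \cO\!\left(\sqrt{\tfrac{S}{m}\log(SANH/p)}\right).
\end{align*}
Since $r_h\in[0,1]$ forces $\|V_{h+1,\cM_{\mathcal{K}}}^{\pi}(\cdot;r)\|_\infty\le H$, H\"older's inequality bounds each summand in the telescoping identity by $H$ times this $L_1$ deviation; summing over the $H$ stages and taking expectation over $s_1\sim\P_1$ then delivers the claimed bound $\cO(H^2\sqrt{(S/m)\log(SANH/p)})$.

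The main obstacle is a measurability subtlety in the concentration step: $\mathcal{K}_i$ is itself a random, data-dependent object, so one cannot naively apply i.i.d.\ concentration to $\hat{\P}_{h,i}(\cdot|s,a)$ conditional on $(s,a,h)\in\mathcal{K}_i$. I would circumvent this by decoupling through the ``first $m$ visits'' device: for each $(s,a,h)$ consider the kernel estimated from precisely the first $m$ visits (a stopping-time argument on the filtration of visits shows these $m$ samples are genuine i.i.d.\ draws from $\P_h(\cdot|s,a)$), and then observe that $(s,h)\in \mathcal{K}_i$ implies those $m$ visits have already occurred so that $\hat{\P}_{h,i}(\cdot|s,a)$ coincides with a prefix-based average satisfying the concentration bound. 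Once this decoupling is in place, the union bound and the remainder of the calculation are routine.
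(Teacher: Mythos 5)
Your proposal is correct and follows essentially the same route as the paper: the paper likewise applies the simulation lemma (Lemma~\ref{value_difference}) to $\cM_{\mathcal{K}}$ and $\hat{\cM}_{\mathcal{K}}$, notes the transitions agree outside $\mathcal{K}$ so only known pairs contribute, and then bounds the surviving terms via $\ell_1$ concentration of the empirical kernels (with $\|V\|_\infty \le H$ and a union bound over $(s,a,h)$ and episodes, giving the $\log(SANH/p)$ factor). Your explicit treatment of the data-dependence of $\mathcal{K}_i$ via the ``first $m$ visits'' decoupling is a detail the paper leaves implicit, and is a welcome addition.
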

\begin{proof}
We apply Lemma~\ref{value_difference} to $\cM_{\mathcal{K}}$ and $\hat{\cM}_{\mathcal{K}}$, since the reward function is the same and the transition kernel is the same for unknown states,
\begin{align*}
	\left| \E_{s_1 \sim \P_1}[V_{1,\hat{\cM}_{\mathcal{K}}}^{\pi}(s_1;r)-V_{1,\cM_{\mathcal{K}}}^{\pi}(s_1;r)] \right|& \le \mathbb{E}_{M_{\mathcal{K}},\pi}\left\{ \sum_{h=1}^H{\mathds{1}\left\{ \left( s_h,h \right) \in \mathcal{K} \right\} |\left( \P_h -\hat{\P}_h \right)V_{h+1,\hat{\cM}_{\mathcal{K}}}^{\pi}}(s_h,a_h) |\right\} 
\\
& \le \cO\left( H^2\sqrt{\frac{S}{m}\log \frac{SANH}{p}} \right). \tag*\qedhere
\end{align*}
\end{proof}

The second step is to prove that for any policy $\pi$, the values on
$\cM_{\mathcal{K}}$ and $\cM$ are similar, which is less straightforward.
\begin{lemma}
With probability $1-p$ and $i$ is a ''good'' episode,
for any policy $\pi$,
\begin{align*}
	\left| \E_{s_1 \sim \P_1}[V_{1,\hat{\cM}_{\mathcal{K}}}^{\pi}(s_1;r)-V_{1,\cM_{\mathcal{K}}}^{\pi}(s_1;r)] \right|\le H^3\varepsilon +\cO\left( H^4\sqrt{\frac{S}{m}\log \frac{SANH}{p}} \right).
\end{align*}
\end{lemma}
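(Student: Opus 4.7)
The plan is to invoke the value difference (simulation) lemma to write $V^\pi_{\hat{\cM}_\calK,1}(s_1;r) - V^\pi_{\cM_\calK,1}(s_1;r)$ as a telescoping sum of per-step transition errors, integrated against the occupancy measure of $\pi$ in one of the MDPs. Because $\hat{\cM}_\calK$ and $\cM_\calK$ place \emph{identical} self-loop transitions on unknown pairs $(s,h)\notin\calK$, the per-step summand vanishes outside $\calK$, reducing the argument to estimation error of $\hat{\P}_h(\cdot|s,a)$ at pairs that by definition have been visited at least $m$ times. The escape probability enters only when we bound the $\cM_\calK$-occupancy mass that reaches $\calK$ at each step $h$, and is where the $H^3\varepsilon$ term comes from.

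\textbf{Key steps, in order.} First, apply the value difference lemma to obtain
\begin{align*}
V^\pi_{\hat{\cM}_\calK,1}(s_1;r) - V^\pi_{\cM_\calK,1}(s_1;r) = \sum_{h=1}^H \E_{\cM_\calK,\pi}\!\left[\mathds{1}\{(s_h,h)\in\calK\}\,(\hat{\P}_h-\P_h)\,V^\pi_{\hat{\cM}_\calK,h+1}(s_h,a_h)\right],
\end{align*}
using that the two MDPs share transitions on $\calK^c$ and the same reward $r$. Second, by an empirical Bernstein/multinomial tail bound with a union bound over all $(s,a,h)$, establish that for every known triple $\|\hat{\P}_h(\cdot|s,a)-\P_h(\cdot|s,a)\|_1 \le \cO(\sqrt{(S/m)\log(SANH/p)})$; combined with the crude bound $\|V^\pi_{\hat{\cM}_\calK,h+1}\|_\infty\le H$, this yields a per-step contribution of $\cO\bigl(H\sqrt{(S/m)\log(SANH/p)}\bigr)$ on trajectories that remain inside $\calK$ through step $h$. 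Third, handle trajectories that have escaped into $\calK^c$ by step $h$: since $\cM_\calK$ absorbs at unknown states and the reward is at most $1$ per step, the total contribution of such escaped trajectories across the horizon is at most $H^2$ times the escape probability of $\pi$ in $\cM_\calK$. Fourth, bound the escape probability of an \emph{arbitrary} $\pi$ by transferring from $\pi_i$: by RMax-style optimism, the exploration value $\E_{\cM_\calK,\pi}[\sum_h \mathds{1}\{(s_h,h)\notin\calK\}]$ is upper bounded by $V^{\pi_i}_{\hat{\cM}_\calK,1}$ evaluated under the 0/1 exploration reward of \eqref{zero-one-reward}, which is in turn $\le H\varepsilon$ on a good episode (by the preceding corollary and Lemma~\ref{rmax-first}); the transfer from $\hat{\cM}_\calK$ back to $\cM_\calK$ costs an additional $H\cdot\cO(H\sqrt{(S/m)\log(SANH/p)})$ factor, which feeds back into the second term of the stated bound. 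Fifth, sum over $h\in[H]$ and collect: the pure concentration term accumulates to $\cO(H^4\sqrt{(S/m)\log(SANH/p)})$ (an extra $H^2$ over the naive $H^2\sqrt{\cdot}$ rate arises because the transfer step in Step 4 multiplies, rather than adds, the concentration factor), and the escape contribution accumulates to $H^3\varepsilon$ after combining the $H^2$ of Step 3 with the $H\varepsilon$ escape bound of Step 4.

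\textbf{Main obstacle.} The delicate part is Step 4: extending the escape guarantee, which is given only for the exploration policy $\pi_i$, to every policy $\pi$ that might appear in the planning phase. I expect to handle this by exploiting the $0/1$ exploration reward, under which $\pi_i$ is (approximately) optimal for $\hat{\cM}_\calK$, so that \emph{any} $\pi$'s escape value in $\hat{\cM}_\calK$ is dominated by $\pi_i$'s; transporting this domination from $\hat{\cM}_\calK$ to $\cM_\calK$ requires one further application of the value difference lemma and re-uses the same concentration inequality, which is precisely what produces the extra factors of $H$ (turning $H^2\sqrt{\cdot}$ into $H^4\sqrt{\cdot}$ and $H\varepsilon$ into $H^3\varepsilon$) that distinguish this bound from the tighter Lemma~\ref{rmax-first}. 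A secondary but not negligible concern is verifying that the concentration event is strong enough to hold \emph{simultaneously} over all reward functions $r$ used in planning; this is immediate once we have uniform TV concentration of $\hat{\P}_h$ on $\calK$, since the reward $r$ enters only through the bounded value $V^\pi_{\hat{\cM}_\calK,h+1}$.
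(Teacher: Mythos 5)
Your key step --- bounding the escape probability of an \emph{arbitrary} policy $\pi$ by exploiting the $0/1$ exploration reward of \eqref{zero-one-reward}, under which $\E_{s_1\sim\P_1}V^{\pi}_{\cM_{\calK}}(s_1)$ is sandwiched between $p^{\pi}_{\calK}$ and $H p^{\pi}_{\calK}$, and then using optimality of $\pi_i$ on $\hat{\cM}_{\calK}$ together with Lemma~\ref{rmax-first} to transfer the good-episode guarantee $p^{\pi_i}_{\calK}\le\varepsilon$ to every $\pi$ --- is exactly the paper's argument, and your accounting of the $H$ factors ($p^{\pi}_{\calK}\le H\varepsilon+\cO(H^2\sqrt{(S/m)\log(SANH/p)})$, then an $H^2$ multiplier from the final value-difference step) reproduces the stated bound.

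However, your Steps 1 and 3 are internally inconsistent. You set up the value-difference decomposition for the pair $(\hat{\cM}_{\calK},\cM_{\calK})$. For that pair, trajectories that escape into $\calK^c$ contribute \emph{nothing} to the difference: both MDPs place the identical self-loop on unknown states and carry the same reward $r$, which is precisely why the indicator $\mathds{1}\{(s_h,h)\in\calK\}$ appears in your Step 1 display and why Lemma~\ref{rmax-first} already gives the sharper bound $\cO(H^2\sqrt{(S/m)\log(SANH/p)})$ with no $\varepsilon$ term and no escape-probability argument. Your Step 3 --- charging $H^2 p^{\pi}_{\calK}$ for escaped trajectories --- is therefore vacuous for the pair you chose, and the escape machinery of Step 4 has nothing to attach to. The escape probability is needed only when one endpoint of the comparison is the \emph{true} MDP $\cM$, which continues with its true dynamics and accrues true rewards after leaving $\calK$. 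The statement as printed repeats the left-hand side of Lemma~\ref{rmax-first} by what is evidently a typo; the surrounding text (``the values on $\cM_{\calK}$ and $\cM$ are similar'') and the paper's own proof make clear that the intended comparison is $V^{\pi}_{1,\cM_{\calK}}$ versus $V^{\pi}_{1,\cM}$. If you redo Step 1 for that pair --- the per-step error is now supported on $\calK^c$ rather than $\calK$, and is controlled by the occupancy of unknown states under $\cM_{\calK}$, which sums over $h$ to at most $H p^{\pi}_{\calK}$, times $\|V\|_\infty\le H$ --- then your Steps 4 and 5 go through unchanged and recover the paper's proof.
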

\begin{proof}
Notice that for any policy $\pi$, if we can upper bound the escape
probability, then $\cM_{\mathcal{K}}$ and $\cM$ must be similar for this
policy. Fortunately, this is actually the case, due to our setting of
the reward function in the exploration phase,
following~\eqref{zero-one-reward}. Then by definition for any $s$,
\begin{equation*}
\E_{s_1 \sim \P_1}V_{\cM_{\mathcal{K}}}^{\pi}(s_1)\ge p_{\mathcal{K}}^{\pi}, \quad \text{and} \quad Hp_{\mathcal{K}}^\pi \ge \E_{s_1 \sim \P_1}V_{\cM_{\mathcal{K}}}^\pi(s_1).
\end{equation*}
and using Lemma~\ref{rmax-first}, 
\begin{align*}
	\E_{s_1 \sim \P_1}V_{\hat{\cM}_{\mathcal{K}}}^{\pi}(s_1)\ge p_{\mathcal{K}}^{\pi}-\cO\left( H^2\sqrt{\frac{S}{m}\log \frac{SANH}{p}} \right)   
\end{align*}
However, since we are considering a good episode, we know that for the
optimal policy on $\hat{\cM}_{\mathcal{K}}$, $\pi
_{\hat{\cM}_{\mathcal{K}}}^{*}$, we have $p_{\mathcal{K}}^{\pi _{\hat{\cM}_{\mathcal{K}}}^{*}}\le \varepsilon$. 
Therefore,
\begin{align*}
	&H\varepsilon +\cO\left( H^2\sqrt{\frac{S}{m}\log \frac{SANH}{p}} \right)
	\ge Hp_{\mathcal{K}}^{\pi _{\hat{\cM}_{\mathcal{K}}}^{*}}+\cO\left( H^2\sqrt{\frac{S}{m}\log \frac{SANH}{p}} \right) 
	\\
	\ge& \E_{s_1 \sim \P_1}V_{\cM_{\mathcal{K}}}^{\pi _{\hat{\cM}_{\mathcal{K}}}^{*}}(s_1)+\cO\left( H^2\sqrt{\frac{S}{m}\log \frac{SANH}{p}} \right) 
	\ge \E_{s_1 \sim \P_1}V_{\hat{\cM}_{\mathcal{K}}}^{\pi _{\hat{M}_{\mathcal{K}}}^{*}}(s_1)
	\ge \E_{s_1 \sim \P_1}V_{\hat{\cM}_{\mathcal{K}}}^{\pi}(s_1)
	\\
	\ge& p_{\mathcal{K}}^{\pi}-\cO\left( H^2\sqrt{\frac{S}{m}\log \frac{SANH}{p}} \right)  
\end{align*}
and as a result
\begin{align*}
p_{\mathcal{K}}^{\pi}\le H\varepsilon +\cO\left( H^2\sqrt{\frac{S}{m}\log \frac{SANH}{p}} \right). 
\end{align*}
Now notice $\cM_{\mathcal{K}}$ and $\cM$ are only different on unknown
states, which will not influence the agent unless the agent escapes
from $\mathcal{K}$. Using Lemma~\ref{value_difference} on
$\cM_{\mathcal{K}}$ and $\cM$ we have
\begin{align*}
	\left| \E_{s_1 \sim \P_1}[V_{1,\hat{\cM}_{\mathcal{K}}}^{\pi}(s_1;r)-V_{1,\cM_{\mathcal{K}}}^{\pi}(s_1;r)] \right|\le H^3\varepsilon +\cO\left( H^4\sqrt{\frac{S}{m}\log \frac{SANH}{p}} \right). \tag*\qedhere
\end{align*}
\end{proof}

Finally we can put everything together. Again following the argument
in Theorem~\ref{thm:plan}, we have
\begin{theorem}
With probability $1-2p -\cO\left( \frac{mSA}{\varepsilon K}\log
\frac{SANH}{p} \right) $, given any reward function, the \zrmax
algorithm can output a policy $\pi$ such that
\begin{align*}
	\E_{s_1 \sim \P_1}[V_{1,\cM}^{*}(s_1)-V_{1,\cM}^{\pi}(s_1)]\le H^3\varepsilon +\cO\left( H^4\sqrt{\frac{S}{m}\log \frac{SANH}{p}} \right). 
\end{align*}
\end{theorem}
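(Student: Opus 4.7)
The plan is to follow the same decomposition pattern used in the proof of Theorem~\ref{thm:plan}, but with the intermediate role of the empirical MDP $\hat{\cM}_\calK$ replacing $\hat{\P}$, and to combine the two preceding lemmas to control the resulting evaluation errors. Concretely, let the planner sample $i \sim \text{Uniform}([N])$ and output $\pi = \pi^{\star}_{\hat{\cM}_{i,\calK_i}}$ computed by exact value iteration on $\hat{\cM}_{i,\calK_i}$ with the given reward $r$. I will condition on the joint event $\calE$ that (a) the sampled episode $i$ is ``good,'' meaning $p^{\pi_i}_{\calK_i} \le \varepsilon$, and (b) the empirical transition concentration bounds backing Lemma~\ref{rmax-first} and its successor hold simultaneously. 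By the corollary preceding the table and a union bound, $\Pr[\calE] \ge 1 - 2p - \cO\bigl(\tfrac{mSA}{\varepsilon N}\log\tfrac{SANH}{p}\bigr)$, which matches the probability stated in the theorem.

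On $\calE$, the key step is to combine the two lemmas via the triangle inequality to obtain, for \emph{any} policy $\pi'$,
\begin{align*}
\bigl|\,\E_{s_1 \sim \P_1}[V^{\pi'}_{1,\hat{\cM}_\calK}(s_1;r) - V^{\pi'}_{1,\cM}(s_1;r)]\,\bigr| \;\le\; H^3\varepsilon + \cO\!\left(H^4\sqrt{\tfrac{S}{m}\log\tfrac{SANH}{p}}\right),
\end{align*}
since Lemma~\ref{rmax-first} controls the gap between $\hat{\cM}_\calK$ and $\cM_\calK$ and the subsequent lemma controls the gap between $\cM_\calK$ and $\cM$, and the first bound is absorbed into the second. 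Crucially, both lemmas are stated uniformly over $\pi'$ and over $r$, which is what lets us plug in two different policies with the same probabilistic budget.

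With this uniform evaluation-error bound in hand, writing $\pi^{\star}$ for the optimal policy of $(\cM, r)$ and $\pi$ for the output policy, I would apply the same four-term decomposition as in Theorem~\ref{thm:plan}:
\begin{align*}
\E_{s_1\sim\P_1}\!\bigl\{V^{\pi^{\star}}_{1,\cM}(s_1;r) - V^{\pi}_{1,\cM}(s_1;r)\bigr\}
&\le \bigl|\E_{s_1\sim\P_1}\{V^{\pi^{\star}}_{1,\cM}-V^{\pi^{\star}}_{1,\hat{\cM}_\calK}\}\bigr| \\
&\quad + \E_{s_1\sim\P_1}\{V^{\pi^{\star}}_{1,\hat{\cM}_\calK}-V^{\pi}_{1,\hat{\cM}_\calK}\} \\
&\quad + \bigl|\E_{s_1\sim\P_1}\{V^{\pi}_{1,\hat{\cM}_\calK}-V^{\pi}_{1,\cM}\}\bigr|.
\end{align*}
The middle term is $\le 0$ because $\pi$ is the \emph{exact} optimal policy on $\hat{\cM}_\calK$ (value iteration has no optimization error), and the two evaluation errors are each bounded by the combined inequality above. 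Summing gives the advertised $H^3\varepsilon + \cO(H^4\sqrt{(S/m)\log(SANH/p)})$ suboptimality, with the factor of two absorbed into the big-$\cO$.

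The main obstacle is bookkeeping rather than conceptual: I need to be sure the uniform-over-policies guarantee from the two lemmas really does support plugging in both $\pi^{\star}$ (which depends on $\cM$ and $r$) and $\pi$ (which depends on $\hat{\cM}_{\calK_i}$, hence on the randomness of the exploration phase), without inflating the failure probability. Because both lemmas are established via concentration of $\hat{\P}$ on states in $\calK$ and a simulation-lemma expansion (Lemma~\ref{value_difference}) that is policy-agnostic, this uniformity is automatic once $\calE$ holds, so the argument goes through.
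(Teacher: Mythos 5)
Your proposal is correct and follows essentially the same route as the paper: the paper's proof consists of chaining Lemma~\ref{rmax-first} and the subsequent escape-probability lemma into a uniform-over-policies evaluation-error bound between $\hat{\cM}_{\mathcal{K}}$ and $\cM$, and then reusing the decomposition from Theorem~\ref{thm:plan} with zero optimization error from exact value iteration. Your reading of the second lemma as controlling the $\cM_{\mathcal{K}}$-versus-$\cM$ gap (despite the typo in its displayed statement) and your accounting of the failure probability both match the paper's intended argument.
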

Now we can set the parameters $m$ and $\varepsilon$. To make
$\E_{s_1 \sim \P_1}[V_{1,\cM}^{*}(s_1)-V_{1,\cM}^{\pi}(s_1)]\le \epsilon$, we need $m\ge
\Omega\left( \frac{SH^{8}}{\epsilon ^2}\log \frac{SAKH}{p}
\right) $ and $\varepsilon \le \cO\left( \epsilon/H^3
\right)$. This means we must set
\begin{align*}
N\ge \Omega \left( \frac{H^{11}S^2A}{\epsilon ^3p}\left( \log \frac{SANH}{p} \right) ^2 \right) 
\end{align*}
or equivalently,
\begin{align*}
N\ge \Omega \left( \frac{H^{11}S^2A}{\epsilon ^3p}\left( \log \frac{SAH}{p \epsilon} \right) ^2 \right)  
\end{align*}
This sample complexity is quite poor because it scales with
$\epsilon^{-3}$ and polynomially, rather than logarithmically, with
$1/p$.

\section{MaxEnt Exploration}
\label{app:max_ent}
Another approach for reward-free exploration was studied
in~\cite{hazan2018provably}. They consider the infinite horizon
discounted setting with discount factor $\gamma$, and they show that
with $\tilde{O}(\frac{S^2A}{\varepsilon^3(1-\gamma)^2})$ trajectories
of length $\tilde{O}(\frac{\log S}{\varepsilon^{-1}\log(1/\gamma)})$,
they can find a policy $\hat{\pi}$ such that
\begin{align*}
\frac{1}{S}\sum_{s} \log(d_{\hat{\pi}}(s)) \geq \max_{\pi} \frac{1}{S} \sum_s \log(d_{\pi}(s)) - \varepsilon
\end{align*}
where $d_\pi(s) = (1-\gamma)\sum_{t=1}^\infty \gamma^t d_{t,\pi}(s)$
and $d_{t,\pi}(s) = \Pr[s_t =s \mid \pi]$.

For reward free exploration, we want to use this guarantee to
establish a condition similar to the conclusion of
Theorem~\ref{thm:main_exp}. For the sake of contradiction, suppose
there exists some policy $\tilde{\pi}$ and some state $\tilde{s}$ such
that
\begin{align*}
\frac{d_{\tilde{\pi}}(\tilde{s})}{d_{\hat{\pi}}(\tilde{s})} > 4S.
\end{align*}
We want to show that the non-Markovian mixture policy
$(1-\alpha)\hat{\pi} + \alpha \tilde{\pi}$ for some $\alpha>0$
demonstrates that $\hat{\pi}$ violates its near-optimality guarantee
for the optimization problem. To do this, we lower bound the
difference in objective values between the mixture policy and
$\hat{\pi}$:
\begin{align*}
& \frac{1}{S}\sum_{s} \log((1-\alpha)d_{\hat{\pi}}(x) + \alpha
d_{\tilde{\pi}}(s)) - \log(d_{\hat{\pi}}(s)) = \frac{1}{S}\sum_{s}\log\left(1 - \alpha\frac{d_{\hat{\pi}(s)} - d_{\tilde{\pi}}(s)}{d_{\hat{\pi}}(s)}\right)\\
& \geq \frac{S-1}{S}\log(1-\alpha) + \frac{1}{S}\log(1 + \alpha(4S-1))\\
& \geq \frac{S-1}{S}\frac{-\alpha}{1-\alpha} + \frac{1}{S}\frac{\alpha(4S-1)}{1+\alpha(4S-1)}\\
& = \frac{\alpha}{S}\left(\frac{4S}{1+\alpha(4S-1)} - \frac{1}{1+\alpha(4S-1)} - \frac{(S-1)}{1-\alpha} \right).
\end{align*}
Here we are using that $\log(1-x_1+x_2)$ is monotonically increasing
in $x_2$ so we use the lower bound of $4S$ on $\tilde{s}$ and the
trivial lower bound of $0$ on all of the other states. We also use
that $\log(1+x) \geq \frac{x}{1+x}$, which holds for any $x > -1$.
The expression inside the parenthesis can be simplified to
\begin{align*}
\frac{3S + S\alpha - 4S^2\alpha}{(1-\alpha)(1+\alpha(4S-1))}.
\end{align*}
At this point we can see that if $\alpha \geq 1/S$ then this
expression is negative, so the mixture policy with large $\alpha$ does
not yield any improvement in objective. On the other hand, for any
$\alpha < 1/S$ then this inner expression is $\Theta(S)$. So if we set
$\alpha = \Theta(1/S)$ the overall improvement in objective is
$\Omega(1/S)$. This means that if we want establish the guarantee in
Theorem~\ref{thm:main_exp}, we must set $\varepsilon = 1/S$, at which
point the overall sample complexity scales with $S^5$, which is quite
poor.

Note that this calculation shows that $O(S^5)$ samples is sufficient
for the maximum entropy approach to find a suitable exploratory
policy, but we do not claim that it is necessary for this method. A
sharper analysis may be possible, but we are not aware of any such
results.


\section{Proof for Main Results}
In this section, we present proofs for results in Section \ref{sec:main_results}.

\subsection{Exploration Phase}
We begin with the proof of Lemma~\ref{zanette}, which is a simple modification of the Theorem 1 in \cite{zanette2019tighter}.
\begin{proof}[Proof of Lemma~\ref{zanette}]
WLOG, we can assume $s_1$ is fixed. This is because for $s_1$ stochastic from $\P_1$, we can simply add an artificial step before the first step of MDP, which always starts from the same state $s_0$, has only one action, and the transition to $s_1$ satisfies $\P_1$. This creates a new MDP with fixed initial state with length $H+1$, which is equivalent to the original MDP.

We use an alternative upper-bound for equation (156) in \cite{zanette2019tighter}, which gives:
\begin{align*}
  &\frac{1}{N_0H}\sum_{k=1}^{N_0}{\E_{\pi _k}\left[\left. ( \sum_{h=1}^H{r( s_h, a_h)}-V_{1}^{\pi _k}( s_1 ) ) ^2\right|s_1 \right]}
  \\
  \le& \frac{2}{N_0H}\sum_{k=1}^{N_0}{\E_{\pi _k}\left[\left. ( \sum_{h=1}^H{r( s_h, a_h)} ) ^2+( V_{1}^{\pi _k}( s_1 ) ) ^2\right|s_1 \right]}
  \\
  \stackrel{(i)}{\le}& \frac{2}{N_0H}\sum_{k=1}^{N_0}{\E_{\pi _k}\left[\left.  \sum_{h=1}^H{r( s_h, a_h )}  + V_{1}^{\pi _k}( s_1 ) \right|s_1 \right]}\\
  \le& \frac{4}{N_0H}\sum_{k=1}^{N_0} V_{1}^{\pi _k}( s_1 ) \le \frac{4}{H}V_{1}^{\star}( s_1 )  
\end{align*} 
where $\pi_k$ is the policy used in \EULER in the $k$-th episode. Step (i) is because using the reward function designed in Line~\ref{line:reward_def} in Algorithm~\ref{alg:main_exp}, we have all reward equal to zero except one state. Therefore, we have $\sum_{h=1}^H r( s_h, a_h )\le 1$ and $V_{1}^{\pi}( s_1 ) \le 1$. Therefore, we have replace the upper bound $\mathcal{G}^2$ in (156) of \cite{zanette2019tighter} by $4V_{1}^{\star}( s_1 ) $.

This allows us also replace the $\mathcal{G}^2$ in Theorem 1 of \cite{zanette2019tighter} by $4V_{1}^{\star}( s_1 )$, which gives the regret of algorithm (note \cite{zanette2019tighter} is for stationary MDP, while our paper is for non-stationary MDP, thus $S$ in \cite{zanette2019tighter} need to be replaced by $SH$ in our paper due to state augmentation, which creates new states as $(s, h)$):
\begin{equation*}
\sum_{k=1}^{N_0} [V^\star_1 (s_1) - V^{\pi_k}(s_1)] \le \tlO(\sqrt{V_{1}^{\star}( s_1 )SAT } + S^2AH^4)
\end{equation*}
Finally, plug in $T = N_0 H$, we finish the proof.
\end{proof} 

Now we can prove the main result in this section.
\begin{proof}[Proof of Theorem~\ref{thm:main_exp}]
In the following we can fix a state $(s,h)$ and consider the corresponding policy given by \EULER. Remember in our setting (Line~\ref{line:reward_def} in Algorithm~\ref{alg:main_exp}),  
$$
\E_{s_1\sim \P _1}V_{1}^{\star}( s_1 ) =\underset{\pi}{\max}P_{h}^{\pi}( s )
$$
Therefore the regret guarantee Lemma~\ref{zanette} implies
$$
\underset{\pi}{\max}P_{h}^{\pi}( s ) -\frac{1}{N_0}\sum_{\pi \in \Phi ^{( s,h )}}{P_{h}^{\pi}( s )}\le c_0 \sqrt{\frac{SAH\iota _0\cdot \max_{\pi} P_{h}^{\pi}( s )}{N_0}}+\frac{S^2AH^{4}\iota_0 ^3}{N_0}
$$
for some absolute constant $c_0$. Therefore, in order to make the following true
$$
\underset{\pi}{\max}P_{h}^{\pi}( s ) -\frac{1}{N_0}\sum_{\pi \in \Phi ^{( s,h )}}{P_{h}^{\pi}( s )}\le \frac{1}{2}\underset{\pi}{\max}P_{h}^{\pi}( s ) 
$$
We simply need to choose $N_0$ large enough so that:
\begin{align*}
  \sqrt{\frac{SAH\iota _0\cdot \max_{\pi} P_{h}^{\pi}( s )}{N_0}}&\le \underset{\pi}{c_1\cdot \max}P_{h}^{\pi}( s ) 
  \\
  \frac{S^2AH^{4}\iota_0 ^3}{N_0} &\le \underset{\pi}{c_1\cdot \max}P_{h}^{\pi}( s ) 
\end{align*}
for a sufficient small absolute constant $c_1$. Combining with the fact that for $\text{~$\delta$-significant~} (s,h)$, $\max_{\pi} P_{h}^{\pi}( s ) \ge \delta$, we know choosing $N_0=\cO(S^2AH^4\iota_0 ^3/\delta)$ is sufficient. As a result, we have
$$
\max_{\pi} \frac{P_{h}^{\pi}( s )}{ \frac{1}{N_0}\sum_{\pi \in \Phi ^{( s,h )}} P_{h}^{\pi}( s )}\le 2
$$
Since Algorithm \ref{alg:main_exp} sets all policy in $\Phi ^{( s,h )}$ to choose action uniformly randomly at $(s, h)$, this implies
$$
\max_{\pi,a} \frac{P_{h}^{\pi}( s,a )}{\frac{1}{N_0}\sum_{\pi \in \Phi ^{( s,h )}}P_{h}^{\pi}( s,a )}\le 2A
$$
Finally, we can apply the same argument for all $\delta$-significant $(s, h)$, and let $\Psi = \cup\{\Phi^{(s, h)}\}_{(s, h)}$ which gives:
\begin{equation*}
\forall \text{~$\delta$-significant~} (s,h), \quad \max_{\pi, a}\frac{P_{h}^{\pi}(s,a)}{\frac{1}{N_0 SH}\sum_{\pi \in \Psi}{P_{h}^{\pi}( s,a )}} \le 2SAH.
\end{equation*}
This finishes the proof.
\end{proof}

\subsection{Planning Phase}

The following lemma (E.15 in \cite{dann2017unifying}) will be useful to characterize the difference between $V_{h}^{\pi}( s;r)$ and $\hat{V}_{h}^{\pi}(s;r) $ .

\begin{lemma}[Lemma E.15 in \cite{dann2017unifying}]
  \label{value_difference}
  For any two MDPs $\cM'$ and $\cM''$ with rewards $r'$ and $r''$
and transition probabilities $\P'$ and $\P''$, the difference in values $V'$, $V''$ with respect to the same policy $\pi$ can
be written as
\begin{align*}
V'_{h}( s ) -V''_{h}( s ) =\E_{\cM'',\pi}&\left[\left. \sum_{i=h}^H[ r'_i( s_i,a_i ) -r''_i( s_i,a_i ) + (\P'_i -\P''_i) V'_{i+1}( s_i,a_i )]\right|s_h=s \right]
\end{align*}
\end{lemma}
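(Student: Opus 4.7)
The plan is to prove this identity by backward induction on $h$, equivalently by telescoping the one-step Bellman relation across the remaining horizon. The base case is immediate: by convention $V'_{H+1}(s) = V''_{H+1}(s) = 0$, and the right-hand side is the empty sum starting at $i=H+1$, so both sides vanish.

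For the inductive step I would start from the Bellman equations for $\pi$ under the two MDPs,
\begin{align*}
V'_h(s) &= \E_{a\sim\pi_h(\cdot|s)}\bigl[r'_h(s,a) + [\P'_h V'_{h+1}](s,a)\bigr], \\
V''_h(s) &= \E_{a\sim\pi_h(\cdot|s)}\bigl[r''_h(s,a) + [\P''_h V''_{h+1}](s,a)\bigr],
\end{align*}
subtract, and perform the standard ``add-and-subtract'' manipulation on the transition term:
\begin{align*}
\P'_h V'_{h+1} - \P''_h V''_{h+1} = (\P'_h - \P''_h)\,V'_{h+1} + \P''_h\,(V'_{h+1} - V''_{h+1}).
\end{align*}
The first summand is exactly the transition-correction piece appearing at step $i=h$ in the claim, while the second summand is an expectation under $\P''_h$ of the value gap at step $h+1$. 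Applying the inductive hypothesis to this inner gap rewrites it as an $\cM''$-expectation of the sum from $i=h+1$ to $H$, and the outer $\P''_h$ is absorbed by interpreting it as the first transition of a trajectory drawn from $\cM''$ starting at $(s, a)$. Combining with the reward difference and transition-correction at step $h$, and then taking the outer $a \sim \pi_h(\cdot\mid s)$ average, yields the stated identity with the sum now running from $i=h$.

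There is essentially no obstacle here: this is a standard performance-difference telescoping identity that requires only one line of algebra per horizon step plus the induction bookkeeping. The only subtlety worth flagging is the built-in asymmetry of the formula — namely that $V'_{h+1}$ (not $V''_{h+1}$) appears in the transition-correction term, while the outer expectation is taken under $\cM''$ — both of which are forced by the specific choice to add and subtract $\P''_h V'_{h+1}$ rather than $\P'_h V''_{h+1}$. A symmetric companion identity (with primes swapped throughout) holds by the same argument, but is not what is needed in the body of the paper, where the lemma is invoked with $V'$ equal to a value function on the estimated or perturbed MDP and trajectories rolled out under the second MDP.
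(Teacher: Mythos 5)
Your proof is correct. The paper does not prove this lemma itself---it is imported verbatim from Lemma E.15 of \cite{dann2017unifying}---and your backward-induction/telescoping argument via the decomposition $\P'_h V'_{h+1} - \P''_h V''_{h+1} = (\P'_h - \P''_h)V'_{h+1} + \P''_h(V'_{h+1}-V''_{h+1})$ is exactly the standard derivation given in that reference, including the correct handling of the asymmetry (expectation under $\cM''$, correction term involving $V'$).
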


With this decomposition in mind, we can prove Lemma~\ref{lem:plan}.

\begin{proof}[Proof of Lemma~\ref{lem:plan}]

In this section, we always use $\E$ to denote the expectation under the true MDP $\cM$.   Using Lemma~\ref{value_difference} on $\cM$ (the true MDP) and $\hat{\cM}$ (the empirical version), we have
\begin{equation*}
| \E_{s_1\sim \P _1}\{ \hat{V}_{1}^{\pi}(s_1;r)-V_{1}^{\pi}(s_1;r) \} | \le | \E_\pi\sum_{h=1}^H(\hat{\P}_h -\P_h)\hat{V}_{h+1}^{\pi}( s_h,a_h ) |
\le \E_\pi \sum_{h=1}^H |(\hat{\P}_h -\P_h)\hat{V}_{h+1}^{\pi}( s_h,a_h ) |
\end{equation*}


Let $\mathcal{S}_{h}^{\delta}:=\{ s:\underset{\pi}{\max}P_{h}^{\pi}( s ) \ge \delta \}$ be the set of $\delta$-significant states in the $h$-th step. We further have:
\begin{equation*}
\E_\pi |(\hat{\P}_h -\P_h)\hat{V}_{h+1}^{\pi}( s_h,a_h ) |
\le \underbrace{\sum_{a,s\in \mathcal{S}_{h}^{\delta}}| (\hat{\P}_h -\P_h)\hat{V}_{h+1}^{\pi}( s,a ) |P^{\pi} _h( s,a )}_{\xi_h}+
\underbrace{\sum_{a,s\notin \mathcal{S}_{h}^{\delta}}{| (\hat{\P}_h -\P_h)\hat{V}_{h+1}^{\pi}( s,a ) |P^{\pi} _h( s,a )}}_{\zeta_h}
\end{equation*}
By definition of insignificant state, we have:
\begin{equation}\label{eq:pf_insignificant}
\zeta_h \le H\sum_{a,s\notin \mathcal{S}_{h}^{\delta}}{P^{\pi} _h( s,a )} = H\sum_{s\notin \mathcal{S}_{h}^{\delta}}{P^{\pi} _h( s )} \le H\sum_{s\notin \mathcal{S}_{h}^{\delta}}{\delta} \le HS\delta.
\end{equation}
On the other hand, by Cauchy-Shwartz inequality, we have:
\begin{equation*}
\xi_h \le \left[\sum_{a,s\in \mathcal{S}_{h}^{\delta}}{| (\hat{\P}_h -\P_h)\hat{V}_{h+1}^{\pi}( s,a ) |^2P^{\pi} _h( s,a )}\right]^{\frac{1}{2}}
=\left[\sum_{a,s\in \mathcal{S}_{h}^{\delta}}{| (\hat{\P}_h -\P_h)\hat{V}_{h+1}^{\pi}( s,a ) |^2P^{\pi}_h(s)\pi_h(a|s)}\right]^{\frac{1}{2}}
\end{equation*}
We note since $\hat{V}_{h+1}^{\pi}$ only depends on $\pi$ at $h+1, \cdots, H$ steps, it does not depends on $\pi_h$. Therefore, we have:
\begin{align*}
\sum_{a,s\in \mathcal{S}_{h}^{\delta}}{| (\hat{\P}_h -\P_h)\hat{V}_{h+1}^{\pi}( s,a ) |^2P^{\pi}_h(s)\pi_h(a|s)}
\le& \max_{\pi'_h}\sum_{a,s\in \mathcal{S}_{h}^{\delta}}{| (\hat{\P}_h -\P_h)\hat{V}_{h+1}^{\pi}( s,a ) |^2P^{\pi}_h(s)\pi'_h(a|s)}\\
=& \max_{\nu:\cS \rightarrow \cA} \sum_{a,s\in \mathcal{S}_{h}^{\delta}} | (\hat{\P}_h -\P_h)\hat{V}_{h+1}^{\pi}( s,a ) |^2P^{\pi}_h(s)\mathds{1}\{ a=\nu( s ) \}
\end{align*}
where the last step is because the maximization over $\pi'_h$ achieves at deterministic polices.

Recall that by preconditions, we have \ref{eq:d_cond} holds for $\delta = \epsilon/(2SH^2)$. That is, for any $s\in \mathcal{S}_{h}^{\delta}$ we always have
$$
\max_{\tilde{\pi}}\frac{P^{\tilde{\pi}}_h( s,a )}{\mu _h( s,a )}\le 2SAH
$$
Therefore, for any $(s, a)$ pair, we can design a policy $\pi'$ so that $\pi'_{h'} = \pi_{h'}$ for all $h' < h$, and $\pi'_{h}(s)= a$. This will give that
\begin{equation*}
P^\pi_h(s) = P^{\pi'}_h(s) = P^{\pi'}_h(s, a) \le 2SAH\mu_h(s, a)
\end{equation*}
which gives:
\begin{align*}
&\sum_{a,s\in \mathcal{S}_{h}^{\delta}} | (\hat{\P}_h -\P_h)\hat{V}_{h+1}^{\pi}( s,a ) |^2P^{\pi}_h(s)\mathds{1}\{ a=\nu( s ) \} \\
\le& 2SAH \sum_{a,s\in \mathcal{S}_{h}^{\delta}} | (\hat{\P}_h -\P_h)\hat{V}_{h+1}^{\pi}( s,a ) |^2\mu_h(s)\mathds{1}\{ a=\nu( s ) \}\\
\le& 2SAH \sum_{s, a} | (\hat{\P}_h -\P_h)\hat{V}_{h+1}^{\pi}( s,a ) |^2\mu_h(s)\mathds{1}\{ a=\nu( s ) \}\\
=& 2SAH \E_{\mu_h}| (\hat{\P}_h -\P_h)\hat{V}_{h+1}^{\pi}( s,a ) |^2\mathds{1}\{ a=\nu( s ) \}
\end{align*}

By Lemma \ref{lem:sharp_concentration}, we have:
\begin{equation*}
\E_{\mu_h}| (\hat{\P}_h -\P_h)\hat{V}_{h+1}^{\pi}( s,a ) |^2\mathds{1}\{ a=\nu( s ) \} \le \cO\left( \frac{H^2S}{N}\log ( \frac{AHN}{p} ) \right) 
\end{equation*}
Therefore, combine all equations above, we have
$$
| \E_{s_1\sim \P _1}\{ \hat{V}_{1}^{\pi}( s_1;r ) -V_{1}^{\pi}( s_1;r ) \} |\le \cO( \sqrt{\frac{H^5S^2A}{N}\log ( \frac{AHN}{p} )} ) +H^2S\delta 
$$
Recall our choice $\delta =\epsilon /(2SH^2)$ and $N\ge c\frac{H^5S^2A}{\epsilon ^2}\log ( \frac{SAH}{p\epsilon} )$ 
for sufficiently large absolute constant $c$, which finishes the proof.
\end{proof}

\begin{lemma}\label{lem:sharp_concentration}
Suppose $\hat{\P}$ is the empirical transition matrix formed by sampling according to $\mu$ distribution for $N$ samples, then with probability at least $1-p$, we have for any $h\in[H]$:
\begin{equation*}
 \max_{G:\cS \rightarrow [0, H]} \max_{\nu:\cS \rightarrow \cA}
\E_{\mu_h}| (\hat{\P}_h -\P_h)G( s,a ) |^2\mathds{1}\{ a=\nu( s ) \} \le \cO\left( \frac{H^2S}{N}\log ( \frac{AHN}{p} ) \right) 
\end{equation*}
\end{lemma}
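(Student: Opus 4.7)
The approach combines three ingredients: reducing the supremum over $G \in [0,H]^S$ to a supremum over subsets of $\cS$, exploiting independence of samples at distinct $(s,a)$ pairs, and concentrating the resulting sum via Bernstein-style arguments. For any fixed $\nu$, observe that $T(G,\nu) \defeq \sum_s \mu_h(s,\nu(s)) |(\hat{\P}_h - \P_h) G(s,\nu(s))|^2$ is a positive semi-definite quadratic form $G^{\top} M G$ in $G$, so its maximum over the box $[0,H]^S$ is attained at a vertex $G = H\mathbf{1}_A$ for some $A \subseteq \cS$. This reduces the problem to controlling $H^2 \sup_{A \subseteq \cS,\, \nu \in \cA^{\cS}} \sum_s \mu_h(s,\nu(s)) |(\hat{\P}_h - \P_h)(A|s,\nu(s))|^2$.

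Since samples at different $(s,a)$ pairs are disjoint, the $S$ summands above are mutually independent conditional on the sample counts $\{N_h(s,a)\}$. I would invoke a multiplicative Chernoff bound to guarantee $N_h(s,\nu(s)) \ge N \mu_h(s,\nu(s))/2$ simultaneously for all ``well-sampled'' states (those with $N \mu_h(s,\nu(s)) \gtrsim \iota$), after a union bound. For each well-sampled $s$, Hoeffding's inequality applied to the Bernoulli random variable $\mathbf{1}\{s' \in A\}$ under $\P_h(\cdot|s,\nu(s))$ yields $|(\hat{\P}_h - \P_h)(A|s,\nu(s))|^2 \lesssim \iota/N_h(s,\nu(s))$, so $\mu_h(s,\nu(s)) |(\hat{\P}_h - \P_h)(A|s,\nu(s))|^2 \lesssim \iota/N$. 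For each ``poorly-sampled'' $s$ (with $N\mu_h(s,\nu(s)) \lesssim \iota$), the trivial bound $\mu_h(s,\nu(s)) \cdot 1 \le \iota/N$ suffices.

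Summing these per-state bounds over at most $S$ states gives $\sum_s \mu_h(s,\nu(s)) |(\hat{\P}_h - \P_h)(A|s,\nu(s))|^2 \lesssim S\iota/N$, and multiplying by $H^2$ yields the claimed $\tlO(H^2 S/N)$ bound. A final union bound over $(A,\nu) \in 2^{\cS} \times \cA^{\cS}$ and states $s$ is absorbed into $\iota$.

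The main obstacle is preventing the union bound over $2^S$ subsets $A$ and $A^S$ policies $\nu$ from inflating each summand by a multiplicative factor of $S$. A naive pointwise application of Hoeffding (one union bound per $(s,a,A,\nu)$ tuple) would add $S \log(2A)$ into the per-summand log factor, and summing over $S$ states would then yield $\tlO(H^2 S^2/N)$ rather than the sharper $\tlO(H^2 S/N)$ claimed. Avoiding this requires either Bernstein's inequality applied to the full sum $\sum_s Y_s$ (exploiting that $\sum_s \Var(Y_s) \lesssim 1/N^2$ thanks to independence and the per-term variance $\Var(Y_s) \lesssim \mu_h(s,\nu(s))/N^2$), or a chaining/Dudley-type argument over the subset class $2^{\cS}$, so that the $S$-dependence from the union bound appears only inside the $\log$ factor rather than as a multiplicative prefactor.
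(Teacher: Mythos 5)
Your overall plan---reducing the supremum over $G\in[0,H]^{S}$ to the vertices $H\mathbf{1}_A$ of the box (the objective is a positive semi-definite quadratic form, so this is exact), conditioning on the counts $N_h(s,a)$ to make the empirical conditionals independent across states, and then concentrating the resulting sum---is genuinely different from the paper's, and the first two ingredients are sound. But the decisive step, concentrating $\sum_s Y_s$ uniformly over the $2^{S}\cdot A^{S}$ choices of $(A,\nu)$ without paying an extra factor of $S$, is exactly the step you leave unexecuted, and the fix you sketch does not close it as stated. With a union bound over a class of log-cardinality $D=\Theta(S\log A)$, bounded-variable Bernstein gives a deviation of order $\sqrt{VD}+bD$, where $V=\sum_s\Var(Y_s)$ and $b$ is an almost-sure (or truncated) bound on a single summand. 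The variance term is fine: each $\Var(Y_s)\lesssim 1/N^2$ on the well-sampled event (your claimed $\Var(Y_s)\lesssim\mu_h(s,\nu(s))/N^2$ is not quite right, but harmless), so $\sqrt{VD}\lesssim S\sqrt{\iota}/N$. The range term is not: unconditionally $b$ can be as large as $\max_s\mu_h(s,\nu(s))\le 1$, and even after truncating via a per-state Hoeffding bound---which must itself hold uniformly over $(A,\nu)$ and therefore already carries the factor $S$ in its logarithm---you only get $b\lesssim S\iota/N$, so $bD\gtrsim S^2\iota^2/N$, precisely the loss you are trying to avoid. To make your route work you would need the sub-exponential (Bernstein--Orlicz) form of the inequality, using that each summand $N_h(s,\nu(s))^2\,|(\hat{\P}_h-\P_h)(A|s,\nu(s))|^2$ is the square of a sub-Gaussian sum and hence has $\psi_1$-norm $O(N_h(s,\nu(s)))$, which makes the additive term $O(D/N)=O(S\iota/N)$; alternatively a chaining argument over $2^{\cS}$. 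Neither is carried out, so as written the proof has a gap at its central step.

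For contrast, the paper sidesteps the issue with an ERM/self-bounding argument: it defines per-sample variables $Y_i=(\hat{\P}_hG(s_i,a_i)-G(s_i'))^2-(\P_hG(s_i,a_i)-G(s_i'))^2$ restricted to $a_i=\nu(s_i)$, observes that $\sum_iY_i\le 0$ because $\hat{\P}_hG$ is the empirical least-squares fit, that $\E Y$ equals the target quantity, and that $\Var(Y)\le 4H^2\,\E Y$. A single application of Bernstein over an $\varepsilon$-net of the $(G,\nu)$ class (log-cardinality $O(S\iota)$) then yields $\E Y\lesssim\sqrt{H^2\,\E Y\cdot S\iota/N}+H^2S\iota/N$, and solving the quadratic in $\sqrt{\E Y}$ gives the fast rate with only one factor of $S$. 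There the self-bounding variance does the work that sub-exponential concentration would have to do in your state-by-state decomposition.
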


\begin{proof}
Define random variable
$$
X_{i}=(\hat{\P}_hG(s_i,a_i)  -G( s'_{i} ) ) ^2-(\P_hG(s_i,a_i)  -G( s'_{i} ) ) ^2
$$
where $( s_i,a_i,s'_{i} ) \sim \mu_h \times \P _h( \cdot |s_i,a_i ) $ is the $i$-th sample in level $h$ we collect.

Also we define
$$
Y_{i}=X_{i}\mathds{1}\{ a_i=\nu ( s_i ) \}.
$$
To simplify the notation, when some property of $Y_{i}$ holds for any $i$, we just use the notation $Y$ to describe a generic $Y_{i}$.

We first state some properties of the random variables $Y_{i}$, which are justified at the end of the proof.

\begin{itemize}
  \item (Expection) $
  \E Y=\E_{\mu_h}| (\hat{\P}_h -\P_h)G( s,a ) |^2\mathds{1}\{ a=\nu( s ) \}$
\item (Empirical risk minimization) $\sum_{i=1}^N{Y_{i}}\le 0$
\item (Self-bounded) $\text{Var}\{ Y \} \le 4H^2 \E Y$
\end{itemize}

Given these three properties, now we are ready to apply Berstein's inequality to $(\sum_{i=1}^N{Y_{i}})/N$. Since we are taking maximum over $\nu$ and $G(s)$ and $\hat{\P}$ is random, we need to cover all the possible values of $\hat{\P}G(s,a) \mathds{1}\{ a=\nu ( s ) \}$ and $\P G(s,a) \mathds{1}\{ a=\nu ( s ) \}$ to $\varepsilon $ accuracy to make Bernstein's inequality hold. For $\nu$, there are $A^S$ deterministic policies in total. Given a fixed $\nu$, $\hat{\P}G(s,a) \mathds{1}\{ a=\nu ( s ) \}$ and $\P G(s,a) \mathds{1}\{ a=\nu ( s ) \}$ can be covered by $( H/\varepsilon ) ^{2S}$ values by boundedness condition because for $a \ne  \nu(s)$ they are always 0. The overall approximation error will be at most $12H\varepsilon$ by boundedness condition.

As a result, with probability at least $1-p/H$, for any $\nu$, $G(s)$ and $\hat{\P}$, 

\begin{align*}
  &\E_{\mu_h}| (\hat{\P}_h -\P_h)G( s,a ) |^2\mathds{1}\{ a=\nu( s ) \}
=\E Y
\le \E Y-\frac{1}{N}\sum_{i=1}^N{Y_{i}}
\\
\le& \sqrt{\frac{\text{2Var}\{ Y \} \log ( ( \frac{H}{\varepsilon} ) ^{2S}\cdot A^S \cdot\frac{H}{p} )}{N}}+\frac{H^2\log ( ( \frac{H}{\varepsilon} ) ^{2S}\cdot A ^S \cdot \frac{H}{p} )}{3N}+12H\varepsilon
\\
\le &\sqrt{\frac{\text{2Var}\{ Y \} [ 2S\log ( \frac{HA}{\varepsilon} ) +\log \frac{H}{p} ]}{N}}+\frac{H^2[ 2S\log ( \frac{HA}{\varepsilon} ) +\log \frac{H}{p} ]}{3N}+12H\varepsilon
\end{align*}

We can simply choose $\varepsilon =HS/36N$ and thus
\begin{align*}
  &\E_{\mu_h}| (\hat{\P}_h -\P_h)G( s,a ) |^2\mathds{1}\{ a=\nu( s ) \} 
  \\
  \le& \sqrt{8H^2\E_{\mu_h}| (\hat{\P}_h -\P_h)G( s,a ) |^2\mathds{1}\{ a=\nu( s ) \} \frac{2S\log ( \frac{36AN}{S} ) +\log \frac{H}{p}}{N}}+\frac{H^2[ 2S\log ( \frac{36AN}{S} ) +\log \frac{H}{p}+S ]}{3N}
\end{align*}
Solving this quadratic formula we get
$$
\E_{\mu_h}| (\hat{\P}_h -\P_h)G( s,a ) |^2\mathds{1}\{ a=\nu( s ) \}\le \cO( \frac{H^2S}{N}\log ( \frac{ANH}{p} ) ) 
$$

Since the above upper bound holds for arbitrary $\nu$, $G(s)$ and $\P_h$, 
\begin{equation*}
  \max_{G:\cS \rightarrow [0, H]} \max_{\nu:\cS \rightarrow \cA}
 \E_{\mu_h}| (\hat{\P}_h -\P_h)G( s,a ) |^2\mathds{1}\{ a=\nu( s ) \} \le \cO\left( \frac{H^2S}{N}\log ( \frac{AHN}{p} ) \right) 
 \end{equation*}
Taking union bound w.r.t. $h$, the claim holds for any $h$ with probability $1-p$.

~

Finally we give the proofs for the claimed three properties of $Y_i$. We begin with the expectation property:
\begin{align*}
\E Y=&\E_{s,a \sim \mu _h}\E_{s' \sim\P _h( \cdot |s,a )}\{ \mathds{1}\{ a=\nu ( s ) \} [(\hat{\P}_hG(s,a)  -G( s' ) ) ^2-(\P_hG(s,a)  -G( s' ) ) ^2] \}
\\
\overset{( i )}{=}&2\E_{s,a \sim \mu _h}\E_{s' \sim\P _h( \cdot |s,a )}\{ \mathds{1}\{ a=\nu( s ) \} (\hat{\P}_h -\P_h)G( s,a ) (\P_hG(s,a)  -G( s' ) ) \}
\\
&+\E_{\mu_h}| (\hat{\P}_h -\P_h)G( s,a ) |^2\mathds{1}\{ a=\nu( s ) \}
\\
\overset{( ii )}{=}&\E_{\mu_h}| (\hat{\P}_h -\P_h)G( s,a ) |^2\mathds{1}\{ a=\nu( s ) \}
\end{align*}
where $(i)$ is by $b^2-d^2=( b-d+d ) ^2-d^2=( b-d ) ^2+2b( d-b )$ with $b=\hat{\P}_hG(s,a)  -G( s' ) $ and $d=\P_hG(s,a)  -G( s' )$ and  $(ii)$ is because $\E_{s' \sim\P _h( \cdot |s,a )}\{ G( s' )\}=\P_hG(s,a)$.

The emipirical risk minimization property is true because the evaluation rule is essentially minimizing the empirical Bellman error for each $(s,a)$ pair separately. Mathematically,
$$
\hat{\P}_hG(s,a) =\underset{g}{\text{arg}\max}\sum_{i=1}^N{\mathds{1}\{ s_i=s,a_i=a \} ( g-G( s' ) ) ^2}
$$

The self-bounded property is because
\begin{align*}
  &\text{Var}\{ Y \} \le \E( Y ) ^2
\\
\overset{( i )}{=}&\E\{ \mathds{1}\{ a=\nu ( s ) \} [ (\hat{\P}_h -\P_h)G( s,a ) ] ^2[ (\hat{\P}_h +\P_h)G( s,a ) -2G( s' ) ] ^2 \}
\\
\le& 4H^2\E_{\mu_h}| (\hat{\P}_h -\P_h)G( s,a ) |^2\mathds{1}\{ a=\nu( s ) \}
\\
=&4H^2 \E Y
\end{align*}
where $(i)$ by $b^2-d^2=( b+d ) ( b-d ) $ with $b=\hat{\P}_hG(s,a)  -G( s' ) $ and $d=\P_hG(s,a)  -G( s' )$.
\end{proof}

\subsection{Proof of Theorem~\ref{thm:main}}

Putting everything together we can prove the main theorem.
\begin{proof}[Proof of Theorem~\ref{thm:main}]
  We only need to choose the parameter $\delta$ and $N_0$. From the proof of Lemma~\ref{lem:plan} we can see, we need $\delta =\epsilon/(2SH^2)$ and thus $N_0\ge cS^{3}AH^6 \iota ^3/\epsilon$. Since we need $N_0$ episodes for each $(s,h)$, the total number episodes required for finding $\Psi$ is $\cO( cS^{4}AH^7 \iota ^3/\epsilon) $, which gives the second term in \eqref{equ:main}. The proof is completed by combining Theorem~\ref{thm:plan}, which gives the first term in \eqref{equ:main}.
  \end{proof}

\subsection{Approximate MDP Solvers}
The convergence of NPG is well studied in \cite{agarwal2019optimality} (tabluar \& infinite horizon) and \cite{cai2019provably} (linear approximation). However, the episodic setting has some unique characters (For example, we not every state can be arrive at the first step and the corresponding analysis in \cite{agarwal2019optimality} does not apply). Therefore the guarantee given in Proposition~\ref{prop:NPG} is different.

Since we only need to prove the guarantee on the true MDP, we will not distinguish true MDP $\cM$ and estimated MDP $\hat{\cM}$ here. Remember the NPG is defined by
  $$
\pi_h ^{( 0 )}( a|s ) =\text{1/}A
$$
and
$$
\pi _{h}^{( t+1 )}( a|s ) =\pi _{h}^{( t )}( a|s ) \exp \{ \eta ( Q_{h}^{( t )}( s,a ) -V_{h}^{( t )}( s ) ) \} /Z_{h}^{( t )}( s ) 
$$
where $Q_{h}^{( t )}( s,a ) :=Q_{h}^{\pi ^{( t )}}( s,a ) $ is computed following the value iteration procedure. Similarly we define $V_{h}^{( t )}( s ) :=V_{h}^{\pi ^{( t )}}( s ) $. The normalization constant can be written explicitly as
  $$
  Z_{h}^{( t )}( s ) :=\sum_{a\in \mathcal{A}}{\pi _{h}^{( t )}( a|s ) \exp \{ \eta [ Q_{h}^{( t )}( s,a ) -V_{h}^{( t )}( s ) ] \}}
$$
Notice the definition of the normalization constant is not unique. Here we choose the form that makes the following proof simpler but different choice will essentially gives exactly the same algorithm.

We begin with a lemma showing that the value function monotonically increases.
\begin{lemma}[Lemma 5.8 in \cite{agarwal2019optimality}]
  \label{monotone}
  Following the NPG iterations,
  $$
  \E_{s_1\sim \P _1}\{ V_{1}^{( t+1 )}( s_1;r ) -V_{1}^{( t )}( s_1;r ) \} \ge \frac{1}{\eta}\sum_{h=1}^H{\E_{s_h\sim M,\pi ^{( t+1 )}}\{ \log Z_{h}^{( t )}( s_h ) \}}\ge 0
$$
\end{lemma}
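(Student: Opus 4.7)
The plan is to reduce the value improvement to a telescoping expression involving log-policy ratios plus the normalizing constants $Z_h^{(t)}$, and then isolate two nonnegative contributions: a per-step KL divergence and $\log Z_h^{(t)}$.

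First I would invoke the performance difference lemma, which gives
\begin{align*}
V_1^{(t+1)}(s_1;r) - V_1^{(t)}(s_1;r) = \sum_{h=1}^{H} \E_{(s_h,a_h)\sim \pi^{(t+1)}}\!\left[Q_h^{(t)}(s_h,a_h) - V_h^{(t)}(s_h)\,\middle|\,s_1\right].
\end{align*}
Then I would use the defining NPG update, rearranged as
\begin{align*}
\eta\bigl(Q_h^{(t)}(s,a) - V_h^{(t)}(s)\bigr) = \log\frac{\pi_h^{(t+1)}(a|s)}{\pi_h^{(t)}(a|s)} + \log Z_h^{(t)}(s),
\end{align*}
to substitute inside the expectation. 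This converts the per-step advantage into a log-ratio plus $\log Z_h^{(t)}(s_h)$.

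Next I would take the inner expectation over $a_h\sim\pi_h^{(t+1)}(\cdot|s_h)$; the log-ratio term becomes exactly $\mathrm{KL}\bigl(\pi_h^{(t+1)}(\cdot|s_h)\,\|\,\pi_h^{(t)}(\cdot|s_h)\bigr) \ge 0$, while $\log Z_h^{(t)}(s_h)$ is independent of $a_h$ and survives. Dropping the nonnegative KL term and taking the outer expectation over $s_1\sim\P_1$ yields the first inequality
\begin{align*}
\E_{s_1\sim\P_1}\bigl\{V_1^{(t+1)}(s_1;r)-V_1^{(t)}(s_1;r)\bigr\} \ge \frac{1}{\eta}\sum_{h=1}^{H}\E_{s_h\sim\pi^{(t+1)}}\bigl[\log Z_h^{(t)}(s_h)\bigr].
\end{align*}

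For the second inequality, I would apply Jensen's inequality to the convex function $\exp$: since $\pi_h^{(t)}(\cdot|s)$ is a probability distribution and $\sum_{a}\pi_h^{(t)}(a|s)\bigl(Q_h^{(t)}(s,a)-V_h^{(t)}(s)\bigr)=0$ by definition of $V_h^{(t)}$, we get
\begin{align*}
Z_h^{(t)}(s) = \sum_{a}\pi_h^{(t)}(a|s)\exp\bigl\{\eta(Q_h^{(t)}(s,a)-V_h^{(t)}(s))\bigr\} \ge \exp(0) = 1,
\end{align*}
so $\log Z_h^{(t)}(s)\ge 0$ pointwise, which closes the chain. None of the steps is a real obstacle; the only subtlety is making sure the substitution from the NPG update uses the particular normalization of $Z_h^{(t)}$ chosen in the algorithm description (so that the $V_h^{(t)}$ appears inside the exponential), which is exactly what makes Jensen's inequality yield $Z_h^{(t)}\ge 1$ rather than a less convenient bound.
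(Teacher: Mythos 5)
Your proof is correct and follows essentially the same route as the paper's: the performance difference lemma, the substitution of the NPG update to produce a KL term plus $\log Z_h^{(t)}$, dropping the nonnegative KL, and Jensen's inequality (equivalently, the paper's $\log \sum_a \pi^{(t)}_h(a|s)\exp\{\cdot\} \ge \eta\sum_a \pi^{(t)}_h(a|s)[Q^{(t)}_h(s,a)-V^{(t)}_h(s)] = 0$) to conclude $\log Z_h^{(t)}(s)\ge 0$. No gaps.
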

\begin{proof}
  By performance difference lemma \cite{kakade2002approximately},
  \begin{align*}
    &\E_{s_1\sim \P _1}\{ V_{1}^{( t+1 )}( s_1;r ) -V_{1}^{( t )}( s_1;r ) \}
    \\
    =&\sum_{h=1}^H{\E_{\pi ^{( t+1 )}}\{ \sum_{a\in \mathcal{A}}{\pi _{h}^{( t+1 )}( a|s ) [ Q_{h}^{( t )}( s,a ) -V_{h}^{( t )}( s ) ]} \}}
\\
=&\frac{1}{\eta}\sum_{h=1}^H{\E_{\pi ^{( t+1 )}}\{ \sum_{a\in \mathcal{A}}{\pi _{h}^{( t+1 )}( a|s_h ) \log \frac{\pi _{h}^{( t+1 )}( a|s_h ) Z_{h}^{( t )}( s_h )}{\pi _{h}^{( t )}( a|s_h )}} \}}
\\
=&\frac{1}{\eta}\sum_{h=1}^H{\E_{\pi ^{( t+1 )}}\{ \text{KL}( \pi _{h}^{( t+1 )}( s_h ) ||\pi _{h}^{( t )}( s_h ) ) +\log Z_{h}^{( t )}( s_h ) \}}
\\
\ge& \frac{1}{\eta}\sum_{h=1}^H{\E_{\pi ^{( t+1 )}}\{ \log Z_{h}^{( t )}( s_h ) \}}
\\
\overset{( i )}{\ge}&0
  \end{align*}
  where $(i)$ is by
  \begin{align*}
    \log Z_{h}^{( t )}( s ) =&\log \{ \sum_{a\in \mathcal{A}}{\pi _{h}^{( t )}( a|s ) \exp \{ \eta [ Q_{h}^{( t )}( s,a ) -V_{h}^{( t )}( s ) ] \}} \}
  \\
  \ge& \eta \sum_{a\in \mathcal{A}}{\pi _{h}^{( t )}( a|s ) [ Q_{h}^{( t )}( s,a ) -V_{h}^{( t )}( s ) ]}
  \\
  =&0
  \end{align*}
  because $V_{h}^{( t )}( s ) =\sum_{a\in \mathcal{A}}{\pi _{h}^{( t )}( a|s ) Q_{h}^{( t )}( s,a )}
$ by definition.
\end{proof}

Equipped with the monotone property, we can simply prove an upper bound for the cumulative regret, which immediately implies the convergence rate for the last iteration.

\begin{proof}[Proof of Proposition~\ref{prop:NPG}]
  Again by performance difference lemma,
\begin{align*}
  &\E_{s_1\sim \P _1}\{ V_{1}^{{\star}}( s_1;r ) -V_{1}^{( t )}( s_1;r ) \} 
  \\
  =  &\sum_{h=1}^H{\E_{\pi ^{\star}}\{ \sum_{a\in \mathcal{A}}{\pi _{h}^{{\star}}( a|s ) [ Q_{h}^{( t )}( s,a ) -V_{h}^{( t )}( s ) ]} \}}
  \\
  =&\frac{1}{\eta}\sum_{h=1}^H{\E_{\pi ^{\star}}\{ \sum_{a\in \mathcal{A}}{\pi _{h}^{{\star}}( a|s_h ) \log \frac{\pi _{h}^{( t+1 )}( a|s_h ) Z_{h}^{( t )}( s_h )}{\pi _{h}^{( t )}( a|s_h )}} \}}
  \\
  =&\frac{1}{\eta}\sum_{h=1}^H{\E_{\pi ^{\star}}\{ \text{KL}( \pi _{h}^{{\star}}( s_h ) ||\pi _{h}^{( t )}( s_h ) ) -\text{KL}( \pi _{h}^{{\star}}( s_h ) ||\pi _{h}^{( t+1 )}( s_h ) ) +\log Z_{h}^{( t )}( s_h ) \}}
\end{align*}

Now we can upper bound the regret of $\pi^{(T-1)}$ by upper bound the cumulative regret using Lemma~\ref{monotone}
\begin{align*}
  &\E_{s_1\sim \P _1}\{ V_{1}^{{\star}}( s_1;r ) -V_{1}^{( T-1 )}( s_1;r ) \}
  \\
  \le& \frac{1}{T}\sum_{t=0}^{T-1}{\E_{s_1\sim \P _1}\{ V_{1}^{{\star}}( s_1;r ) -V_{1}^{( t )}( s_1;r ) \}}
  \\
  \le& \frac{1}{\eta T}\sum_{t=0}^{T-1}{\sum_{h=1}^H{\E_{\pi ^{\star}}\{ \text{KL}( \pi _{h}^{{\star}}( s_h ) ||\pi _{h}^{( t )}( s_h ) ) -\text{KL}( \pi _{h}^{{\star}}( s_h ) ||\pi _{h}^{( t+1 )}( s_h ) ) +\log Z_{h}^{( t )}( s_h ) \}}}
  \\
  \le& \frac{1}{\eta T}\sum_{h=1}^H{\E_{\pi ^{\star}}\{ \text{KL}( \pi _{h}^{{\star}}( s_h ) ||\pi _{h}^{( 0 )}( s_h ) ) \}}+\frac{1}{\eta T}\sum_{t=0}^{T-1}{\E_{\pi ^{\star}}\{ \log Z_{h}^{( t )}( s_h ) \}}
  \\
  \le& \frac{H\log A}{\eta T}+\frac{1}{\eta T}\sum_{t=0}^{T-1}{\E_{\pi ^{\star}}\{ \log Z_{h}^{( t )}( s_h ) \}}
\end{align*}

Therefore we only need to bound $\log Z_{h}^{( t )}( s_h )$, where the technique in \cite{agarwal2019optimality} does not apply and we use a different approach. Notice for $x\le 1$,  $\exp \{ x \} \le 1+x+x^2$. So as long as $\eta \le \frac{1}{H}$,  $\eta [ Q_{h}^{( t )}( s,a ) -V_{h}^{( t )}( s ) ] \le 1$ and we have
\begin{align*}
  \log Z_{h}^{( t )}( s ) =&\log \{ \sum_{a\in \mathcal{A}}{\pi _{h}^{( t )}( a|s ) \exp \{ \eta [ Q_{h}^{( t )}( s,a ) -V_{h}^{( t )}( s ) ] \}} \}
  \\
  \le& \log \{ \sum_{a\in \mathcal{A}}{\pi _{h}^{( t )}( a|s ) \{ 1+\eta [ Q_{h}^{( t )}( s,a ) -V_{h}^{( t )}( s ) ] +\eta ^2[ Q_{h}^{( t )}( s,a ) -V_{h}^{( t )}( s ) ] ^2 \}} \}
  \\
  =&\log \{ 1+\eta ^2\sum_{a\in \mathcal{A}}{\pi _{h}^{( t )}( a|s ) [ Q_{h}^{( t )}( s,a ) -V_{h}^{( t )}( s ) ] ^2} \}
  \\
  \le& \eta ^2\sum_{a\in \mathcal{A}}{\pi _{h}^{( t )}( a|s ) [ Q_{h}^{( t )}( s,a ) -V_{h}^{( t )}( s ) ] ^2}
  \\
  \le& \eta ^2H^2
\end{align*}
Put everything together we have
$$
\E_{s_1\sim \P _1}\{ V_{1}^{{\star}}( s_1;r ) -V_{1}^{( T-1 )}( s_1;r ) \}\le \frac{H\log A}{\eta T}+\eta H^2
$$
This finishes the proof.
\end{proof}



\newcommand{\Xsingle}{\mathcal{X}_{\mathrm{single}}}
\newcommand{\Envsingle}{\Envir_{\mathrm{single}}}
\newcommand{\Psingle}{\Pclass_{\mathrm{single}}}
\newcommand{\nubar}{\overline{\nu}}
\newcommand{\Rsingle}{\Rclass_{\mathrm{single}}}
\newcommand{\Msingle}{\Mclass_{\mathrm{single}}}
\newcommand{\Envembed}{\Envir_{\mathrm{embed}}}
\newcommand{\Prsingle}{\Pr_{\mathrm{single}}}

\section{Proof of Lower Bound\label{app:proof_of_lower_bound}}

In this section, we prove our lower bound, Theorem~\ref{thm:main_lb}. First, we develop further notation in Section~\ref{ssec:lb_preliminaries} which will aid in distinguishing between multiple possible instances. Next, Section~\ref{ssec:lb_single_instance} states Lemma~\ref{lem:single_instance_formal}, the formal analogue of Lemma~\ref{lem:informal_single_state_lb}, which describes a lower bound for learning transitions at a single state. Then, Section~\ref{ssec:lb_nstates} embeds the construction to obtain an instance where the learner to learn transitions at $n$ states,  yielding the lower bound Theorem~\ref{thm:main_lb}. Finally, Section~\ref{ssec:prf_one_state_lb} details the proof of the $1$-state lower bound, Lemma~\ref{lem:informal_single_state_lb}.

\subsection{Preliminaries\label{ssec:lb_preliminaries}}
\paragraph{Environments, Transition Classes, Reward Classes}

	To formalize our embedding a one-state instance into a larger MDP, the following formalities are helpful: we define an environment $\Envir = (\calX,A,H)$ as a triple specifying a finite state space $\calX$, number of actions $A$, and horizon $H$. For a fixed environment, a transition class $\Pclass$ is a class of transition and initital state distributions, denoted by $\Pr$; a reward class $\Rclass$ is a family of reward functions $r: (\calX,A) \to [0,1]$. Given a reward vector $r$ and transition vector $\Pr$, we let $\mdp(\Pr,r)$ denote the with-reward MDP induced by $\Pr$ and $r$. We denote value of a policy $\pi$ on $\mdp(\Pr,r)$ by $V^{\pi}(\Pr,r)$.

	\paragraph{Reward-Free MDP Algorithm} A reward-free MDP algorithm $\Alg$ is algorithm which collects a random number $K$ trajectories from a given reward-free MDP, and then, when given a sequence of reward vectors $r^{(1)},r^{(2)},\dots,r^{(N)}$, returns a sequence of policies $\pi^{(1)},\pi^{(2)},\dots, \pi^{(N)}$.  We let $\Exp_{\Pr,\Alg}[\cdot]$ denote the expectation under the joint law prescribed by the explortion phase of algorithm $\Alg$ and transition operator $\Pr$.

	\paragraph{Correctness} Given $\epsilon,p \in (0,1)$, say that a reward-free MDP algorithm $(\epsilon,p,)$-learns a a problem class $\Mclass := (\Envir,\Rclass,\Pclass)$ if, for any transition operator $\Pr \in \Pclass$,  for any finite sequence of reward vectors $r^{(1)},\dots,r^{(N)} \in \Rclass$, $\Alg$ returns a sequence policies $\pi^{(1)},\dots, \pi^{(N)}$, such that, with probability $1-p$, the following holds
	\begin{align*}
	V^{\pi^{(i)}}(\Pr,r^{(i)}) \ge \max_{\pi} V^{\pi}(\Pr,r^{(i)}) - \epsilon, \quad \forall i \in [N].
	\end{align*}
	For the lower bound, we allow the policies $\pi$ prescribed by $\Alg$ to be arbitrary randomized mappings form observed histories, that is, $\Alg$ selects a random seed $\xi$ from some distribution; that is the policy at stage $h$ is a map
	\begin{align*}
	\pi_h: (s_1,\dots,s_h,a_1,\dots,a_{h-1},\xi) \to [A].
	\end{align*}

\subsection{Learning A Single Instance \label{ssec:lb_single_instance}}
	In this section, we define a triple $(\Envir,\Rclass,\Pclass)$ on $\BigOh{n}$-states which forces the learner to spend $\Omega(nA/\epsilon^2)$ trajectories to learn the transition probabilities at a given state.

	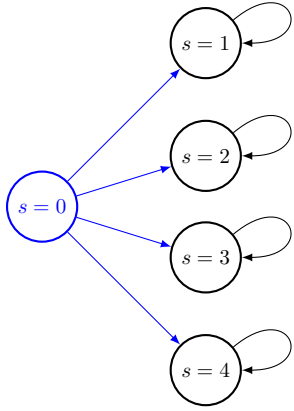
\begin{figure}
        \centering

 \begin{tikzpicture}[auto,node distance=8mm,>=latex, font = \small,scale=0.8, every node/.style={scale=0.8}]

    \tikzstyle{round}=[thick,draw=black,circle]

    \node[round,color = blue] (s0) {$s=0$};
    \node[round,above right=15mm and 15mm of s0,font = \small] (s1) {$s= 1$};
    \node[round,above right=0mm and 15mm of s0,font = \small] (s2) {$s=2$};
    \node[round,below right=0mm and 15mm of s0,font = \small] (s3) {$s= 3$};
    \node[round,below right=15mm and 15mm of s0,font = \small] (s4) {$s=4$};

    \draw[->,color = blue] (s0) -- (s1);
    \draw[->,color = blue] (s0) -- (s2);
    \draw[->,color = blue] (s0) -- (s3);
    \draw[->,color = blue] (s0) -- (s4);
    \draw[->] (s1) [out=40,in=0,loop] to coordinate[pos=0.1](aa) (s1);
    \draw[->] (s2) [out=40,in=0,loop] to coordinate[pos=0.1](aa) (s2);
    \draw[->] (s3) [out=40,in=0,loop] to coordinate[pos=0.1](aa) (s3);
    \draw[->] (s4) [out=40,in=0,loop] to coordinate[pos=0.1](aa) (s4);
\end{tikzpicture}
        \caption{The agent begins in stage $s=0$, and moves to states $s \in [2n]$, $n = 2$. Different actions correspond to different probability distributions over next states $s \in [2n]$. States $s \in [2n]$ are absording, and rewards are action-independent. Lemma~\ref{lem:informal_single_state_lb} shows that this construction requires the learner to learn $\Omega(n)$ bits about the transition probabilities $p(\cdot \mid 0,a)$.
        \label{fig:mdp_single}}
    \end{figure}

	As described in Figure~\ref{fig:mdp_single}, the hard instances consist of reward-free MDPs that begin in a fixed initial state, and transition to one of $2n$ terminal states according to an unknown transition distribution. The transitions are all taken to be $\epsilon/2n$-close to uniform in the $\ell_{\infty}$ norm, which helps with the embedding later on.  For simplicitiy, the rewards are taken to depend only on states but not on actions. We formalize these instances in the following definition:
	\begin{definition}[Hard Transitions and Rewards at Single State]\label{defn:hard} For parameters $n,A \ge 1$ and $A$, we define the problem class $\Msingle(\epsilon;n,A) : (\Envsingle(n),\Psingle(\epsilon;n,A),\Rsingle(n,A))$ as the triple with the following consitutents:
	\begin{enumerate}
	\item The environment $\Envsingle(n)$ is
	\begin{align*}
	\Envsingle(n,A) = (\Xsingle(n),A,2), \quad \text{where } \Xsingle(n) := \{0,1,\dots,2n\}
	\end{align*}
	\item For a given $\epsilon \in (0,1)$, we define the transition class $\Psingle(\epsilon;n,A)$ as the set of transition operator on $\Envsingle(n,A)$ , parameterized by vectors $q$, which begin at state $x_1 = 0$, and always transition to a state $x_2 \in \{1,\dots,2n\}$ with near-uniform probability, and remain at that state for the remainder of the episode. Formally,
	\begin{align*}
	\Psingle(\epsilon;n,A) &:=  \Big{\{} \Pr[x_1 = 0] = 1, |\Pr[x' = s \mid x = 0, a] - \tfrac{1}{2n}| \le \frac{1}{2n}\epsilon\, \\
	&\qquad\,\Pr[x' = s \mid x = s, a] = 1  \, \, \forall a \in [A], s \in[2n],\,\Big{\}}.
	\end{align*}
	\item We define the hard reward class $\Rsingle(n,A)$ as the set of rewards which as the set of rewards which assign $0$ reward to state $0$, and an action-independent reward to each state $s \in [2n]$. Formally, we define $\Rsingle(n,A) :=  \left\{r_{\nu}:\, r_{\nu}(0,\cdot) = 0,\, r_{\nu}(x,\cdot) = \nu[x], \quad \nu \in [0,1]^{2n}\right\}.$
	\end{enumerate}
	\end{definition}

	\begin{lemma}[Formal Statement of Lemma~\ref{lem:informal_single_state_lb}]\label{lem:single_instance_formal} Fix $\epsilon \le 1$, $p \le 1/2$, $A \ge 2$, and suppose that $n \ge c_0 \log_2 A$ for universal constants $c_0$. Then, there exists a distribution $\calD$ over transition vectors $\Pr \in \Psingle(\epsilon;n,A)$ such that any algorithm which $(\epsilon/12,p)$-learns the class $\Msingle(\epsilon;n,A)$ satisfies
	\begin{align*}
	\Exp_{\Pr \sim \calD}\,\Exp_{\Pr,\Alg}[K] \gtrsim \frac{nA}{\epsilon^2}\,.
	\end{align*}
	\end{lemma}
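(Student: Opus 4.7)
}
The plan is to combine a packing construction for near-uniform transitions with a Fano-style information-theoretic lower bound. First, I would construct a large packing of binary vectors: by Gilbert--Varshamov, there exists $\mathcal{Q}\subset\{0,1\}^{2n}$ of Hamming-weight-$n$ vectors with pairwise Hamming distance at least $n/2$ and cardinality $|\mathcal{Q}|\ge 2^{c_1 n}$ for a universal $c_1>0$. Each $\omega\in\mathcal{Q}$ defines the near-uniform distribution $q_\omega(s):=\tfrac{1}{2n}(1+\epsilon(2\omega_s-1))$, which lies in $\Psingle(\epsilon;n,A)$ since $|q_\omega(s)-\tfrac{1}{2n}|=\tfrac{\epsilon}{2n}$. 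The hard prior $\calD$ draws $\omega_1,\dots,\omega_A\in\mathcal{Q}$ i.i.d.\ uniformly and sets $\Pr[x'=s\mid x=0,a]=q_{\omega_a}(s)$. Writing $\theta=(\omega_a)_{a\in[A]}$, the parameter space has size $|\Theta|=|\mathcal{Q}|^A\ge 2^{c_1 nA}$.

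Next, I would exhibit a decoder that extracts $\theta$ from the algorithm's $\epsilon/12$-optimal policies on a carefully chosen family of rewards. The natural family in $\Rsingle$ is $\{\nu_\omega\}_{\omega\in\mathcal{Q}}$ with $\nu_\omega(s):=\mathds{1}[\omega_s=1]$. A direct computation gives the key identity $\sum_s q_{\omega'}(s)\nu_\omega(s)=\tfrac{1}{2}+\tfrac{\epsilon}{2}-d_H(\omega,\omega')\tfrac{\epsilon}{2n}$, where $d_H$ is Hamming distance. Conditional on the $\omega_a$ being pairwise distinct (which holds with probability $1-O(A^2/|\mathcal{Q}|)=1-o(1)$ by the assumption $n\ge c_0\log_2 A$), under reward $\nu_{\omega_a}$ action $a$ attains value $\tfrac{1}{2}+\tfrac{\epsilon}{2}$ while every other action attains at most $\tfrac{1}{2}+\tfrac{\epsilon}{4}$; the gap $\epsilon/4>\epsilon/12$ forces any $\epsilon/12$-optimal policy to play $a$ with probability at least $2/3$. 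The decoder sets $\hat\omega_a$ to the unique $\omega\in\mathcal{Q}$ whose matched reward elicits this signature from the algorithm.

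Then I would apply Fano's inequality. Assuming the decoder recovers $\theta$ with probability at least $1-p$, Fano gives $I(\theta;D)\ge (1-p)\log_2|\Theta|-1\gtrsim nA$ whenever $p\le 1/2$. For the upper bound, observe that the algorithm selects actions as a measurable function of history independent of $\theta$, so by the KL chain rule,
\[
I(\theta;D) \;\le\; \Exp_{\theta,\theta'\sim\calD}\bigl[\KL(P_\theta\|P_{\theta'})\bigr] \;\le\; \Exp_{\Pr,\Alg}[K]\cdot\max_{\omega,\omega'\in\mathcal{Q}}\KL(q_\omega\|q_{\omega'}) \;=\; O(\epsilon^2)\cdot\Exp[K].
\]
The per-sample bound uses that $q_\omega(s),q_{\omega'}(s)=\Theta(1/n)$ and $|q_\omega(s)-q_{\omega'}(s)|=\Theta(\epsilon/n)$ summed over the $O(n)$ differing coordinates, yielding a pairwise KL of $O(\epsilon^2)$. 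Combining the two inequalities produces $\Exp[K]\gtrsim nA/\epsilon^2$.

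The main obstacle is the decoder step: one must rule out ``lazy'' algorithms that play a single default action across many rewards. Concretely, while $\nu_{\omega_a}$ cleanly forces play of $a$, for a reward $\nu_\omega$ with $\omega\neq\omega_a$ sitting in the Voronoi cell of $\omega_a$ within $\{\omega_{a'}\}_{a'\in[A]}$, action $a$ may still be optimal, so the single-matched-reward test is ambiguous. The resolution is to augment $\{\nu_\omega\}$ with a family of pairwise ``contrast'' rewards of the form $\nu_{\omega,\omega'}(s):=\mathds{1}[\omega_s>\omega'_s]$ whose optimal-action identity changes sharply across the Voronoi boundary; each such test fails with probability at most $p$ (by the simultaneous $(\epsilon/12,p)$-learning guarantee over arbitrary reward sequences), so a union bound over the polynomially-many-in-$|\mathcal{Q}|$ tests preserves overall decoding success, at which point Fano closes the argument.
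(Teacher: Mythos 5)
Your overall architecture matches the paper's: a packing of near-uniform transition vectors assigned independently to each action, a decoder that recovers the packing index for each action from the $\epsilon/12$-optimal policies returned on a family of ``contrast'' rewards, and a Fano/KL argument showing each per-sample KL is $O(\epsilon^2)$ so that recovering $\Omega(nA)$ bits costs $\Omega(nA/\epsilon^2)$ episodes. The cosmetic differences (Gilbert--Varshamov vs.\ a probabilistic-method packing; $\{0,1\}$ vs.\ $\{-1,+1\}$ parametrization; one global Fano over $|\mathcal{Q}|^A$ hypotheses vs.\ the paper's per-action Fano combined via Wald's identity on the per-action visit counts) are all fine. You also correctly identify the crux: matched rewards $\nu_{\omega_a}$ alone cannot disambiguate within a Voronoi cell, so pairwise contrast rewards are needed.

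However, the contrast rewards you propose do not obviously work, and this is the one step of the proof that actually carries the difficulty. Writing $v = 2\omega - \vecone$, your reward $\nu_{\omega,\omega'}(s) = \mathds{1}[\omega_s > \omega'_s] = \tfrac14(1 + v_s - v'_s - v_s v'_s)$ is a \emph{nonlinear} function of the pair, so the value of a third transition vector $q_{\omega''}$ under $\nu_{\omega,\omega'}$ contains the three-way correlation $\sum_s v_s v'_s v''_s$. This quantity can be $\Theta(n)$ --- the same order as the signal term $\langle v'', v\rangle$ --- even when all pairwise Hamming distances are in $[n/2, 3n/2]$; for instance $\omega''$ can contain the entire difference set $\{s : \omega_s = 1, \omega'_s = 0\}$ while still being far from $\omega$ in Hamming distance, in which case $q_{\omega''}$ ties $q_\omega$ for optimality under $\nu_{\omega,\omega'}$ and the test is ambiguous. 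Pairwise distance guarantees from Gilbert--Varshamov do not control these triple correlations, and you never verify the ``true negative'' direction (that every wrong candidate is rejected by \emph{some} test, which requires contrasting the wrong candidate against the \emph{true} parameter of a different action so that that action strictly dominates). The paper sidesteps all of this by taking the contrast rewards to be \emph{affine} in the packing vectors, $\nu \propto 2v_{a_1,j_1} + v_{a_2,j_2} + 3\vecone$, so that every relevant value gap is a sum of pairwise inner products controlled by the $\gamma$-uncorrelated condition (its Lemma on inner products). Separately, your closing sentence --- ``each test fails with probability at most $p$, so a union bound over the polynomially-many-in-$|\mathcal{Q}|$ tests preserves decoding success'' --- is wrong as literally stated, since that is $e^{\Omega(n)}$ tests and $p$ is a constant; what saves the argument is that the $(\epsilon/12,p)$-learning guarantee is \emph{simultaneous} over the entire queried reward sequence on a single probability-$(1-p)$ event, with no union bound at all. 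Both issues are repairable (replace the indicator contrasts with affine combinations and invoke the simultaneous guarantee directly), but as written the identification step has a genuine hole.
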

	Due to its level of technical, the proof of Lemma~\ref{lem:single_instance_formal} is given in Section~\ref{ssec:prf_one_state_lb}.

\subsection{Learning Transitions at $n$ states: Proof of Theorem~\ref{thm:main_lb} \label{ssec:lb_nstates}}
Let $n \ge 2$ be a power of two, which we ultimately will choose to be $\Omega(S)$.  This means that $\lnot:= \log_2 n \in \N$ is integral, and define the layered state space:
	\begin{align*}
	\calX := \left\{(x,\ell): x \in [2^{\ell}], \,\ell \in \{0,1,\dots,\lnot+1\} \right\}
	\end{align*}
	The cardinality of the state space is bounded as $|\calX| \le 1 + 2+ \dots + n/2 + n +  2n \le 4n$. Hence, we shall chose $n$ to be the largest power of two such that $4n \le S$. Note then that $n = \Omega(S)$ as long as $S \ge C$ for a universal constant $C$. We will establish our lower bound for the environment $\Envembed = (\calX, A,H)$, that is, with state space $\calX$; the lower bound extends to an MDP wiht desired state space of size $S$ by augmenting the MDP with isolated, univistable states.

	\paragraph{Description of Transition Class }
	Let us define the class $\Pembed$.
	First, we require that the states $(x,\ell)$ for $\ell \in [\lnot]$ form a dyadic tree, whose transitions are all known to the learner. That is, for $\Pr \in \Pembed$,
	\begin{align*}
	&\Pr[s_1 = (0,1)] = 1\\
	&\Pr[s' = (x,\ell+1) \mid s = (x,\ell), a = 1] = 1, \quad \ell \in \{0,1\dots, \lnot -1\}\\
	&\Pr[x' = (2^{\ell} + x,\ell- 1) \mid s = (x,\ell), a] = 1, \quad \ell \in \{0,1,\dots,\lnot-1\}, \,a > 1.
	\end{align*}
	In words, $\Pr$ starts at $(1,1)$, moves leftward with action $a = 1$, and rightward with actions $a > 1$. At each state $s = (x,\lnot)$, the learn learner faces transitions described by some $\Prsingle^{(x)} \in \Psingle(\epsilon_0)$ for $\epsilon_0= 1/8H$: specifically, we stipulate that states $(x,\lnot)$ always transition to states $(x',\lnot+1)$, which are absorbing:
	\begin{align*}
	&\forall P \in \Pembed, x \in [n], \text{ there exists a } \Prsingle^{(x)} \in \Psingle(\epsilon_0) \text{ such that }:\\
	&\Pr[s' = (x',\lnot+1) \mid s = (x,\lnot), a] = \Prsingle^{(x)}[s' = x' \mid s = 0, a], \,\,\,\forall a \in [A], \,x' \in [2n].\\
	&\Pr[s' = (x',\lnot+1) \mid s = (x',\lnot+1), a] =1, \,\,\, \forall a \in [A]
	\end{align*}
	Thus, there is a bijection between instances $\Pr \in \Pembed$ and tuples $(\Prsingle^{(1)},\dots,\Prsingle^{(n)}) \in \Psingle^n$.



	\paragraph{Description of Reward Class}
	Define the reward class $\Rembed = \{r_{x,\nu}\}$ considering for action-independent rewards
	\begin{align*}
	r_{x,\nu}(s,a) = \begin{cases} 0 & s = (x',\ell),\, \ell < \lnot, \\
	0 & s = (x',\lnot) \text{ and } x' \ne x\\
	1 & s = (x,\lnot)  \\
	r_{\nu}[x'] & s = (x',\lnot + 1).
	\end{cases}
	\end{align*}
	In other words, the learner recieves reward $1$ at state $(x,\lnot)$, rewards $r_{\nu}$ at terminal states $(x',\lnot + 1)$, and $0$ elsewhere. We now establish that any policy which is $\epsilon$-optimal under reward $r_{x,\nu}$ must visit $(y,\lmax)$ with sufficiently high probability:

	\begin{lemma}\label{lem:visit_y} Suppose that a (possibly randomized, non-Markovian) policy $\pi$ satisfies, for $\epsilon \le 1/4$ and $\epsilon_0 \le 1/8H$,
	\begin{align*}
	V^{\pi}(\Pr,r_{x,\nu}) \ge \max_{\pi'} V^{\pi'}(\Pr,r_{x,\nu}) - \epsilon, \quad \forall i \in [N].
	\end{align*}
	Then, $\Pr^{\pi}[s_{\lnot+1} = (x,\lmax)] \ge \frac{1}{2}$.
	\end{lemma}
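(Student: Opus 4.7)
The strategy is to exploit the highly constrained reward structure of $r_{x,\nu}$ and reduce the suboptimality gap to a clean expression in the single quantity $p := \Pr^\pi[s_\lnot = (x,\lnot)]$. Because the tree transitions in layers $\ell \in \{1,\dots,\lnot-1\}$ are deterministic (different actions simply move ``left'' or ``right''), every policy arrives at some leaf $(y,\lnot)$ at stage $\lnot$ with probability one. Since $r_{x,\nu}$ vanishes on every tree state at layer $\ell<\lnot$ and on every leaf $(y,\lnot)$ with $y\ne x$, the reward collected up through stage $\lnot$ is exactly the indicator of the event $\{s_\lnot = (x,\lnot)\}$; after stage $\lnot$, the agent is trapped at an absorbing terminal state $(x',\lnot+1)$ and collects $\nu[x']$ per stage for the remaining $H-\lnot$ stages. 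Hence
\begin{equation*}
V^\pi(\Pr, r_{x,\nu}) \;=\; p \;+\; (H-\lnot)\,\E^\pi\!\big[\nu[s_{\lnot+1}]\big].
\end{equation*}

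Next, I would control the tail reward via the near-uniformity built into $\Psingle(\epsilon_0)$. For any leaf $y\in[n]$, any action $a$, and any $\nu\in[0,1]^{2n}$, the defining condition $|\Prsingle^{(y)}(x'\mid 0,a)-\tfrac1{2n}|\le \tfrac{\epsilon_0}{2n}$ gives
\begin{equation*}
\Big|\sum_{x'=1}^{2n}\nu[x']\,\Prsingle^{(y)}(x'\mid 0,a)\;-\;\bar\nu\Big|\;\le\;\epsilon_0,
\qquad \bar\nu := \tfrac{1}{2n}\sum_{x'=1}^{2n}\nu[x'].
\end{equation*}
Taking expectation over the random choice of $(y,a_\lnot)$ induced by $\pi$ yields $|\E^\pi[\nu[s_{\lnot+1}]]-\bar\nu|\le \epsilon_0$, so $V^\pi \le p + (H-\lnot)(\bar\nu+\epsilon_0)$. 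On the other hand, the deterministic policy that navigates to $(x,\lnot)$ and then plays the action maximizing $\sum_{x'}\nu[x']\Prsingle^{(x)}(x'\mid 0,\cdot)$ certifies $V^\star(\Pr,r_{x,\nu}) \ge 1 + (H-\lnot)(\bar\nu-\epsilon_0)$.

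Subtracting these two bounds, the common $(H-\lnot)\bar\nu$ term cancels (this cancellation is the whole point of the near-uniformity condition), leaving
\begin{equation*}
V^\star(\Pr,r_{x,\nu}) - V^\pi(\Pr,r_{x,\nu}) \;\ge\; (1-p) - 2(H-\lnot)\epsilon_0.
\end{equation*}
Combining with the assumed $\epsilon$-suboptimality and plugging in $\epsilon\le 1/4$, $\epsilon_0\le 1/(8H)$, and $H-\lnot\le H$ gives $1-p \le \epsilon + 2H\epsilon_0 \le \tfrac14+\tfrac14 = \tfrac12$, hence $p\ge 1/2$ as required.

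There is no real obstacle: the argument is essentially mechanical once the reward decomposition is written down. The only delicate point is verifying that the choice $\epsilon_0 = 1/(8H)$ is sharp enough to dominate the additive fluctuation of $\E^\pi[\nu[s_{\lnot+1}]]$ around $\bar\nu$ at the horizon scale, so that the reward-$1$ bonus at $(x,\lnot)$ does not get swamped by the tail contribution from layer $\lnot+1$.
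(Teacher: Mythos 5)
Your proof is correct and follows essentially the same route as the paper's: decompose the value into the probability of reaching $(x,\lnot)$ plus the terminal-layer contribution, use the near-uniformity of the leaf transitions to show the latter is within $H\epsilon_0 \le 1/8$ of $(H-\lnot)\bar\nu$ uniformly over policies (so it cancels when comparing against the deterministic policy that reaches $(x,\lnot)$), and conclude $1-p \le \epsilon + 2H\epsilon_0 \le 1/2$. The only differences are immaterial stage-indexing conventions ($H-\lnot$ versus the paper's $H-\lnot-1$).
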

	\begin{proof}
	Due to the structure of the transitions and rewards, the value of any policy $\pi$ is
	\begin{align*}
	V^{\pi}(\Pr,r_{x,\nu}) = \Pr^{\pi}[s_{\lnot + 1} = (x,\lnot) ] + (H - \lnot - 1) \sum_{x' = 1}^{2n} \nu(x') \Pr^{\pi}[s_{\lnot + 2} = (x,\lnot) ]
	\end{align*}
	Since the transitions from $(x',\lnot)$ to $(x'',\lnot+1)$ is $\epsilon_0/2n$-away from uniform in $\ell_{\infty}$, we can also see that $\Pr^{\pi}[s_{\lnot + 2} = (x,\lnot) ] \in (\frac{1}{2n} - \epsilon, \frac{1}{2n}+\epsilon)$. Thus, letting $\nubar := \frac{1}{2n}\sum_{x'=1}^{2n}\nu[x']$, we have
	\begin{align*}
	\left|(H - \lnot - 1) \sum_{x' = 1}^{2n} \nu(x') \Pr^{\pi}[s_{\lnot + 2} = (x,\lnot) ] - (H-\lnot - 1)\nubar\right| \le (H-\lnot - 1)\epsilon_0 \le \frac{1}{8}.
	\end{align*}
	This entails that
	\begin{align*}
	|V^{\pi}(\Pr,r_{x,\nu}) - (H-\lnot - 1)\nubar -  \Pr^{\pi}[s_{\lnot + 1} = (x,\lnot) ]| \le \frac{1}{8}.
	\end{align*}
	Consequently, by considering a policy $\pi'$ which always visits state $s_{\lnot+1} = (x,\lnot)$ (this can be achieved due to the deterministic behavior of the actions),
	\begin{align*}
	 \max_{\pi'} V^{\pi'}(\Pr,r_{x,\nu}) - V^{\pi}(\Pr,r_{x,\nu}) \ge  1 - \Pr^{\pi}[s_{\lnot + 1} = (x,\lnot) ]  - 2 \cdot \frac{1}{8} = \frac{3}{4} - \Pr^{\pi}[s_{\lnot + 1} = (x,\lnot) ].
	\end{align*}
	In order for the above to be at most $1/4$, we must have that $\Pr^{\pi}[s_{\lnot + 1} = (x,\lnot) ] \ge 1/2$.
	\end{proof}

	\paragraph{Concluding the Proof of Theorem~\ref{thm:main_lb}}

	To prove Theorem~\ref{thm:main_lb}, we use the following lemma:
	\begin{lemma}[Embedding Correspondence]\label{lem:embedding_correspondence} Suppose that $H \ge (2\lnot + 2)$. Then there exists a correspondence $\Psi$, which does not dependent on $\Pr \in \Pembed$ or $r_{y,\nu} \in \Rembed$ (but possibly on $\epsilon,n,A,H$) which operates as follows: Given a policy $\pi$ for $\Eembed$, $\Psi[\pi] = (\pi^{(1)},\dots,\pi^{(n)})$ returns an $n$-tuple of policies for $\Envsingle(n,A)$ with the following property: For any $\Pr \equiv (\Prsingle^{(1)},\dots,\Prsingle^{(n)}) \in \Pembed$ and $r_{x,\nu} \in \Rembed$,
	\begin{align*}
	\text{If }V^{\pi}(\Pr,r_{x,\nu}) \ge \max_{\pi'} V^{\pi'}(\Pr,r_{x,\nu}) - \epsilon, \quad  \forall x \in [n], \quad V^{\pi^{(x)}}(\Prsingle^{(x)},r_{\nu}) \ge \max_{\pi'} V^{\pi'}(\Prsingle^{(x)},r_{\nu}).
	\end{align*}
	\end{lemma}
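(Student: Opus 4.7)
The plan is to exploit the key structural feature of $\Eembed$: the first $\lnot$ layers form a deterministic binary tree, where action $a = 1$ routes ``left'' and actions $a > 1$ route ``right.'' Thus reaching any leaf state $(x, \lnot)$ is entirely under the learner's control, and the randomness in the first $\lnot+1$ time steps comes only from $\pi$, not from the transition operator $\Pr$. For each $x \in [n]$, I would define $\pi^{(x)}$ as the single-state policy whose action distribution at $s = 0$ is the marginal law of $a_{\lnot+1}$ under $\pi$ conditional on arriving at $(x, \lnot)$:
\[
\pi^{(x)}(a \mid 0) \;:=\; \Pr^{\pi}\left[a_{\lnot+1} = a \mid s_{\lnot+1} = (x, \lnot)\right],
\]
with an arbitrary fallback when $\Pr^{\pi}[s_{\lnot+1} = (x, \lnot)] = 0$. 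Because transitions in layers $1$ through $\lnot$ depend only on the actions chosen by $\pi$ and not on $\Pr$, this conditional probability is a function of $\pi$ and the tree structure alone, which is precisely what makes $\Psi$ independent of $\Pr \in \Pembed$ and of the reward.

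The main calculation relates $V^\pi(\Pr, r_{x, \nu})$ to $V^{\pi^{(x)}}(\Prsingle^{(x)}, r_\nu)$ via the same value decomposition used in the proof of Lemma~\ref{lem:visit_y}. Letting $p := \Pr^{\pi}[s_{\lnot+1} = (x, \lnot)]$ and using that $r_{x,\nu}$ places reward $1$ on the leaf $(x, \lnot)$, reward $0$ on all other internal and layer-$\lnot$ states, and terminal rewards $\nu[x']$ accrued over $H - \lnot - 1$ absorbing steps, one obtains
\[
V^\pi(\Pr, r_{x, \nu}) = p + (H - \lnot - 1)\,\Bigl[\,p \cdot V^{\pi^{(x)}}(\Prsingle^{(x)}, r_\nu) + \sum_{y \ne x} \Pr^{\pi}[s_{\lnot+1} = (y, \lnot)]\, u_y\,\Bigr],
\]
where each $u_y$ is the conditional average of $\nu$ under $\Prsingle^{(y)}$ and lies within $\epsilon_0$ of $\nubar := \frac{1}{2n}\sum_{x'} \nu[x']$ by the near-uniformity condition defining $\Psingle(\epsilon_0)$. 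The same near-uniformity bounds both $V^{\pi^{(x)}}(\Prsingle^{(x)}, r_\nu)$ and $\max_{\pi'} V^{\pi'}(\Prsingle^{(x)}, r_\nu)$ within $\epsilon_0$ of $\nubar$. Meanwhile, the embedded optimum is exactly $V^{\pi^{\star}}(\Pr, r_{x, \nu}) = 1 + (H - \lnot - 1) \max_{\pi'} V^{\pi'}(\Prsingle^{(x)}, r_\nu)$, achieved by deterministically routing to $(x, \lnot)$ and then playing optimally in the single-state instance.

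Subtracting these two expressions and invoking Lemma~\ref{lem:visit_y} (which gives $p \ge 1/2$ whenever $\pi$ is $\epsilon$-optimal for $r_{x, \nu}$) reduces $\max_{\pi'} V^{\pi'}(\Prsingle^{(x)}, r_\nu) - V^{\pi^{(x)}}(\Prsingle^{(x)}, r_\nu)$ to a combination of $\frac{\epsilon}{H - \lnot - 1}$ and an $O(\epsilon_0)$ term arising from the discrepancies $u_y - \nubar$ and the analogous gap between $V^{\pi^{(x)}}(\Prsingle^{(x)}, r_\nu)$ and $\nubar$. With $\epsilon_0 = 1/(8H)$ and $H \ge 2\lnot + 2$ as assumed, both pieces are suitably small, yielding the claimed single-instance near-optimality for every $x \in [n]$ simultaneously.

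The main obstacle I expect is handling the history-dependent and possibly randomized nature of $\pi$, since $\pi^{(x)}$ is required to act as a single-state policy. This is resolved by the observation that the layer-$\lnot+1$ transition is the last informative step in the embedded MDP: the terminal states are absorbing, so the only quantity affecting subsequent rewards is the marginal action distribution at $(x, \lnot)$, which is exactly what $\pi^{(x)}$ reproduces. A secondary subtlety is that the identity of the leaf visited may itself be random under $\pi$, but the deterministic tree dynamics ensure that this randomness is entirely internal to $\pi$ and does not depend on any $\Prsingle^{(y)}$, so the decomposition above remains valid uniformly over $\Pr \in \Pembed$ and the correspondence $\Psi$ is genuinely transition-independent.
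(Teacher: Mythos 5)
Your construction of $\Psi$ is exactly the paper's: $\pi^{(x)}(a\mid 0)$ is the marginal law of $a_{\lnot+1}$ conditioned on $s_{\lnot+1}=(x,\lnot)$, the independence from $\Pr$ follows from the known deterministic tree dynamics, and the reduction to single-instance suboptimality goes through Lemma~\ref{lem:visit_y} to get $p:=\Pr^{\pi}[s_{\lnot+1}=(x,\lnot)]\ge 1/2$. The one place you diverge is the key inequality. The paper compares $\pi$ to the policy that follows $\pi$ until reaching $(x,\lnot)$ and then switches to playing optimally; this gives $\max_{\pi'}V^{\pi'}(\Pr,r_{x,\nu})-V^{\pi}(\Pr,r_{x,\nu})\ge p\,(H-\lnot-1)\bigl(\max_{\pi'}V^{\pi'}(\Prsingle^{(x)},r_{\nu})-V^{\pi^{(x)}}(\Prsingle^{(x)},r_{\nu})\bigr)$ in one line, with no $\epsilon_0$ error at all. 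You instead subtract from the global optimum $1+(H-\lnot-1)\max_{\pi'}V^{\pi'}(\Prsingle^{(x)},r_\nu)$, which introduces cross terms $\sum_{y\ne x}p_y\bigl(\max_{\pi'}V^{\pi'}(\Prsingle^{(x)},r_\nu)-u_y\bigr)$, each bounded below by $-2\epsilon_0$. Here you need to be careful: you describe the final bound as ``a combination of $\epsilon/(H-\lnot-1)$ and an $O(\epsilon_0)$ term,'' but an additive $O(\epsilon_0)=O(1/H)$ slack in $\max_{\pi'}V^{\pi'}(\Prsingle^{(x)},r_\nu)-V^{\pi^{(x)}}(\Prsingle^{(x)},r_\nu)$ would swamp the $4\epsilon/H$ accuracy required when this lemma is fed into Lemma~\ref{lem:single_instance_formal} (there $\epsilon$ can be arbitrarily small relative to $1$). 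The cross terms must instead be absorbed by the $(1-p)$ gain coming from the reward of $1$ at $(x,\lnot)$: since $2\epsilon_0(H-\lnot-1)\le 1/4<1$, one has $(1-p)-(H-\lnot-1)\sum_{y\ne x}p_y\,2\epsilon_0\ge 0$, so the $\epsilon_0$ contributions cancel entirely and you recover $\max_{\pi'}V^{\pi'}(\Prsingle^{(x)},r_\nu)-V^{\pi^{(x)}}(\Prsingle^{(x)},r_\nu)\le \epsilon/(p(H-\lnot-1))\le 4\epsilon/H$, matching the paper (and the bound the statement evidently intends, despite the missing $-4\epsilon/H$ in its display). With that bookkeeping your argument is correct; the paper's comparison-policy trick simply sidesteps the issue.
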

	\begin{proof}[Proof of Lemma~\ref{lem:embedding_correspondence}] We directly construct the map $\Psi$. Observe that policies $\pi^{(x)}$ on the single state environment can be discred by a distribution over which actions $a \in [A]$ they select at the initial state $x$. Thus identifying policies as elements of $\Delta(A)$, we set
	\begin{align*}
	\pi^{(x)}[a] :=  \begin{cases} \Pr^{\pi}[ a_{\lnot + 1} = a \mid s_{\lnot + 1} = (x,\lnot)] & \Pr^{\pi}[s_{\lnot + 1} = (x,\lnot )] > 0  \\
	\text{arbitrary} & \text{otherwise}
	\end{cases}
	\end{align*}
	as the marginal distribution of actions selected when $s_{\lnot+1} = (x,\lnot + 1)$. Observe that the above conditional probabilites \emph{do not} depend on $\Pr \in \Pembed$ since the dynamics up to $h = \lnot + 1$ are identical for all instances. By considing a policy which coincides with $\pi$ until $s_{\lnot + 1} = (x,\lnot)$ and swtiches to playing optimally, we can lower bound the subopitmality of $\pi$ by 
	\begin{multline*}
	\max_{\pi'} V^{\pi'}(\Pr,r_{x,\nu})- V^{\pi}(\Pr,r_{x,\nu}) \ge \\
	\Pr^{\pi}[s_{\lnot + 1} = (x,\lnot)] \cdot (H - \lnot -1)\left(\max_{\pi'}V^{\pi}(\Prsingle^{(x)},r_{\nu})-V^{\pi^{(x)}}(\Prsingle^{(x)},r_{\nu})\right)
	\end{multline*}
	 In particular, if $\pi$ is $\epsilon \le 1/4$-suboptimal, then Lemma~\ref{lem:visit_y} ensures $\Pr^{\pi}[s_{\lnot + 1} = (x,\lnot )] \ge 1/2$. Since $H \ge 2(\lnot+1)$ by assumption, we have
	\begin{align*}
	\epsilon \ge \max_{\pi'} V^{\calM,\pi'}- V^{\calM,\pi} \ge \frac{H}{4}\left(\max_{\pi'}V^{\pi}(\Prsingle^{(x)},r_{\nu})-V^{\pi^{(x)}}(\Prsingle^{(x)},r_{\nu})\right),
	\end{align*}
	Therefore,  $\max_{\pi'}V^{\pi}(\Prsingle^{(x)},r_{\nu})-V^{\pi^{(x)}}(\Prsingle^{(x)},r_{\nu}) \le \frac{4\epsilon}{H}$, as needed. \end{proof}

	We now conclude with the proof of our main theorem:
	\begin{proof}[Proof of Theorem~\ref{thm:main_lb}.]
	\newcommand{\Algsingle}{\Alg_{\mathrm{single}}}

	 Let $\Alg$ be $(\epsilon,p)$-correct on the class $(\Envembed,\Pembed,\Rembed)$.  Then, for any $x \in [2n]$, we simulate obtain a $(4\epsilon/H,p)$-correct algorithm for $\Msingle(4\epsilon/H;n,A)$ as follows:
	\begin{enumerate}
	\item Exploration: Let $\calD$ be the distribution over $\Prsingle \in \Psingle$ from Lemma~\ref{lem:single_instance_formal}. Draw a tuple $\Pr^{\ne x} = (\Prsingle^{(x')})_{x' \ne x}$ of $n-1$ distributions i.i.d from $\calD$, and let $\Algsingle^{(x,\Pr^{\ne x})}$ denote the algorithm induced by embeding the instance in $\Msingle(4\epsilon/H;n,A)$ at stage $x$ of the embedding construction, running $\Alg$ on this embedded instance
	\item Planning: When queried given a reward vector $r_{\nu} \in \Rsingle$, use $\Alg$ to compute a policy $\pi$ for reward vector $r_{x,\nu} \in \Rembed$, and return the policy $\pi^{(x)}$ dicated by the corresponding $\psi$.
	\end{enumerate}
	Since $\Alg$ is $(\epsilon,p)$-correct and $\epsilon \le 1/4$, the correspondence $\Psi$ ensures that for any draw of $\Pr^{\ne x}$, $\Algsingle^{(x,\Pr^{\ne x})}$ is $(4\epsilon/H,p)$-correct. Let $K^{(x,\Pr^{\ne x})}$ denote the random number of episodes collected by $\Algsingle^{(x,\Pr^{\ne x})}$ in the exploration phase,  Thus, if $\epsilon \le \min\{\frac{1}{4},\frac{H}{48}\}$, and $n \ge c_0\log_2 A$ for the appropriate $c_0$ specified in Lemma~\ref{lem:single_instance_formal}, the Lemma~\ref{lem:single_instance_formal} entails
	\begin{align*}
	\Exp_{\Prsingle \sim \calD} \Exp_{\Prsingle,\Algsingle^{(x,\Pr^{\ne x})}}[K^{(x,\Pr^{\ne x})}] \gtrsim \frac{nAH^2}{\epsilon^2}.
	\end{align*}
	By taking an expectation over $\Pr^{\ne x}$, we have
	\begin{align*}
	\Exp_{\Pr^{\ne x}\sim \calD^{n-1},\Prsingle \sim \calD} \Exp_{\Prsingle,\Algsingle^{(x,\Pr^{\ne x})}}[K^{(x,\Pr^{\ne x})}] \gtrsim \frac{nAH^2}{\epsilon^2}.
	\end{align*} Note then that, if $N_K(x)$ denotes the number of times that the original $\Alg$ visits state $(x,\lnot)$, then, by Fubini's theorem and the contruction of $\Algsingle^{(x,\Pr^{\ne x})}$, the expectation of $N_K(x)$ under probabilities drawn uniform from $\calD^n$ is euqal to the expectation of $K^{(x,\Pr^{\ne x})}$ where $\Pr^{\ne x}$ is drawm uniformly from $\calD^{n-1}$, and then the transition $\Prsingle$ is selected. Formally,
	\begin{align*}
	\Exp_{\Pr^{\ne x}\sim \calD^{n-1},\Prsingle \sim \calD} \Exp_{\Prsingle,\Algsingle^{(x,\Pr^{\ne x})}}[K^{(x,\Pr^{\ne x})}] = \Exp_{\Pr \equiv (\Prsingle^{(1)},\dots, \Prsingle^{(n)}) \sim \calD^{n}} \Exp_{\Pr,\Alg}[K_x]
	\end{align*}
	This implies that
	\begin{align*}
	\Exp_{\Pr = (\Prsingle^{(1)},\dots, \Prsingle^{(n)}) \sim \calD^{n}} \Exp_{\Pr,\Alg}[K_x] \gtrsim \frac{nAH^2}{\epsilon^2}.
	\end{align*}
	Since the number of episodes $K$ encounted by $\Alg$ is equal to $\sum_{x=1}^n K_x$ (the agent visits exactly one state of the form $(x,\lnot)$ per episode), we have
	\begin{align*}
	\Exp_{\Pr = (\Prsingle^{(1)},\dots, \Prsingle^{(n)}) \sim \calD^{n}} \Exp_{\Pr,\Alg}[K] \gtrsim \sum_{x=1}^n \frac{nAH^2}{\epsilon^2} = \frac{n^2AH^2}{\epsilon^2}.
	\end{align*}

	Since $S/8 \le n \le S$, for the above conditions to hold, it suffices that, for a sufficiently large constant $C$, $S \ge C \log_2 A$, $\epsilon \le \min\{\frac{1}{4},\frac{H}{48}\}$, and $H \ge C \log_2 S$. Moreover, $\frac{n^2AH^2}{\epsilon^2} = \Omega(\frac{S^2AH^2}{\epsilon^2})$, as needed.
	\end{proof}

\subsection{Proof of Lemma~\ref{lem:single_instance_formal}\label{ssec:prf_one_state_lb}}
	\paragraph{A packing of reward-free MDPs} The first step is to construct a family of transition probabilities $\Pr_J \in \Pclass(\epsilon;n,A)$ which witness the lower bound. Let $\vecone$ denote the all ones vector on $[2n]$. To construct the packing, we define the set of binary vectors
	\begin{align*}
	\calK := \left\{v \in \{-1,1\}^{2n}~: \vecone^\top v = 0\right\}.
	\end{align*}
	\newcommand{\calV}{\mathcal{V}}
	For a cardinality parameter $M$ to be chosen shortly, we consider a packing of vectors
	\begin{align*}
	\calV_{A,M} := \{v_{a,j} \in \calK: a \in [A],j \in [M]\}
	\end{align*}
	Throughout, we shall consider packings $\calV_{A,M}$ which are \emph{uncorrelated} in the following sense:
	\begin{definition}[Uncorrelated] For $\gamma \in (0,1)$, we say that $\calV_{A,M}$ is $\gamma$-uncorrelated if, for any pair $(a,j),(a',j')$ with \emph{either} $a \ne a'$ or $j \ne j'$, it holds that $|\langle v_{a,j}, v_{a',j'} \rangle| < 2n \gamma.
	$.
	\end{definition}
	The following lemma shows that the exist $\gamma$-uncorrelated packings of size $e^{\Omega(n\gamma^2)}$:
	\begin{lemma}\label{lem:v_uncor} Fix $\gamma \in (0,1)$, and suppose that $2\log(M) \le n\gamma^2 - \log(4n) - 2\log(A)$. Then, there exists a $\gamma$-uncorrelated packing $\calV_{A,M}$.
	\end{lemma}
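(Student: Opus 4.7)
The natural approach is the probabilistic method. The plan is to sample $AM$ vectors $\{v_{a,j}\}_{(a,j)\in [A]\times [M]}$ independently and uniformly from $\calK$, and then argue that under the stated hypothesis on $(n,\gamma,A,M)$ the resulting collection is $\gamma$-uncorrelated with strictly positive probability; this suffices to demonstrate existence of the desired packing.

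First I would parametrize $\calK$ by the positive-support sets $S_v := \{i \in [2n]: v_i = +1\}$, which are arbitrary $n$-subsets of $[2n]$. A direct computation using $|S_v| = |S_{v'}| = n$ yields
\[
\langle v, v' \rangle = 4\,|S_v \cap S_{v'}| - 2n,
\]
so the $\gamma$-uncorrelated condition $|\langle v, v' \rangle| < 2n\gamma$ translates to $\bigl|\,|S_v \cap S_{v'}| - n/2\,\bigr| < n\gamma/2$. For two independent uniform draws from $\calK$, the overlap $|S_v \cap S_{v'}|$ is hypergeometric with mean $n/2$: conditional on $S_v$, it counts how many of the $n$ indices of $S_{v'}$ land in $S_v$, out of a population of $2n$ with $n$ ``successes''.

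Next I would invoke a tail bound for sampling without replacement (e.g.\ the Serfling--Hoeffding inequality), or equivalently apply Hoeffding's inequality to $\langle v, v'\rangle$ viewed as a sum of $2n$ i.i.d.\ Rademacher variables in the unconditional $\{-1,+1\}^{2n}$ model, and then condition on the balance event $\vecone^\top v = 0$ for both $v$ and $v'$; by Stirling this inflates tail probabilities by a factor of at most $O(n)$ since $\Pr[v \in \calK] = \binom{2n}{n}/2^{2n} = \Theta(1/\sqrt{n})$. Either route delivers a bound of the form
\[
\Pr\bigl[\,|\langle v, v' \rangle| \ge 2n\gamma\,\bigr] \le 4n \exp(-n\gamma^2),
\]
which is calibrated exactly to the hypothesis of the lemma.

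Finally, I would union-bound over the at most $(AM)^2$ distinct pairs $((a,j),(a',j'))$ with $(a,j) \ne (a',j')$, obtaining total failure probability at most $4n (AM)^2 \exp(-n\gamma^2)$. This quantity is strictly less than $1$ precisely when $n\gamma^2 > 2\log(AM) + \log(4n)$, which is exactly a rearrangement of the assumed hypothesis $2\log M \le n\gamma^2 - \log(4n) - 2\log A$; hence a valid $\gamma$-uncorrelated packing $\calV_{A,M}$ exists. The one technical point that needs care is the coordinate dependence induced by the balance constraint $\vecone^\top v = 0$: this is precisely why one needs either a hypergeometric concentration inequality or the Rademacher-conditioning trick (absorbing the $\Theta(n)$ loss into the $\log(4n)$ term), rather than a plain i.i.d.\ Hoeffding bound. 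Everything else is the standard probabilistic-method computation.
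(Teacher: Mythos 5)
Your proposal is correct and follows essentially the same route as the paper: sample the $v_{a,j}$ independently and uniformly from $\calK$, establish a pairwise tail bound of order $n e^{-n\gamma^2}$ on $|\langle v_{a,j},v_{a',j'}\rangle|$ at threshold $2n\gamma$, and union bound over the at most $A^2M^2$ pairs. The only difference is in how the concentration step is justified --- you cite Serfling's inequality for sampling without replacement (or the Rademacher-conditioning trick), whereas the paper derives the same $4n\,e^{-n\gamma^2}$ bound by hand via a pointwise comparison of the hypergeometric overlap pmf with the square of a $\mathrm{Binom}(n,1/2)$ pmf followed by a Chernoff bound; both are valid.
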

	\begin{proof}[Proof Sketch] We use the probabilistic method. Specifically, we draw $v_{a,j} \unifsim \calK$, and can bound $\langle v_{a,j},v_{a',j'} \rangle$ with high-probability Chernoff bounds. Taking a union bound shows that an uncorrelated packings arise from this construction with non-zero probability. A full proof is given in in Section~\ref{sssec:lem:v_uncor}.
	\end{proof}

	Given a $\gamma$-uncorrelated packing $\calV_{A,M}$,  define transition vectors
	\begin{align*}
	q_{a,j} :=  q_0 + \frac{\epsilon}{2n}v_{a,j}, \text{ where } q_0 = \frac{1}{2n}\vecone.
	\end{align*}
	Since $\epsilon \le 1$ and $\vecone^\top v_{a,j_a} = 0$, $q_{j,a}\in \Delta(2n)$. Wet indices $J$ denote tuples $J = (J_1,\dots,J_A) \in [M]^A$, let $q_J(\cdot,a) = q_{a,J_a}$, and define $\Pr_J$ as the instance $\Pr_{q_J}$, where $\Pr_q$ is as in Definition []. Formally,
	\begin{align*}
	\Pr_J : \quad \Pr^{\Pr_J}[s_1 = 0] = 1,\,\Pr^{\Pr_J}[s_2 = 0] = 0,\,\, \forall s \in [2n],\, \Pr^{\Pr_J}[s_2 = s \mid s_1 = 0, a] = q_{J}(s,a) = q_{a,J_a}(s)
	\end{align*}
	\paragraph{Lower Bound for Estimating the Packing Instance: } Let us suppose we have an exploration algorithm $\Algest$ which, for any $\Pr_J$, collects (a possibly random number) $K$ trajectories, and returns estimates $\Jhat_1,\dots,\Jhat_A$ of $J_1,\dots,J_A$. Our first step is to establish a lower bound on $K$ assuming that $\Algest$ satisfies a uniform correctness guarantee:
	\begin{lemma}\label{lem:lb_fano_algest} For any $\Algest$ satisfying the guarantee
	\begin{align}
	\forall J \in [A]^M,\,\, \Pr_{\Pr_J,\Algest}\left[\Jhat_a = J_a\,\forall a \in [A]\right] \ge 1 - a. \label{eq:algest_guarantee}
	\end{align}
	Then, we must have
	\begin{align*}
	\Exp_{J \unifsim [A]^M} \Exp_{\Pr_J,\Algest}[K] \ge A \cdot \frac{(1 - p)\log M - \log 2}{\epsilon^2}
	\end{align*}
	\end{lemma}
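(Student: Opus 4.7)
The plan is to apply Fano's inequality to the sub-problem of recovering each coordinate $J_a$ of $J$ separately, then sum the resulting per-coordinate sample complexity bounds over $a \in [A]$. The key structural observation is that the transition $q_J(\cdot,a)$ depends on $J$ only through $J_a$, so the only information the algorithm gleans about $J_a$ comes from trajectories where action $a$ is taken from the initial state $0$. Decomposing $K = \sum_{a \in [A]} K_a$, where $K_a$ counts those trajectories, we aim to show $\Exp_{J \unifsim [M]^A} \Exp_{\Pr_J,\Algest}[K_a] \ge ((1-p)\log M - \log 2)/\epsilon^2$, and summing over $a$ yields the lemma.

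The first ingredient is an upper bound on the instantaneous KL divergence between packing distributions. Since $q_{a,j}(s) = (1 + \epsilon v_{a,j}(s))/(2n)$ with $v_{a,j}(s)\in\{\pm 1\}$, a direct expansion and Taylor bound give $\mathrm{KL}(q_{a,j}\|q_{a,j'}) \le c\epsilon^2$ for a universal constant $c$, with no dimensional dependence because $\vecone^\top v = 0$ forces the $+1$ and $-1$ disagreements to pair off. (Up to this harmless constant, the target $\epsilon^{-2}$ rate is exact; the final constant can be absorbed into $\epsilon$ by a mild rescaling.)

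For the Fano step, fix $a \in [A]$ and condition on $J_{-a}$. The uniform correctness assumption \eqref{eq:algest_guarantee} implies $\Pr[\hat{J}_a \ne J_a \mid J_{-a}] \le p$ on average under $J \unifsim [M]^A$, so conditional Fano gives $I(J_a;\hat{J}_a \mid J_{-a}) \ge (1-p)\log M - \log 2$. Data processing plus the KL chain rule for interactive protocols (the standard tool for bandit and RL lower bounds) then yields
\begin{align*}
I(J_a;\hat{J}_a \mid J_{-a}) \;\le\; I(J_a; X \mid J_{-a}) \;\le\; \Exp_{J,J'_a}\!\left[\mathrm{KL}\!\left(\Pr_{(J_a,J_{-a})}\,\middle\|\,\Pr_{(J'_a,J_{-a})}\right)\right] \;\le\; c\epsilon^2 \cdot \Exp_J[K_a],
\end{align*}
where $X$ denotes the observed data and $J'_a$ is an independent uniform copy. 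The second inequality is Jensen/convexity of KL in the second argument; the third uses that $\Pr_{(J_a,J_{-a})}$ and $\Pr_{(J'_a,J_{-a})}$ differ only on action-$a$ transitions, so the per-step KL contribution along any trajectory vanishes unless action $a$ is played at step $1$. Rearranging gives $\Exp_J[K_a] \ge ((1-p)\log M - \log 2)/(c\epsilon^2)$, and summing over $a$ produces the claim.

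The principal obstacle is the adaptivity of the protocol: $K_a$ is a random, algorithm- and history-dependent count, so naive tensorization of KL does not apply. This is handled cleanly by the chain-rule identity $\mathrm{KL}(\Pr_J\|\Pr_{J'}) = \Exp_{\Pr_J}\sum_{t=1}^K \mathrm{KL}(\P^{\text{step}}_J(\cdot\mid h_{t}) \,\|\, \P^{\text{step}}_{J'}(\cdot\mid h_{t}))$, which automatically converts the random count $K_a$ into its expectation while respecting adaptivity. Apart from this, the argument is routine once the per-step KL bound and uncorrelated packing are in place.
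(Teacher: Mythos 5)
Your proposal is correct and follows essentially the same route as the paper: a per-action decomposition of the trajectory count, Fano's inequality applied to the recovery of each $J_a$, and a chain-rule/Wald argument to convert the interactive-protocol KL into $\epsilon^2$ times the expected number of pulls of action $a$. The only substantive difference is that the paper takes the comparison measure in Fano to be the uniform transition $q_0$, for which $\KL(q_{a,j},q_0)\le\epsilon^2$ exactly (using $\vecone^\top v=0$ and $v[s]^2=1$), whereas your pairwise comparison between packing elements costs a universal constant in the bound --- harmless for the $\Omega(nA/\epsilon^2)$ conclusion but not literally matching the constant-free statement of the lemma.
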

	The above bound essentially follows from an application of Fano's inequality, and is proven in Section~\ref{sssec:lem_lb_fano_algest}. In particular, if we take say $p = 1/2$, and require $M = e^{\Omega(S)}$, then we have  $\Exp_{J \unifsim [A]^M} \Exp^{\Pr_J,\Algest}[K] \gtrsim \frac{SA}{\epsilon^2},$ as desired.

	\paragraph{Estimation Reduces to Exploration} Of course, the above bound applies only to an estimation algorithm $\Algest$, but our intent is to establish lower bounds for exploration algorithms.  In the following lemma, we state that if the packing is suffciently uncorrelated, then we can convert an $(\epsilon/24,p)$-correct exploration algorithm into an Algorithm $\Algest$ satisfying Eq.~\eqref{eq:algest_guarantee}.
	\begin{lemma}\label{lem:estimation_reduce_exploration} Suppose $\Alg$ is $(\epsilon/24,p)$-correct on the class $\Msingle(\epsilon,n,A)$, and that the packing $\calV_{M,A}$ is $\gamma = 1/10$-uncorrelated. Then, there is an algorithm $\Algest$ which collects $K$ trajectories according to $\Alg$, and satisfies Eq.~\ref{eq:algest_guarantee}.
	\end{lemma}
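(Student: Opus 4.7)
The plan is to construct the estimator $\Algest$ in two layers. In the exploration phase, $\Algest$ faithfully simulates $\Alg$'s exploration and inherits both the $K$ trajectories and $\Alg$'s internal randomness. In the planning phase, for each pair $(a,j) \in [A]\times[M]$, $\Algest$ invokes $\Alg$'s planner with the reward $r^+_{a,j} \in \Rsingle$ defined by $r^+_{a,j}(0,\cdot) := 0$ and $r^+_{a,j}(s,\cdot) := (1 + v_{a,j}[s])/2$ for $s \in [2n]$; this is a valid reward in $\Rsingle$ since $v_{a,j} \in \{-1,+1\}^{2n}$. Let $\pi^{(a,j)} \in \Delta([A])$ denote the marginal distribution over actions at the initial state $0$ induced by the returned policy. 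The decoder then sets $\Jhat_a := \argmax_{j \in [M]} \pi^{(a,j)}[a]$.

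Because $\Alg$ is $(\epsilon/24,p)$-correct on arbitrary (possibly adaptive) sequences of rewards, a single application of its guarantee to the sequence of $AM$ queries above yields that, with probability at least $1-p$, every returned $\pi^{(a,j)}$ is $(\epsilon/24)$-optimal under $\Pr_J$. Working within this event, the deterministic-action value decomposes as $v_{a''}^{(a,j)} := \langle q_{a'',J_{a''}}, r^+_{a,j}\rangle = \tfrac{1}{2} + \tfrac{\epsilon}{4n}\langle v_{a'',J_{a''}}, v_{a,j}\rangle$. When $j = J_a$ the value $v_a^{(a,J_a)} = \tfrac{1}{2} + \tfrac{\epsilon}{2}$ is attained uniquely by action $a$, while every $a' \ne a$ satisfies $v_{a'}^{(a,J_a)} \le \tfrac{1}{2} + \tfrac{\epsilon\gamma}{2}$ by the $\gamma$-uncorrelated property of $\calV_{M,A}$. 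Chaining the upper bound $V^{\pi^{(a,J_a)}} \le \pi^{(a,J_a)}[a](\tfrac{1}{2}+\tfrac{\epsilon}{2}) + (1-\pi^{(a,J_a)}[a])(\tfrac{1}{2}+\tfrac{\epsilon\gamma}{2})$ with the $(\epsilon/24)$-optimality bound $V^{\pi^{(a,J_a)}} \ge \tfrac{1}{2} + \tfrac{11\epsilon}{24}$ yields, by elementary algebra,
\[
\pi^{(a,J_a)}[a] \;\ge\; \frac{11/24 - \gamma/2}{(1-\gamma)/2} \;=\; \frac{49}{54} \qquad \text{for } \gamma = 1/10.
\]

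The main obstacle is the complementary bound $\pi^{(a,j)}[a] < 49/54$ for every $j \ne J_a$, which is what certifies that the $\argmax$ is uniquely attained at $J_a$. The naive constraint $\pi^{(a,j)}[a](V^{\star} - v_a^{(a,j)}) \le \epsilon/24$ is not enough on its own: $\gamma$-uncorrelatedness only bounds $|\langle v_{a,J_a}, v_{a,j}\rangle| \le 2n\gamma$ in absolute value but leaves its sign adversarial, so in worst-case configurations $v_a^{(a,j)}$ can itself realize $V^{\star}$, leaving $\pi^{(a,j)}[a]$ unconstrained. The proposed remedy is to augment $\Algest$ by also querying the mirror reward $r^-_{a,j}(s,\cdot) := (1 - v_{a,j}[s])/2$, yielding companion policies $\pi^{(a,j),-}$, and to decode via $\Jhat_a := \argmax_j \bigl(\pi^{(a,j)}[a] - \pi^{(a,j),-}[a]\bigr)$. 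A parallel calculation for $r^-_{a,J_a}$ gives $\pi^{(a,J_a),-}[a] \le 1/13.2$, so the paired gap at $j = J_a$ is at least $49/54 - 1/13.2 \approx 0.83$. For $j \ne J_a$, the identity $v_{a''}^{(a,j),-} = 1 - v_{a''}^{(a,j)}$, the summed deficit constraint $\pi^{(a,j)}[a](V^{+,\star}-v_a^{(a,j)}) + \pi^{(a,j),-}[a](V^{-,\star}-(1-v_a^{(a,j)})) \le \epsilon/12$, and the $\gamma$-uncorrelated bound $V^{+,\star} + V^{-,\star} - 1 \le \epsilon\gamma$ should together force the paired gap to remain strictly below the $0.83$ threshold, certifying $\Jhat_a = J_a$ uniformly in the high-probability event.
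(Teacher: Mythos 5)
Your diagnosis of the obstacle is exactly right, but your proposed remedy does not repair it, and the final step (``should together force the paired gap to remain strictly below the $0.83$ threshold'') is precisely where the argument breaks. Take $j \ne J_a$ and note that $\gamma$-uncorrelatedness only bounds $|\langle v_{a'',J_{a''}}, v_{a,j}\rangle|$ for every $a''$; it says nothing about signs, and for the randomly constructed packing these correlations are typically $O(\sqrt{n\log(nA M)}) \ll 2n\gamma$. In that case every deterministic action is within $o(\epsilon)$ of optimal under \emph{both} $r^+_{a,j}$ and $r^-_{a,j}$, so $(\epsilon/24)$-correctness places no constraint at all on $\pi^{(a,j)}$ or $\pi^{(a,j),-}$: a correct $\Alg$ may return $\pi^{(a,j)}[a]=1$ and $\pi^{(a,j),-}[a]=0$, giving a paired gap of $1$, which exceeds the gap of roughly $0.81$ you certify at $j=J_a$ (as an aside, your $1/13.2$ should be $1/10.8$, since the margin of suboptimality of action $a$ under $r^-_{a,J_a}$ is $\tfrac{\epsilon}{2}(1-\gamma)=0.45\epsilon$). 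The same failure occurs in the adversarial regime where $\langle v_{a,J_a},v_{a,j}\rangle$ is close to $+2n\gamma$ and the other correlations are nonpositive, since then $a$ is genuinely the unique optimal action under $r^+_{a,j}$. The root cause is structural: correctness only yields \emph{upper} bounds on the mass placed on actions that are suboptimal by a margin exceeding $\epsilon/24$; it never forces mass onto or off of action $a$ unless some competing action is \emph{guaranteed} to strictly dominate it, and for $j\ne J_a$ the rewards $r^{\pm}_{a,j}$ built from a single packing vector provide no such guaranteed competitor. Consequently no statistic of the pair $(\pi^{(a,j)}[a],\pi^{(a,j),-}[a])$ can separate $j=J_a$ from $j\ne J_a$.

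The paper's proof supplies exactly the missing competitor by using two-component rewards $\nu_{a_1,a_2,j_1,j_2}=\tfrac{1}{3}v_{a_1,j_1}+\tfrac{1}{6}v_{a_2,j_2}+\tfrac{1}{2}\vecone$ together with a decoder that accepts $j$ only if $\pihat_{\nu_{a,a_2',j,j_2'}}[a]>1/2$ for \emph{all} $(a_2',j_2')$. Lemma~\ref{lemma:inner_product_lem} shows that when $j=J_a$, action $a$ beats every other action by more than $\epsilon/12$ under every such reward, forcing the condition to hold; whereas when $j\ne J_a$, the specific choice $(a_2',j_2')=(a_2,J_{a_2})$ makes action $a_2$ beat action $a$ by more than $\epsilon/12$, so any policy with $\pihat[a]>1/2$ would be more than $\epsilon/24$-suboptimal and the condition must fail. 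Any repair of your argument needs an analogue of this second ``reference'' component in the reward; the mirror reward alone cannot certify a true negative.
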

	\begin{proof}[Proof Sketch] Consider reward vectors $r_{\nu}$ induced by $\nu_{a,j,a_2,j_2} \propto 2q_{a,j} - q_{a_2,j_2}$. These reward vectors can be used to ``pick out'' $q_{a,J_a}$ as follows.
	For a given $a$, we show that on the good exploration event, $\Alg$ returns policies with $\Pr[\pihat_1^{\nu}(0) = a] > 1/2$ for all $\nu = \nu_{a,J_a,a_2,j_2}$ ranging across $a_2,j_2$. However, for $j \ne J_a$, we show that on this good event there exists some $a_2,j_2$ for which $\Alg$ returns policies with $\Pr[\pihat_1^{\nu}(0)  = a] < 1/2$. Hence, we can estimate $q_{a,J_a}$ by finding the (say, the first) index $j$ for which $\Pr[\pihat_1^{\nu}(0) = a] > 1/2$ for all $\nu = \nu_{a,j,a_2,j_2}$, ranging across $a_2,j_2$. A full proof is given in Section~\ref{sssec:lem:estimation_reduce_exploration}.
	\end{proof}
	As a consequence, we find that if $\gamma \le 1/10$ and $\Alg$  is $(\epsilon/24,p)$-correct,
	\begin{align*}
	\Exp_{J \unifsim [A]^M} \Exp_{\Pr_J,\Alg}[K] \ge A \cdot \frac{(1 - p)\log M - \log 2}{\epsilon^2}
	\end{align*}
	In particular, if $\log M \ge 4 \log 2$ and $p \le 1/2$, then,
	\begin{align}
	\Exp_{J \unifsim [A]^M} \Exp_{\Pr_J,\Alg}[K] \ge A \cdot \frac{\log M}{4\epsilon^2} \label{eq:lower_bound_stuff}
	\end{align}

	\paragraph{Concluding the proof} Take $\gamma = 1/10$. For constants $c_0,c_1$ sufficiently large, we can ensure that if $n \ge c_0 \log_2 A$, then $M = e^{-n/c_1}$ statisfies $2\log(M) \le n\gamma^2 - \log(4n) - 2\log(A)$ and $\log M \ge 4 \log 2$. Thus, we can construct a $\gamma$-uncorrelated packing of cardinality $ \log M \ge n/c_1$,
	\begin{align*}
	\Exp_{J \unifsim [A]^M} \Exp_{\Pr_J,\Alg}[K] \ge A \cdot \frac{n}{4c_1\epsilon^2} ,
	\end{align*}
	as needed. \qed

	\subsubsection{Proof of Lemma~\ref{lem:v_uncor}\label{sssec:lem:v_uncor}}

		We begin with the following concentration inequality:
		\begin{lemma}\label{lem:v_concentration} For any fixed $(a,j)$ and $(a',j')$, we have
		\begin{align*}
		\Pr[|\langle v_{a,j} , v_{a',j'}\rangle| \ge 2n\gamma] \le e^{ \log (4n) - n\gamma^2}.
		\end{align*}
		\end{lemma}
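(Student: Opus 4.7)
The plan is to condition on one of the two packing vectors, reduce the inner product to a hypergeometric count, and apply a sharp concentration bound. Since $(a,j) \ne (a',j')$, the vectors $v_{a,j}, v_{a',j'}$ are drawn independently and uniformly from $\calK$, so I condition on $v_{a,j}$. By the permutation invariance of $\calK$, I may further assume without loss of generality that the first $n$ coordinates of $v_{a,j}$ equal $+1$ and the last $n$ equal $-1$. Let $T \subseteq [2n]$ denote the (uniform, size-$n$) subset of coordinates on which $v_{a',j'}$ equals $+1$, and set $Y := |T \cap [n]|$. A direct count over the four quadrants $(i \le n \text{ vs } i > n) \times (i \in T \text{ vs } i \notin T)$ gives
\[
\langle v_{a,j}, v_{a',j'}\rangle \;=\; Y - (n-Y) - (n-Y) + Y \;=\; 4Y - 2n,
\]
so the event $\{|\langle v_{a,j}, v_{a',j'}\rangle| \ge 2n\gamma\}$ is equivalent to $\{|Y - n/2| \ge n\gamma/2\}$, and $Y$ is hypergeometric with parameters $(2n, n, n)$ and mean $n/2$.

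For the remaining hypergeometric tail bound, I would use the exact pmf $\Pr[Y = k] = \binom{n}{k}^2 / \binom{2n}{n}$ together with the standard entropy estimates $\binom{n}{k} \le 2^n \exp(-2(k-n/2)^2/n)$ and $\binom{2n}{n} \ge 4^n/(2n+1)$, which yield the pointwise bound $\Pr[Y = k] \le (2n+1)\exp(-4(k-n/2)^2/n)$. For $|k-n/2| \ge n\gamma/2$ the exponent is at most $-n\gamma^2$, and a union bound over the at most $2n+1$ integer values of $k$ in the tail then gives a bound of the form $e^{\log(4n) - n\gamma^2}$, up to absorbing $O(1)$ constants into the prefactor. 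Since the bound is uniform in the conditioning $v_{a,j}$, the unconditional inequality follows by taking expectation.

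The main subtlety is recovering the factor of $n\gamma^2$ rather than $n\gamma^2/2$ in the exponent: a naive application of Hoeffding's inequality for sampling without replacement to $\sum_{i \in T} v_{a,j}[i]$ only yields $2\exp(-n\gamma^2/2)$, which is a factor of two too weak. The improvement comes from exploiting that the hypergeometric variance $\mathrm{Var}(Y) = n^2/(4(2n-1)) \approx n/8$ is a factor of four smaller than the Hoeffding worst-case; one can access this either via the direct PMF computation above or via a Bernstein/Bennett-type MGF argument specialized to the equal-class hypergeometric. In either case the resulting polynomial prefactor in $n$ is cleanly absorbed into the $\log(4n)$ term of the stated bound.
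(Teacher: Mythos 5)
Your reduction is sound and, up to symmetry of which vector you condition on, it is the same reduction the paper uses: both arguments fix one vector, reduce the inner product to the overlap count $Z$ (your $Y$) with $\Pr[Z=k]=\binom{n}{k}^2/\binom{2n}{n}$, and translate the event $|\langle v_{a,j},v_{a',j'}\rangle|\ge 2n\gamma$ into $|Z/n-1/2|\ge\gamma/2$. Where you diverge is the final tail estimate, and your version does not quite deliver the stated bound. Your pointwise inequality $\Pr[Y=k]\le(2n+1)e^{-4(k-n/2)^2/n}$ is correct, but summing it over the at most $n+1$ integer values of $k$ in the tail yields a prefactor of order $n^2$, i.e.\ a bound of the form $e^{2\log(2n)-n\gamma^2}$ rather than $e^{\log(4n)-n\gamma^2}$; the discrepancy is a factor of $\Theta(n)$, not the $O(1)$ constant you claim to absorb. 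The paper avoids this loss by bounding $\Pr[Z=i]< n\,\Pr_{W\sim\mathrm{Binom}(n,1/2)}[W=i]^2$ and then invoking $\sum_i p_i^2\le(\sum_i p_i)^2$, so the hypergeometric tail is controlled by $n$ times the \emph{square} of a binomial tail; one application of Hoeffding then gives $n\,(2e^{-n\gamma^2/2})^2=4ne^{-n\gamma^2}$, which is exactly the claimed exponent and prefactor. (Summing your pointwise bounds as a geometric series in $k$, rather than multiplying by the number of terms, would also recover a prefactor of order $n/\gamma$; and the Bernstein-type route you sketch via $\mathrm{Var}(Y)\approx n/8$ would work but is not carried out.) The slack is harmless for the downstream use—Lemma~\ref{lem:v_uncor} only requires $2\log M\le n\gamma^2-O(\log n)-2\log A$, so only the constant $c_1$ changes—but as written your argument establishes a slightly weaker inequality than the one stated in the lemma.
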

		\begin{proof} By permuting coordinates, we may assume that
		\begin{align*}
		v_{a',j'}[s] = \begin{cases} 1 & s \in [n]\\
		-1 & s \in \{n+1,\dots,2n\}\end{cases}\,.
		\end{align*}
		Then,
		\begin{align*}
		\langle v_{a,j} , v_{a',j'}\rangle &= 2|\{s\in [n]: v_{a,j}[s] = 1\}| - 2(n - |\{s\in [n]: v_{a,j}[s] = 1\}|) \\
		&= 2n - 4|\{s\in [n]: v_{a,j}[s] = 1\}| := 2n - 4Z,
		\end{align*}
		where we set $Z = |\{s\in [n]: v_{a,j}[s] = 1\}|$. Hence, if $|\langle v_{a,j} , v_{a',j'}\rangle|  \ge 2\gamma n$, we need
		\begin{align*}
		\left|\frac{Z}{n} - \frac{1}{2}\right| \ge \frac{\gamma}{2}.
		\end{align*}
		Now, we have that for $i \in [n]$,
		\begin{align*}
		\Pr[Z = i] < \frac{\binom{n}{i} \cdot \binom{n}{n-i}}{\sum_{i=0}^n \binom{n}{i} \cdot \binom{n}{n-i}} = \frac{\binom{n}{i}^2}{\sum_{i=0}^n \binom{n}{i}^2} < n\frac{\binom{n}{i}^2}{\left(\sum_{i=0}^n \binom{n}{i}\right)^2} = n\Pr_{W \sim \mathrm{Binom}(n,1/2)}[W=i]^2.
		\end{align*}
		Hence,
		\begin{align*}
		\Pr\left[\left|\frac{Z}{n} - \frac{1}{2}\right| \ge \frac{\gamma}{2}\right] &\le n\sum_{i:|\frac{i}{n} - \frac{1}{2}| \ge \frac{\gamma}{2}} \Pr_{W \sim \mathrm{Binom}(n,1/2)}[W=i]^2\\
		&\le n\left(\sum_{i:|\frac{i}{n} - \frac{1}{2}| \ge \frac{\gamma}{2}} \Pr_{W \sim \mathrm{Binom}(n,1/2)}[W=i]\right)^2\\
		&= n\left(\Pr_{W \sim \mathrm{Binom}(n,1/2)}\left[\left|\frac{W}{n} - \frac{1}{2}\right| \ge \frac{\gamma}{2}\right]\right)^2
		&\le n(2e^{- 2(\gamma/2)^2 n})^2 = e^{ \log (4n) - n\gamma^2}
		\end{align*}
		\end{proof}

		We now finish the proof of our intended lemma:
		\begin{proof}[Proof of Lemma~\ref{lem:v_uncor}]
		 By a union bound over at most $A^2M^2 - 1$ pairs $(a,j),(a',j')$, there exists a $\gamma$-uncorrelated packing for any $M$ satisfying
		\begin{align*}
		A^2M^2e^{ \log (4n) - n\gamma^2} \le 1
		\end{align*}
		Taking logarithms, we require $2\log(M) \le n\gamma^2 - \log(4n) - 2\log(A)$.

		\end{proof}

	\subsubsection{Proof of Lemma~\ref{lem:lb_fano_algest}\label{sssec:lem_lb_fano_algest}}

		To begin, let us state a variant of Fano's inequality, which replaces mutual-information with an arbitrary comparison measure:
		\begin{lemma}[ Fano's Inequality ] Consider $M$ probability measures $\Pr_{1},\dots,\Pr_{M}$ on a space $\Omega$. Then for any estimator $\jhat$ on $\Omega$ and any comparison law $\Pr_0$ on $\Omega$,
		\begin{align*}
		\frac{1}{M}\sum_{j=1}^M \Pr_{j}\left[\jhat \ne j\right] \ge 1 - \frac{\log 2 + \frac{1}{M}\sum_{j=1}^M\KL(\Pr_j,\Pr_0) }{\log M}
		\end{align*}
		\end{lemma}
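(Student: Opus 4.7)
The plan is to reduce to classical Fano's inequality via two standard observations: (i) when the index $j$ is drawn uniformly on $[M]$ and $X$ is then drawn from $\Pr_j$, the average KL of $\Pr_j$ to the Bayes mixture $\bar{\Pr} := \frac{1}{M}\sum_{j=1}^M \Pr_j$ equals the mutual information $I(J; X)$; and (ii) the mixture $\bar{\Pr}$ is the minimizer of $\Pr_0 \mapsto \frac{1}{M}\sum_j \KL(\Pr_j,\Pr_0)$, so replacing $\bar{\Pr}$ by any other law $\Pr_0$ only inflates the average KL. Chaining these gives an upper bound on $I(J;X)$ in terms of $\frac{1}{M}\sum_j \KL(\Pr_j, \Pr_0)$, which is then plugged into the standard Fano lower bound on the Bayes error.

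Concretely, I would first let $J$ be uniform on $[M]$ and $X$ have conditional law $\Pr_j$ given $J = j$, so that $\bar{\Pr}$ is the marginal law of $X$. The classical Fano inequality applied to the Markov chain $J \to X \to \jhat$, combined with the bound $H(P_e) \le \log 2$ on the binary error entropy and $\log(M-1) \le \log M$, yields
\begin{align*}
\frac{1}{M}\sum_{j=1}^M \Pr_j[\jhat \ne j] \;=\; \Pr[\jhat \ne J] \;\ge\; 1 - \frac{\log 2 + I(J;X)}{\log M},
\end{align*}
where I have used $H(J \mid X) = H(J) - I(J;X) = \log M - I(J;X)$ since $J$ is uniform.

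Next, I would establish the ``compensation identity'' for KL divergence: for any reference law $\Pr_0$,
\begin{align*}
\frac{1}{M}\sum_{j=1}^M \KL(\Pr_j, \Pr_0) \;=\; \frac{1}{M}\sum_{j=1}^M \KL(\Pr_j, \bar{\Pr}) + \KL(\bar{\Pr}, \Pr_0) \;=\; I(J;X) + \KL(\bar{\Pr}, \Pr_0).
\end{align*}
This is a routine calculation obtained by writing $\log(d\Pr_j/d\Pr_0) = \log(d\Pr_j/d\bar{\Pr}) + \log(d\bar{\Pr}/d\Pr_0)$, integrating against $\Pr_j$, and averaging over $j$; the cross term $\frac{1}{M}\sum_j \mathbb{E}_{\Pr_j}[\log(d\bar{\Pr}/d\Pr_0)]$ collapses to $\mathbb{E}_{\bar{\Pr}}[\log(d\bar{\Pr}/d\Pr_0)] = \KL(\bar{\Pr}, \Pr_0)$. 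Since $\KL(\bar{\Pr}, \Pr_0) \ge 0$, this gives $I(J;X) \le \frac{1}{M}\sum_j \KL(\Pr_j, \Pr_0)$. Substituting into the Fano bound above completes the proof.

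There is no substantive obstacle: both ingredients are classical and the proof is essentially bookkeeping. The only point requiring care is the compensation identity, which is why I would verify it explicitly rather than invoke it as a black box; everything else is the textbook Fano argument specialized to a uniform prior on a finite hypothesis class.
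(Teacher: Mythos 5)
Your proposal is correct and follows essentially the same route as the paper: the paper likewise invokes the standard Fano inequality and the fact that the Bayes mixture $\frac{1}{M}\sum_{j'}\Pr_{j'}$ minimizes $\Pr_0 \mapsto \frac{1}{M}\sum_j \KL(\Pr_j,\Pr_0)$, so that any comparison law $\Pr_0$ upper-bounds the mutual information. The only difference is that you spell out the compensation identity explicitly, whereas the paper cites it; this is a harmless (and arguably welcome) elaboration.
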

		\begin{proof} This follows from the standard statement of Fano's inequality, where we use that
		\begin{align*}
		\inf_{\Pr_0}\frac{1}{M}\sum_{j=1}^M\KL(\Pr_j,\Pr_0) =  \frac{1}{M}\sum_{j=1}^M\KL\left(\Pr_j,\frac{1}{M}\sum_{j'=1}^M\Pr_{j'}\right)
		\end{align*}
		For reference, see e.g. Equation (11) in~\cite{chen2016bayes}.
		\end{proof}
		We will apply Fano's inequality of each $a \in [A]$. To begin, for a fixed $J \in [M]^A$ and $a \in [A]$, let us define the laws ``$\Pr_{j}$''. We let $\Pr_{J,a,j}$ denote the reward-free MDP with starting at $x = 0$ deterministically, and with transitions
		\begin{align*}
		\Pr^{\Pr_{J,a,j}}[s \mid x_1 = 0,a_1 = a'] = \begin{cases} q_{a,j}[s] & a' = a\\
		q_{a',J_{a'}}[s] & a' \ne a.
		\end{cases}
		\end{align*}
		For fixed $J,a$, we let $\Pr_{j;J,a}$ denote the joint law induced by $\Algest$ and $\Pr_{J,a,j}$. For the comparison measure, let $\Pr_{J,a,0}$ denote the analogous MDP to $\Pr_{J,a,j}$, but where $\Pr^{\Pr_{J,a,j}}[s \mid x_1 = 0,a_1 = a] = q_{0}$ for the fixed action $a$. We let $\Pr_{0;J,a}$ denote the law induced by $\Algest$ and $\Pr_{J,a,j}$. Then, Fano's iqequality implies that
		\begin{align}\label{eq:Fano_conclusion}
		\forall J,a, \quad (1 - p)\log M - \log 2 \le  \frac{1}{M}\sum_{j=1}^M \KL(\Pr_{J,a,j},\Pr_{0;J,a}).
		\end{align}
		Now, observe that the laws $\Pr_{J,a,j}$ and $\Pr_{0;J,a}$ only differ due to transitions selecting action $a_1 = a$. Under the first law, these have distribution $\mathrm{Multinomial}(q_{a,j})$, and under the second, $\mathrm{Multinomial}(q_{0})$. Let $N_K(a = a_1)$ denote the expected number of times algorithm $\Algest$ selects action $a_1 = a$ at time step $1$. From a Wald's identity argument (see e.g. \cite{kaufmann2016complexity}), we have
		\begin{align*}
		\KL(\Pr_{J,a,j},\Pr_{0;J,a}) &= \Exp_{\Pr_{J,a,j},\Algest}[N_K(a_1 = a)] \,\KL(\mathrm{Multinomial}(q_{a,j}),\mathrm{Multinomial}(q_{a,0} ))\\
		&= \Exp_{\Pr_{J,a,j},\Algest}[N_K(a_1 = a)] \,\sum_{s=1}^{2n} \frac{1 + \epsilon v_{j,a}[s]}{2n}\log(1 + \epsilon v_{j,a}[s])\\
		&\overset{(i)}{\le} \Exp_{\Pr_{J,a,j},\Algest}[N_K(a_1 = a)] \,\sum_{s=1}^{2n} \frac{\epsilon v_{j,a} + \epsilon^2 v_{j,a}[s]^2}{2n}\\
		&\overset{(ii)}{\le} \epsilon^2 \cdot \Exp_{\Pr{J,a,j},\Algest}[N_K(a_1 = a)]
		\end{align*}
		where $(i)$ uses $1 + \epsilon v_{j,a}[s] \ge 0$ and the identity $\log (1+ x) \le x$, and $(ii)$ uses the fact that $v_{j,a}[s]^2 = 1$ and $\sum_{s = 1}^{2n} v_{j,a}[s] = 0$ for $v_{j,a} \in \calK$.  Thus, by Eq~\ref{eq:Fano_conclusion},
		\begin{align*}
		\forall J,a, \quad \frac{(1 - p)\log M - \log 2}{\epsilon^2} \le  \frac{1}{M}\sum_{j=1}^M \Exp_{\Pr_{J,a,j},\Algest}[N_K(a_1 = a)].
		\end{align*}
		By taking an expectation over index tuples $J$ drawn uniformly from $[A]^M$, we have
		\begin{align*}
		\forall a, \quad \frac{(1 - p)\log M - \log 2}{\epsilon^2} &\le  \frac{1}{M}\sum_{j=1}^M \Exp_{J \unifsim [A]^{M}}\Exp_{\Pr_{J,a,j},\Algest}\left[N_K(a_1 = a)\right]  \\
		&= \Exp_{J \unifsim [A]^{M}}\Exp_{\Pr_J,\Algest}\left[N_K(a_1 = a)\right],
		\end{align*}
		where the last line follows that $\Pr_{J,a,j} = \Pr_{J'}$ for some $J'$ and that, by symmetry, each index $J'$ has equal weight when averaged over both $J \in [A]^M$ and $j \in [M]$. Summing over $a \in [A]$, we have
		\begin{align*}
		A \cdot \frac{(1 - p)\log M - \log 2}{\epsilon^2} \le \Exp_{J \unifsim [A]^{M}}\Exp_{\Pr_J,\Algest}\left[\sum_{a=1}^A N_K(a_1 = a)\right] = \Exp_{J \unifsim [A]^{M}}\Exp_{\Pr_J,\Algest}[K].
		\end{align*}

	\subsubsection{Proof of Lemma~\ref{lem:estimation_reduce_exploration} \label{sssec:lem:estimation_reduce_exploration}}
		Let us now show that $(\epsilon/12,p)$-learning implies the existence of an algorithm $\Algest$ satisfying Eq.~\ref{eq:algest_guarantee}, provided the packing is sufficiently uncorrelated. Introduce the vectors
		\begin{align*}
		\nu_{a_1,a_2,j_1,j_2} := \frac{1}{3}v_{a_1,j_1} + \frac{1}{6}v_{a_2,j_2} + \frac{1}{2}\vecone,
		\end{align*}
		which can be checked to lie $[0,1]^{2n}$. We shall establish the following lemma, which says that for sufficciently uncorrelated packings, the vectors $\nu_{(\dots)}$ witness separations between $q_{a_1,j_1}$ and $q_{a_2,j_2}$ for different actions $a_1,a_2$:
		\begin{lemma}\label{lemma:inner_product_lem} Fix $a_1 \in [A]$ and $j_1 \in [M]$, and suppose the packing is $\gamma = 1/10$-uncorrelated: Then, for any $a_2 \ne a_1$ and $j_2 \in [M]$, the following holds
		\begin{align*}
		&\min_{a_2',j_2'}\langle q_{a_1,j_1} - q_{a_2,j_2}, \nu_{a_1,a_2',j_1,j_2'} \rangle >  \frac{\epsilon}{12}\\
		\forall j_1' \ne j_1,\,\, & \min_{a_2',j_2'}\langle q_{a_1,j_1} - q_{a_2,j_2}, \nu_{a_1,a_2',j_1',j_2'} \rangle <  - \frac{\epsilon}{12}
		\end{align*}
		\end{lemma}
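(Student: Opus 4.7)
The plan is a direct expansion using the definitions. Writing $q_{a_1,j_1} - q_{a_2,j_2} = \frac{\epsilon}{2n}(v_{a_1,j_1} - v_{a_2,j_2})$ and $\nu_{a_1,a_2',j_1',j_2'} = \tfrac{1}{3}v_{a_1,j_1'} + \tfrac{1}{6}v_{a_2',j_2'} + \tfrac{1}{2}\vecone$, the $\vecone$ component contributes nothing because every $v \in \calK$ satisfies $\vecone^\top v = 0$. Hence
\begin{align*}
\langle q_{a_1,j_1} - q_{a_2,j_2}, \nu_{a_1,a_2',j_1',j_2'}\rangle \;=\; \tfrac{\epsilon}{2n}\left\langle v_{a_1,j_1} - v_{a_2,j_2},\; \tfrac{1}{3}v_{a_1,j_1'} + \tfrac{1}{6}v_{a_2',j_2'}\right\rangle,
\end{align*}
a signed sum of four pairwise inner products of packing vectors. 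Each such inner product equals $\|v\|^2 = 2n$ when both labels coincide, and has magnitude strictly less than $2n\gamma$ otherwise; this is the only structural fact about the packing that I will use.

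For the first inequality I specialize $j_1' = j_1$ and claim the adversarial minimizer over $(a_2', j_2')$ is $(a_2, j_2)$. A direct substitution (the two cross terms involving $\langle v_{a_1,j_1}, v_{a_2,j_2}\rangle$ partially cancel) gives the clean expression $\langle v_{a_1,j_1} - v_{a_2,j_2}, \nu\rangle = \tfrac{n}{3} - \tfrac{1}{6}\langle v_{a_2,j_2}, v_{a_1,j_1}\rangle$, which exceeds $\tfrac{n(1-\gamma)}{3}$ by uncorrelatedness since $a_1 \ne a_2$. The remaining cases are strictly larger for $\gamma = 1/10$: $(a_2', j_2') = (a_1, j_1)$ gives at least $n(1 - \gamma)$, while any $(a_2', j_2')$ whose label is distinct from both $(a_1,j_1)$ and $(a_2,j_2)$ yields at least $\tfrac{2n(1-2\gamma)}{3}$. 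Multiplying the minimum by $\tfrac{\epsilon}{2n}$ gives a lower bound of $\tfrac{3\epsilon}{20} > \tfrac{\epsilon}{12}$.

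For the second inequality it suffices to exhibit a single witness achieving the claimed upper bound on the min, and I take again $(a_2', j_2') = (a_2, j_2)$, now with $j_1' \ne j_1$. The crucial observation is that every one of the four inner products except $\|v_{a_2,j_2}\|^2$ now involves labels that differ somewhere: $(a_1,j_1)$ vs.\ $(a_1, j_1')$ differ in the index since $j_1' \ne j_1$, while $(a_2, j_2)$ vs.\ $(a_1, j_1')$ and $(a_1, j_1)$ vs.\ $(a_2, j_2)$ differ in the action. Bounding these three terms in absolute value by $2n\gamma$ and using $\|v_{a_2,j_2}\|^2 = 2n$ yields $\langle v_{a_1,j_1} - v_{a_2,j_2}, \nu\rangle < \bigl(\tfrac{1}{3} + \tfrac{1}{3} + \tfrac{1}{6}\bigr)(2n\gamma) - \tfrac{n}{3} = \tfrac{(5\gamma-1)n}{3} = -\tfrac{n}{6}$ at $\gamma = 1/10$, so the inner product is strictly less than $-\tfrac{\epsilon}{12}$. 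There is no real obstacle here beyond bookkeeping: the main thing to watch is that in the first inequality one must verify the minimizer, not just exhibit a choice, and that the constants pin down $\gamma = 1/10$ as the exact threshold forced on the invoking lemma.
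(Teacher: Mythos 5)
Your proposal is correct and follows essentially the same route as the paper: expand $\langle q_{a_1,j_1}-q_{a_2,j_2},\nu\rangle$ into pairwise inner products of packing vectors, kill the $\vecone$ component, lower-bound all choices of $(a_2',j_2')$ for the first claim, and exhibit the witness $(a_2',j_2')=(a_2,j_2)$ for the second. Your explicit three-way case split in the first part (and the resulting $3\epsilon/20$ constant) is a slightly more careful piece of bookkeeping than the paper's single uniform bound, but the underlying argument and the role of $\gamma=1/10$ as the binding threshold from the second inequality are identical.
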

		\begin{proof}[Proof of Lemma~\ref{lemma:inner_product_lem}]
		\begin{align*}
		\langle q_{a_1,j_1} - q_{a_2,j_2}, \nu_{a_1',a_2',j_1',j_2'} \rangle &= \frac{\epsilon}{2n}\langle v_{a_1,j_1} - v_{a_2,j_2}, \nu_{a_1',a_2',j_1',j_2'}\rangle\\
		&= \frac{\epsilon}{12n} \langle v_{a_1,j_1} - v_{a_2,j_2}, 2v_{a_1',j_1'} - v_{a_2',j_2'} \rangle,
		\end{align*}
		where we use the fact that $v_{a,j}^\top \vecone = 1$ for all $a,j$. If $a'_1 = a_1$ and $j_1' = j_1$, and the packing is $\gamma \le 1/6$-uncorrelated
		\begin{align*}
		\langle q_{a_1,j_1} - q_{a_2,j_2}, \nu_{a_1,a_2',j_1,j_2'} \rangle  &=  \frac{\epsilon}{12n} \langle v_{a_1,j_1} - v_{a_2,j_2}, 2v_{a_1,j_1} - v_{a_2',j_2'} \rangle\\
		 &= \frac{\epsilon}{12n}\left( 2\langle v_{a_1,j_1},v_{a_1,j_1}\rangle  - 2\langle v_{a_2,j_2},v_{a_1,j_1}\rangle + \langle v_{a_1,j_1}, v_{a_2',j_2'} \rangle -  \langle v_{a_2,j_2}, v_{a_2',j_2'} \rangle\right)\\
		&> \frac{\epsilon}{12n}\left( 4n - 4\gamma n - 2n - 2n \gamma \right)\\
		&\ge \frac{\epsilon}{12n}\left( 2n - 6n\gamma \right) = \frac{\epsilon}{12}.
		\end{align*}
		On the other hand, if $j_1 \ne j_1'$, but $(a_2,j_2) = (a_2',j_2')$ then a similar computation reveals that for $\gamma \le 1/10$,
		\begin{align*}
		\langle q_{a_1,j_1} - q_{a_2,j_2}, \nu_{a_1,a_2,j_1',j_2} \rangle < \frac{\epsilon}{12n}\left( 10\gamma n   - 2n\rangle\right) < \frac{-\epsilon}{12}.
		\end{align*}
		\end{proof}

		We can now conclude the proof of our reduction:
		\begin{proof}[Proof of Lemma~\ref{lem:estimation_reduce_exploration}] Suppose that $\Alg$ is run on $\Pr_J$ for $J \in [M]^A$. Further, recall the rewards $r_{\nu}$ which assign reward of $r_{\nu}(s,a) = \I(s \in [2n]) \nu(s)$. By $(\epsilon/24,p)$-correctness of $\Alg$, then with probability $1-p$, $\Alg$ computes policies $\pihat_{\nu}$ which satisfies the following bound simultaneously for all $\nu \in \{\nu_{a_1,a_2,j_1,j_2}\}$:
		\begin{align}
		\max_{\pi} V^{\pi}(\Pr_J,r_{\nu})  - V^{\pihat_{\nu}}(\Pr_J,r_{\nu})
		 \le \epsilon/24. \label{eq:good_event_lb}
		\end{align}
		For a possibly randomized policy, we use the shorthand $\pi[a]$ to denote the probability of selecting $a$ at the initial state $0$; that is $\Pr^{\pi}[a_1 = a]$. Now, Consider the following procedure: for each $a \in [A]$, estimate $J_a$ by returning the first $j \in [M]$ for which
		\begin{align}
		\forall a_2',j_2', \quad \pihat_{\nu_{a,a_2',j,j_2'}}[a] > 1/2.\label{eq:id_condition}
		\end{align}
		We conclude our proof by showing that, on the good event Eq.~\eqref{eq:good_event_lb}, the condition in Eq.~\eqref{eq:id_condition} holds if and only if $j = J_a$. To this end, define the short hand
		\begin{align*}
		q_{\pi} := \sum_{a'} \pi[a'] q_{a',J_{a'}}
		\end{align*}
		Then, we have that
		\begin{align*}
		\max_{\pi} V^{\pi}(\Pr_J,r_{\nu})  - V^{\pihat_{\nu}}(\Pr_J,r_{\nu}) =  \max_{\pi}\left\langle q_{\pi} - q_{\pihat_{\nu}} , \,\nu\right\rangle,
		\end{align*}
		so that on the good event of Eq.~\ref{eq:good_event_lb}, we have
		\begin{align*}
		 \max_{\pi}\left\langle q_{\pi} - q_{\pihat_{\nu}} , \,\nu\right\rangle \le \frac{\epsilon}{24}.
		\end{align*}

		\paragraph{True Positive for $j = J_a$:} First let's show that Equation~\ref{eq:id_condition} holds for $j = J_a$. Indeed, if it does not, then there exists some $a_2',j_2'$ for which $\Pr[\pihat_{\nu_{a,j,a_2',j_2'}}[a] ] \le 1/2$, and (setting $\nu = \nu_{a,j,a_2',j_2'}$ for shorthand in $\pihat^{\nu}$)
		\begin{align*}
		\epsilon/24 &\ge  \max_{\pi}\left\langle q_{\pi} - q_{\pihat_{\nu}} , \,\nu\right\rangle,\\
		&\ge \left\langle q_{a,J_{a}} - q_{\pihat^{\nu}} , \,\nu_{a,j,a_2',j_2'}\right\rangle \tag*{(choose $\pi[a] =1$)}\\
		&= \sum_{a' \ne a}\pihat_{\nu}[a'] \left\langle q_{a,J_{a}} - q_{a',J_{a'}} , \,\nu_{a,j,a_2',j_2'}\right\rangle\\
		&\ge\underbrace{(1-\pihat_{\nu}[a]) }_{\ge 1/2} \cdot \underbrace{\min_{a' \ne a}\left\langle q_{a,J_{a}} - q_{a',J_{a'}} , \,\nu_{a,j,a_2',j_2'}\right\rangle}_{> \epsilon/12 \text{ by Lemma~\ref{lemma:inner_product_lem} }} > \frac{\epsilon}{24},
		\end{align*}
		yielding a contradiction.

		\paragraph{True Negative for $j \ne J_a$:} On the other hand, for $j \ne J_{a}$ suppose that for all all $a_2' \ne a$ and all $j_2' \in [M]$,  $\Pr[\pihat_1^{\nu_{a,j,a_2',j_2'}}(0) = a] > 1/2$. Then, considering $a_2' = a_2$ and $j_2' = J_{a_2}$, we have  (setting $\nu = \nu_{a,j,a_2,J_{a_2}}$ for shorthand in $\pihat^{\nu}$)
		\begin{align*}
		\epsilon/24 &\ge  \max_{a'}\left\langle q_{a',J_{a'}} - q_{\pihat^{\nu}} , \,\nu_{a,j,a_2,J_2}\right\rangle\\
		&\ge  \left\langle q_{a_2,J_{a_2}} - q_{\pihat^{\nu}} , \,\nu_{a,j,a_2,J_2}\right\rangle\\
		&\ge  \underbrace{\pihat_{\nu}[a_2]}_{\ge \pihat_{\nu}[a] > 1/2} \cdot\underbrace{\min_{a' \ne a_2}\left\langle q_{a_2,J_{a_2}} - q_{a',J_{a'}} , \,\nu_{a,j,a_2',j_2'}\right\rangle}_{> \epsilon/12 \text{ by Lemma~\ref{lemma:inner_product_lem} }} > \frac{\epsilon}{24},
		\end{align*}
		again drawing a contradiction.
		\end{proof}

\end{document}